\newtheorem{assumption}{Assumption}
\newtheorem{theorem}{Theorem}
\newtheorem{lemma}{Lemma}
\DeclareMathOperator*{\diag}{diag}
\DeclareMathOperator*{\x}{\mathbf{x}}
\DeclareMathOperator*{\y}{\mathbf{y}}
\title{Improving Model Representation and Reducing KV Cache via Skip Connections with First Value Heads}
\author{
Zhoutong Wu$^{1}$,
Yuan Zhang$^{1}$,
Yiming Dong$^{2}$,
Chenheng Zhang$^{2}$, 
Cong Fang$^{2,3}$, \\
\And
Kun Yuan$^{1, 3, 4 \textsuperscript{†}}$,
Zhouchen Lin$^{2,5,6 \textsuperscript{†}}$ \\
\texttt{\{ztwu, zy1002, chenhengz\}@stu.pku.edu.cn}, \\
\texttt{\{fangcong, kunyuan, zlin\}@pku.edu.cn}, \ 
\texttt{yimingdong\_ml@outlook.com}
}
\begin{document}
\maketitle
\begin{center}
$^{1}$ Academy for Advanced Interdisciplinary Studies, Peking University \\
$^{2}$ State Key Lab of General AI, School of Intelligence Science and Technology, Peking University \\
$^{3}$ AI for Science Institute, Beijing, China \\
$^{4}$ National Engineering Laboratory for Big Data Analytics and Applications\\
$^{5}$ Institute for Artificial Intelligence, Peking University \\
$^{6}$ Pazhou Laboratory (Huangpu), Guangzhou, Guangdong, China 
\end{center}
\renewcommand{\thefootnote}{\dag}
\footnotetext{Corresponding author.}

\begin{abstract}
  Transformer models have driven breakthroughs across various language tasks by their strong capability to learn rich contextual representations. 
  Scaling them to improve representation, however, often demands substantial memory and compute costs, such as the Key-Value (KV) cache used during auto-regressive decoding. Skip connections offer a promising way to improve representation without bloating resource usage, yet most prior works either improve expressivity while leaving KV costs unchanged, or reduce memory at the cost of weaker representation. In this work, we propose SkipV1Former, a  Transformer variant that uses skip connections from the first layer's Value heads to strengthen model representation and reduce KV cache.  Specifically, from the second block onward, each layer reuses half of its Value heads from the very first layer, while computing the other half as usual-cutting Value projections and V cache by nearly 50 \%. Theoretically, we show that routing uncompressed first-layer Values into deeper layers restores information lost to compression and accelerates the model’s implicit mesa-optimization-a key pattern of Transformer in auto-regressive tasks. Empirically, across different model scales, SkipV1Former delivers consistent reductions of approximately 25 \% in KV cache while improving perplexity relative to standard Multi-Head Attention (MHA) Transformers and some advanced variants. Moreover, we propose a recipe for uptraining existing MHA Transformer checkpoints to SkipV1Former with only 10-15\% additional compute. Finally, SkipV1Former can seamlessly combine advanced methods like Group-Query Attention and Multi-Latent Attention to achieve further KV cache savings and performance improvement. When combined with YOCO, it cuts KV cache size by nearly 50 \% while still improving performance. The code is available at:  \url{https://github.com/Zhoutong-Wu/SkipV1Former}.
\end{abstract}

\section{Introduction}
Large Language Models (LLMs) built on Transformer architectures \cite{vaswani2017attention} have achieved impressive performance in a wide range of language tasks such as dialogue generation and complex reasoning. At their core, multi-head attention (MHA) \cite{vaswani2017attention} endows these models with a powerful ability to capture rich contextual representations. While modern models are scaling to billions or trillions of parameters \cite{radford2019language, brown2020language, touvron2023llama} in pursuit of ever-stronger expressivity, their memory and computational demands grow excessively large, limiting practical deployment. In particular, the need to cache large Key and Value (KV) tensors in auto-regressive decoding presents a notable challenge for GPU memory \cite{pope2023efficiently}.

Moreover, recent empirical studies suggest that beyond a certain scale, simply increasing model size yields diminishing returns in downstream performance \cite{petty2023impact}. As a result, a growing body of works is focusing on methods that boost model performance without a corresponding surge in resource consumption. One promising technique is (cross‐layer) skip connections, which route features between non‐adjacent layers to promote reuse.  Existing skip‐connection techniques in the Transformer family mainly fall into two categories:

\begin{enumerate}
    \item Feature‑augmentation approaches: these methods concatenate or linearly fuse representations from earlier layers into later layers \cite{pagliardini2024denseformer, DBLP:conf/acl/HeRKA21}, analogous to DenseNet \cite{huang2017densely} or U‑Net \cite{DBLP:conf/miccai/RonnebergerFB15} designs in convolutional networks. By reusing feature maps, these skip connections bolster model representation and improve performance. However, they fail to reduce KV cache footprint compared to a standard Transformer of the same width and depth.
    \item Layer‑replacement approaches: these techniques selectively replace portions of a layer’s Key or Value (or other components) with those drawn from other layers \cite{DBLP:conf/emnlp/AinslieLJZLS23, DBLP:conf/nips/BrandonMNPR24}. They can reduce the KV parameter counts and thus the KV caching consumption. However, naively substituting layer outputs often leads to a drop in model quality, as it often weakens the model representation.
\end{enumerate}
This raises a natural question: Can we design a skip‐connection scheme that simultaneously enriches representations and reduces KV‐cache/storage requirements? To achieve both goals simultaneously, one must decide which parameters are relatively redundant to be replaced, while identifying which intermediate features are critical to reuse for expressivity. In fact, by the No Free Lunch theorem \cite{wolpert1997no}, there is no universal recipe for shrinking parameter counts and boosting representation across all tasks. Such dual gains must exploit both the architecture and the task characteristics.

To address the problem, we propose SkipV1Former (skip connection with Value-1), a simple yet effective MHA Transformer variant that strategically integrates both objectives. In an $L$-layer SkipV1Former, it replaces half of each layer’s Value heads (layer 2 to $L$) with the corresponding heads from layer 1. This design cuts Value cache size and the trainable Value projection parameters by nearly 50\%, while preserving the total head count and architectural depth.

Based on the characteristics of auto-regressive tasks, we provide a theoretical analysis of the expressivity of SkipV1Former by viewing Transformers as mesa-optimizer \cite{von2023uncovering}, i.e., self-attention block can be interpreted as performing an optimization step on a latent objective function. We show that feeding uncompressed first‐layer Value signals to deeper blocks via skip connections restores lost information and accelerates the mesa-optimization process.

Experiments on GPT-2 models demonstrate that SkipV1Former consistently lowers perplexity compared to standard MHA Transformers and even matches or outperforms advanced feature-augmentation methods—with $\sim$25 \% less KV cache. Under the same cache‑saving regimes, it significantly outperforms layer-replacement approaches like YOCO \cite{sun2024you} and CLA \cite{brandon2024reducing}.
Scaling to the 3B LLaMA model, SkipV1Former maintains improved perplexity and KV-cache efficiency versus the vanilla MHA Transformer.

Moreover, we propose an uptraining method to train a SkipV1Former from the checkpoint of an MHA Transformer, using only around 10 - 15\% of original pre-training compute.
Finally, SkipV1Former can be seamlessly combined with other advanced KV-saving approaches like Group-Query Attention and Multi-Latent Attention to attain even greater savings in KV cache and performance boost. By combining the Key reusing mechanisms in YOCO, we further propose a variant of SkipV1Former that saves $\sim$ 50\% KV cache while improving model performance.

\section{Related Works}
\paragraph{Skip Connections in Transformers.}
Skip connections, which route the output of one layer directly into deeper layers or sublayers, are utilized in the original Transformer for stabilizing training \cite{vaswani2017attention}. Subsequent Transformer variants have leveraged skip connections to enrich representations by reusing and propagating low-level features into higher layers \cite{pagliardini2024denseformer, liu2021rethinking, chen2024skip, DBLP:conf/acl/HeRKA21, zhu2024hyper,DBLP:conf/acl/DaiS0HMSW23}, leading to improved performance. Some recent works further generalize skip connections to "hyper" connections that also fuse connections from width \cite{zhu2024hyper}.  
Another main thread leverages skip connections to reduce inference costs by parameter sharing or substituting \cite{brandon2024reducing, sun2024you,DBLP:conf/acl/WuT24}. In particular, sharing Keys and Values across layers yields a straightforward and effective KV-cache reduction \cite{brandon2024reducing, sun2024you}. Other approaches approximate attention scores for similar efficiency gains \cite{DBLP:conf/iclr/VenkataramananG24}. In addition, skip connections have also been applied to combine Pre‑LN and Post‑LN advantages \cite{xie2023residual} or mitigate over‑smoothing problem \cite{nguyen2023mitigating}. Among the above, the most related to our approach is ResFormer \cite{zhou2024value}, which linearly adds the first layer’s Value to every subsequent layer’s Value, acquiring lower perplexity but without theoretical grounding or KV cache savings. Our SkipV1Former simultaneously enhances expressivity and compresses KV cache, backed by both theoretical analysis and empirical validation.

\paragraph{KV Cache Efficient Methods \cite{zhu2024survey}.}
Key-Value (KV) cache, which stores the intermediate attention Keys and Values during auto-regressive generation to avoid redundant computation, often dominates GPU memory usage. To alleviate this, a range of compression techniques have been proposed, including KV quantization \cite{hooper2024kvquant, sheng2023flexgen}, efficient attention \cite{katharopoulos2020transformers, child2019generating,wang2021spatten}, sparsity methods \cite{zhang2023h2o,DBLP:conf/iclr/XiaoTCHL24,liu2023scissorhands,kang2024gear}, and KV sharing \cite{shazeer2019fast,DBLP:conf/emnlp/AinslieLJZLS23, wu2024systematic, yang2024kvsharer, zuhri2024mlkv}. Our approach falls into the KV sharing category, reducing the number of cache entries via feature reuse and substitution. Representative KV sharing methods include Multi-Query Attention (MQA) \cite{shazeer2019fast} and Group Query Attention (GQA) \cite{DBLP:conf/emnlp/AinslieLJZLS23}, which share the Keys and Values among all or a subset of heads in the same layer. Other works explore cross‑layer reuse—recycling KV states from adjacent layers—such as YOCO \cite{sun2024you}, Cross‑Layer Attention (CLA) \cite{brandon2024reducing}, and Layer‑Condensed KV (LCKV) \cite{DBLP:conf/acl/WuT24}. Some works also attempt to share latent cache across different layers \cite{yang2024lossless}. SkipV1Former is the first to reuse half of every layer’s Value heads directly from Value-1, which boosts model performance using less parameters.

\paragraph{Transformer as Mesa-Optimizer.}
In studies on in-context learning of Transformers, recent works reveal that learned Transformers exhibit a "copying" behavior, whereby they aggregate or replicate context tokens into one representation \cite{von2023uncovering}, then implicitly perform optimization steps—akin to gradient descent \cite{von2023transformers, ahn2023transformers} or Newton’s method \cite{fu2024transformers}—on a context-dependent loss. Subsequent work has validated this meta-optimization view across a range of sequence-modeling tasks \cite{von2023uncovering, DBLP:conf/acl/WuV24}. Building on the mesa-optimizer viewpoint, we analyze SkipV1Former and show that its cross-layer Value skip effectively accelerates implicit optimization with uncompressed first-layer signals.

\section{SkipV1Former}
\subsection{Background and Notations}
We begin by reviewing the architecture of multi-head attention (MHA) Transformer. Let $X\in\mathbb{R}^{d\times n}$ be the input to a Transformer block, where $d$ is the embedding dimension and $n$ is the sequence length.  Neglecting layer normalization for clarity, a standard $H$-head self-attention sublayer followed by a feed-forward network (FFN) is
\begin{align*}
    &Q^h = W_Q^hX,\quad K^h = W_K^hX,\quad V^h = W_V^hX, \quad h=1,\cdots, H, \\
&\mathrm{Attn}(X) = X + \sum_{h=1}^H W_O^hV^h  \mathrm{softmax}\bigl((K^h)^\top Q^h \bigr), \\
    &\text{FFN}(X) = X + W_2 \ \text{ReLU}(W_1X),
\end{align*}
where $W^h_O \in \mathbb{R}^{d\times d_H}, W^h_Q, W^h_K, W^h_V \in \mathbb{R}^{d_H \times d}, W_1\in \mathbb{R}^{r\times d}$, $W_2\in \mathbb{R}^{d \times r}$, $d_H$ is the dimension of each head and $r$ is the dimension of the hidden layer. Stacking $L$ such blocks yields the full Transformer architecture. We focus on decoder-only Transformers in this work.

\subsection{Our Methods} \label{subsec our methods}
\paragraph{Architecture.} We now introduce our SkipV1Former. In each layer $i=2,\cdots,L$, SkipV1Former interleaves half of that layer’s Value heads with the corresponding heads from layer 1, while computing the remaining parts in the standard MHA manner. Formally, letting $H'=H/2$, the attention in block $i$ becomes
\begin{align*}
    \text{Attn}(X) &= X + \sum_{h=1}^{H'} W_O^hV^h  \mathrm{softmax}\bigl((K^h)^\top Q^h \bigr) + \sum_{h=H'+1}^{H}W_O^h\mathbf{V}^h_1  \mathrm{softmax}\bigl((K^h)^\top Q^h \bigr), 
\end{align*}
where $\mathbf{V}^h_1$ refers to the value of the $h$-th head of the first layer. Figure \ref{fig:architecture} gives an illustration of the architecture of SkipV1Former. Specifically, we use a fixed deterministic selection strategy: for layers 2-$L$ we always keep heads 0$\cdots$ $H'$ and concatenate them with heads $H'+1$ … $H$ from layer 1. This continuity ensures a stable and aligned head ordering across layers, which is more hardware-friendly compared to other discontinuous methods \cite{DBLP:conf/acl/YuanGD0ZZXWW0WR25} such as random selection. We also compare other head injection strategies in Appendix \ref{app subsec alt head interleave}.

Since half of the heads in layers 2 – $L$ are drawn from layer 1, SkipV1Former cuts the number of $W_V$ parameters and Value cache by about 50\% (i.e., 25\% KV cache) without changing the total head count $H$ or layer depth $L$. We further propose a variant of SkipV1Former that utilizes the Keys sharing mechanism of YOCO (You Only Cache Once) \cite{sun2024you} in Appendix \ref{app subsec skip kv}, which saves another 25\% KV cache on top of SkipV1Former (50\% KV cache in total). We focus this paper on SkipV1Former and leave the YOCO-based variant to Appendix~\ref{app subsec skip kv}.

\begin{figure}[tb]
  \centering
  \includegraphics[width=0.75\textwidth]{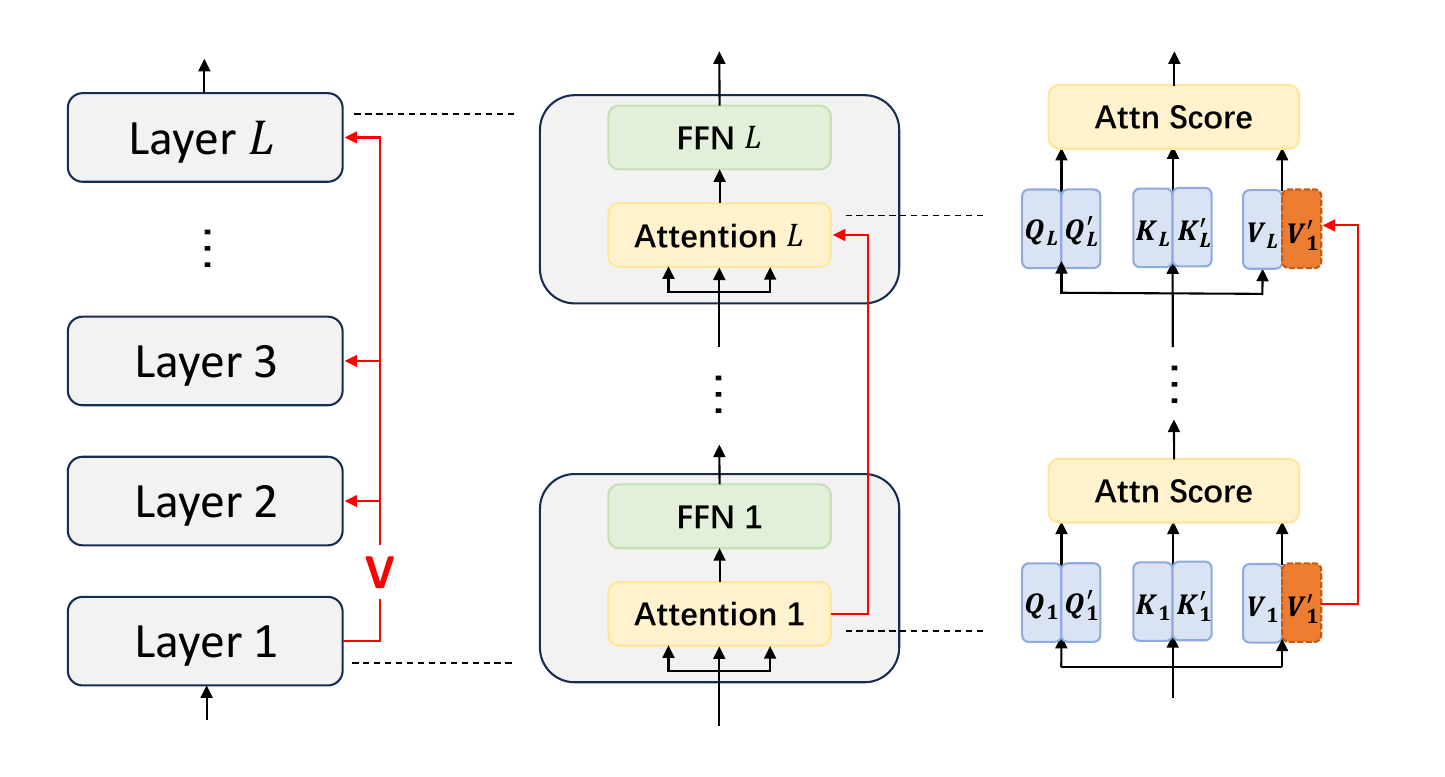}
  \caption{Overview of the SkipV1Former architecture. \textbf{Left:} A full \(L\)-layer decoder, where each layer from 2 to \(L\) interleaves half of its Value heads with the corresponding heads from layer 1. \textbf{Center:} A zoomed-in view of the first and \(L\)-th layers, showing both the attention and the FFN sublayers. \textbf{Right:} Detailed illustration of the cross-layer Value skip connection, with orange shading indicating the first-layer Value heads that are injected into subsequent layers’ head sets.}
  \label{fig:architecture}
\end{figure}

\paragraph{Relations to Previous Methods.} Conceptually, SkipV1Former can be seen as an extension of ResFormer \cite{zhou2024value}, which blends every layer’s Value via a scalar interpolation, i.e., 
$$\text{Attn}(X) = X + \sum_{h=1}^H W^h_O\left(\lambda V^h+(1-\lambda)V^h_1\right)\mathrm{softmax}\bigl((K^h)^\top Q^h \bigr)$$
for some $\lambda \in \mathbb{R}.$
In contrast, SkipV1Former inserts $H'$ first-layer heads directly into each deeper layers, thereby saving nearly 25\% of KV cache whereas ResFormer cannot. In Section \ref{sec first layer important}, we provide a theoretical analysis which not only motivates the specific design of SkipV1Former, but also offers insights into why ResFormer’s interpolation with first Value benefits model performance.

\section{Why Skip Connection with First Value Head Helps} \label{sec first layer important}

While it seems natural to view SkipV1Former as more expressive than a variant that only reduces half the heads without injecting first-layer Values, it is shown that skipping from the first layer’s Value yields notably better performance than skipping from later layers (see Section~\ref{subsec ablations}). This suggests that the first layer is uniquely informative for information propagation.

A potential clue lies in the "copying mechanism" studied in in-context learning (ICL) \cite{DBLP:conf/emnlp/Dong0DZMLXX0C0S24}: early Transformer layers often act to bind multiple adjacent tokens into compact representations \cite{von2023uncovering, olsson2022context}. This effect is especially prominent in shallow layers and underlies many ICL models’ assumptions. For example, it is commonly assumed that input tokens take the form $\mathbf{e}_j = (\mathbf{x}_j, \mathbf{y}_j)\in\mathbb{R}^{d_x+d_y}$, where $\mathbf{x}_j\in \mathbb{R}^{d_x}$ and $\mathbf{y}_j\in\mathbb{R}^{d_y}$ are the sample and label of a training pair. Under this formulation,  Oswald et al.~\cite{von2023transformers} show that, for a linear regression problem, i.e.,  \(\y=W\x + \varepsilon\), a single linear attention layer approximates a one-step gradient descent:
\begin{equation} \label{eq one step GD}
    \mathbf{e}_j \leftarrow (\mathbf{x}_j,\;\mathbf{y}_j - \eta\,\nabla L(W)\,\mathbf{x}_j)\,,
  \quad
  L(W) = \tfrac1{2N}\sum_{i=1}^N\|W\mathbf{x}_i - \mathbf{y}_i\|^2.
\end{equation}
That is, the layer updates the token as if performing one step of gradient descent on $W$ to minimize the regression loss. Subsequent works \cite{DBLP:conf/iclr/MahankaliH024, zhang2024trained, ahn2023transformers} extend this mesa-optimization view to deeper and nonlinear settings~\cite{DBLP:conf/iclr/MahankaliH024, zhang2024trained, ahn2023transformers}.

However, a key assumption in this analysis is that each token \emph{fully} encodes its $(\x, \y)$ pair—a condition difficult to satisfy in practice due to limited embedding dimensionality. Compressing multiple tokens into one inevitably incurs information loss, which can lead to degraded quality of optimization updates in deeper layers.

SkipV1Former mitigates this by explicitly routing the first layer’s Value into every deeper layer. Since the first layer processes the raw input tokens directly, its Value retain higher-fidelity representations of the sample and label. The skip connection with first-layer Value allows the model to restore much of the information lost through copying whereas existing residual connections cannot. This preserves more of the causal structure between samples and labels, which in turn enhances the model’s reasoning capabilities. The overall intuition is illustrated in Figure \ref{fig:theory}. Moreover, prior works \cite{DBLP:conf/iclr/MahankaliH024} show that mesa-optimization gradients in Transformers primarily flow through the Value pathway. This supports our design choice to interleave deeper layers’ Value with the first-layer Value.

\begin{figure}[tb]
  \centering
  \includegraphics[width=0.8\textwidth]{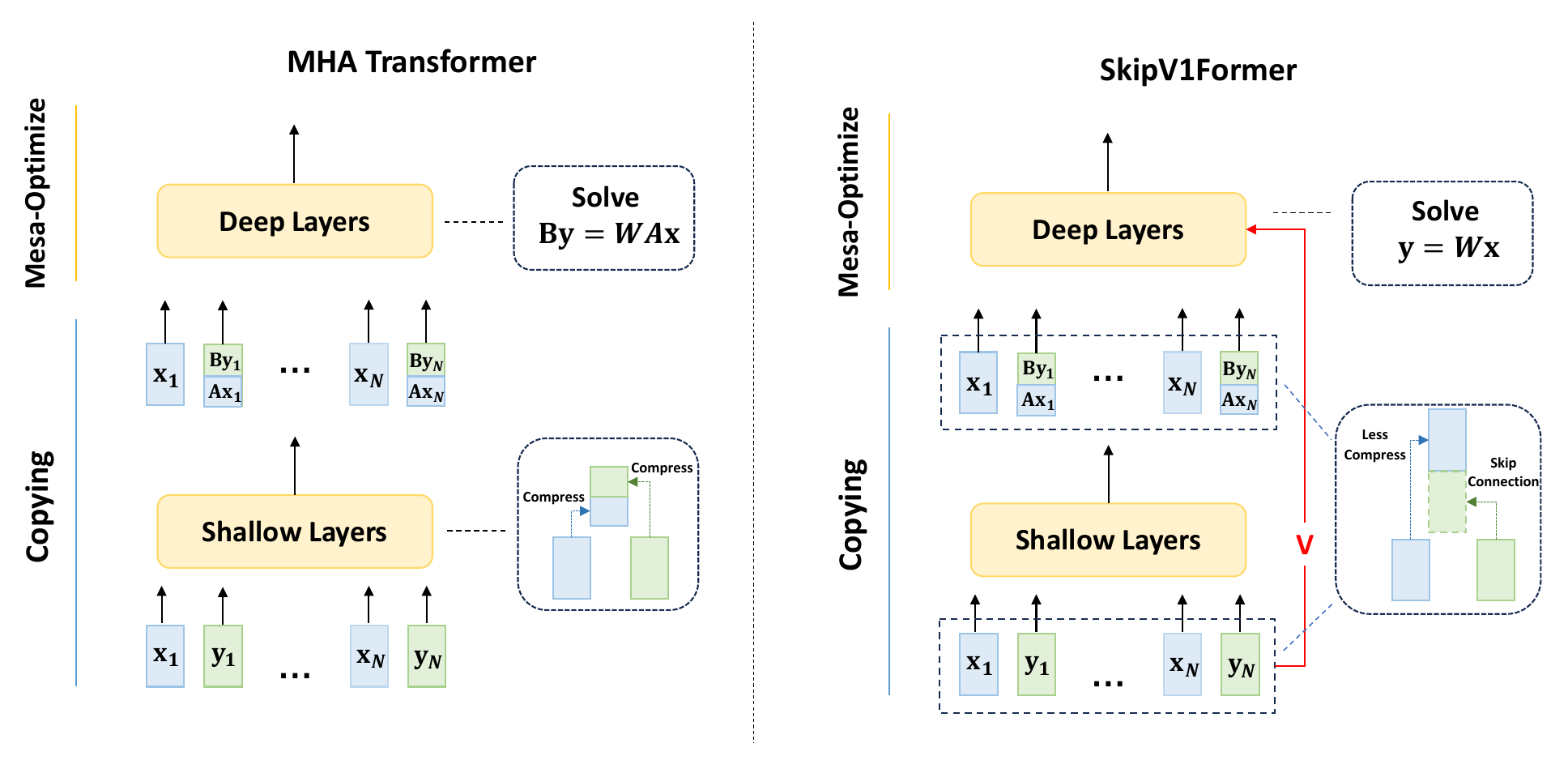}
  \caption{Standard MHA Transformer (left) compresses token pairs across layers, incurring information loss in mesa-optimization, whereas SkipV1Former (right) reintroduces raw token values into deep layers, preserving information fidelity.}
  \label{fig:theory}
\end{figure}

We further formalize this intuition on a simplified model to derive concrete performance improvement. Consider the linear regression problem in Eq.~\eqref{eq one step GD}, where $\mathbf{x}_i, \mathbf{y}_i \in \mathbb{R}^d$ and $W\sim \mathcal{N}(0,I_{d}^2)$. Assume that the embedding dimension is $d$, so there will be information loss in the compression for MHA Transformer. For the model, consider a two-layer Attention-only Transformer with two heads per layer and linear attention. Denote by $L_1$ and $L_2$ the expected squared-error loss on predicting \(\mathbf{y}_{n+1}\) of MHA Transformer and SkipV1Former, respectively. We show that under this setup:

\begin{theorem}[Informal] \label{thm main}
    Assume that the first layer performs the copying mechanism, then there exists an independent constant $c>0$, such that 
    $$\min_{W^h_Q, W^h_K, W^h_V,H} L_2(\{W^h_Q, W^h_K, W^h_V\}, H) \leq \min_{W^h_Q, W^h_K, W^h_V,H}L_1(\{W^h_Q, W^h_K, W^h_V\}, H) - c.$$
\end{theorem}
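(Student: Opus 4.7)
The plan is to derive a matching upper bound on $\min L_2$ by constructing favorable second-layer weights for SkipV1Former, and a lower bound on $\min L_1$ via an information-processing argument against the compressed representation, then show that the gap is at least a positive constant independent of the weight minimization. Throughout, I would assume the copying mechanism fixes the first-layer weights so that each post-layer-1 token at the "pair" position has the form $\tilde e_j = A\, \x_j + B\, \y_j$ for some $A, B \in \mathbb{R}^{d\times d}$ determined by the first-layer attention and its residual stream. Since the embedding dimension is only $d$ while $(\x_j, \y_j)$ lives in $\mathbb{R}^{2d}$, the map $(\x_j,\y_j)\mapsto\tilde e_j$ has rank at most $d$, so there is a nontrivial null space $\mathcal{N}$ of "lost directions." I would also record that for the label positions, the first-layer Value is $\mathbf{V}_1^h = W_V^{h,1}\,\y_j$, i.e., uncompressed.

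For the upper bound on SkipV1Former, I would specialize the second-layer weights along the lines of the Oswald et al.\ construction of one-step mesa-GD. The retained head uses $W_Q, W_K$ that select the compressed $\x$-directions from $\tilde e_i$ and $\tilde e_{n+1}$, producing linear-attention scores proportional to $\langle \x_i, \x_{n+1}\rangle$. The skipped head uses $\mathbf{V}_1^2 = W_V^{1,2}\,\y_j$ with $W_V^{1,2}$ set to the identity (or the appropriate selector) so that the Value carries raw $\y_j$. Combining the two heads and the output projections then yields the aggregate $\sum_i \y_i\,\langle \x_i, \x_{n+1}\rangle = \bigl(\sum_i \y_i \x_i^{\top}\bigr)\x_{n+1}$, which is precisely the one-step mesa-GD prediction of $\y_{n+1}$ from Eq.~\eqref{eq one step GD}. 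Taking expectations over $W\sim\mathcal{N}(0,I_d^2)$ and the i.i.d.\ $\x_i,\varepsilon_i$ gives a closed-form upper bound $L_2^{\mathrm{up}}$ equal to the Bayes risk of the one-step GD predictor, with \emph{no} compression penalty.

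For the lower bound on MHA, I would observe that, conditional on the fixed first-layer weights, the entire output of the second layer is a deterministic function only of $\{\tilde e_i\}_{i=1}^{n+1}$, a rank-$d$ linear compression of $\{(\x_i,\y_i)\}$. By a standard conditional-expectation (data-processing) argument, the best predictor of $\y_{n+1}$ realizable by \emph{any} second-layer weights is dominated by $\mathbb{E}[\y_{n+1}\mid \tilde e_1,\dots,\tilde e_{n+1}]$, whose expected squared error strictly exceeds the Bayes risk by the expected norm of the component of the relevant regression statistic projected onto $\mathcal{N}$. Minimizing this excess over all admissible compressions $(A,B)$ compatible with the fixed first layer gives a lower bound $L_1^{\mathrm{low}} \geq L_2^{\mathrm{up}} + c$ with $c>0$ obtained from a small-ball / nondegeneracy estimate: since $W\sim\mathcal{N}(0,I_d^2)$ and $\x_j$ is isotropic with bounded density, the expected squared norm of any nontrivial rank-$d$ projection of the joint $(\x_j,\y_j)$-statistic is uniformly bounded below.

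The main obstacle I anticipate is the MHA lower bound. The delicate point is that we must minimize over \emph{all} second-layer weight choices, so the argument cannot rely on a particular mesa-optimization ansatz; instead it must show that the compression $\tilde e_j = A\x_j + B\y_j$ is a genuine information bottleneck irrespective of how the second layer reads it out. Producing a closed-form, weight-independent constant $c$ (as opposed to one that might collapse when $(A,B)$ is tuned in hindsight) requires a careful optimization of the Bayes-optimal predictor of $\y_{n+1}$ as a function of $(A,B)$ together with a nondegeneracy estimate guaranteeing that $\mathcal{N}$ carries a nontrivial share of the signal. A secondary, more cosmetic obstacle is promoting the informal statement to a precise one in the appendix: the constant $c$ will depend on the concrete formalization of the copying mechanism, so the formal theorem will state $c = c(d)$ and make the compression model explicit.
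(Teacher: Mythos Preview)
Your upper bound on $\min L_2$ is in the right spirit and close to what the paper does: it also exhibits explicit second-layer weights (together with a choice of the copying parameters $A,B$) so that SkipV1Former's ``useful'' output equals the optimal one-step linear predictor $\Lambda YX^\top$, attaining the minimum exactly.

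The lower bound, however, is where your proposal diverges from the paper and where it has a real gap. Your argument is information-theoretic: since the MHA second layer sees only the compressed $\tilde e_j$'s, its loss is bounded below by the Bayes risk given the compressed data, and you want this Bayes risk to exceed your $L_2^{\mathrm{up}}$ by a constant. But these quantities are not comparable in the direction you need. Your $L_2^{\mathrm{up}}$ is the risk of a \emph{specific} (one-step GD) predictor with full data, which is \emph{larger} than the full-data Bayes risk; the compressed-data Bayes risk is the full-data Bayes risk plus a ``lost-signal'' term. To conclude $L_1^{\mathrm{low}}\ge L_2^{\mathrm{up}}+c$ you would need the lost-signal term to exceed the suboptimality gap of one-step GD relative to Bayes, uniformly over all admissible compressions $(A,B)$. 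Nothing in the setup guarantees this, and for mild compressions it can fail. A second, smaller mismatch: the paper's copying assumption is a \emph{block} form $(A\x_j;\,B\y_j)$ with $A\in\mathbb{R}^{a\times d}$, $B\in\mathbb{R}^{(d-a)\times d}$, not the additive $A\x_j+B\y_j$ you posit, and the odd positions of $\mathrm{TF}_1(E)$ still carry raw $\x_i$; both facts matter for any information bound.

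The paper avoids the Bayes-versus-one-step-GD comparison entirely by staying inside the linear-attention class on both sides. After decomposing each loss as $C+C_k+\mathbb{E}_D\|H_k-\hat W_D\|_F^2$ (via a ridge-regression target and an odd/even-in-$Y$ parity split), a further reduction shows that the relevant approximation target is $\Lambda YX^\top$ for a fixed diagonal $\Lambda$. The key algebraic observation is that for MHA the ``useful'' component $N_3(X,Y)$ has rank at most $a$, because the query at the last position is $(A\x_{n+1};0)$, so $M^h=(W_K^h)^\top W_Q^h\bigl(\begin{smallmatrix}A\\0\end{smallmatrix}\bigr)$ has rank $\le a$. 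Then Eckart--Young gives the gap $c=\mathbb{E}_D\|\Lambda YX^\top-R_a(\Lambda YX^\top)\|_F^2$, and a singular-value/Weyl argument (using $\sigma_d(W_0)>2\sqrt{d}\,\sigma$) shows $c>0$ uniformly in $A,B$. This rank constraint is intrinsic to the linear attention readout and is precisely the structural fact your data-processing argument does not exploit; it is what makes both bounds land on the \emph{same} target $\Lambda YX^\top$ and thus makes the gap quantitative.
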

A formal statement and proof of the theorem are provided in Appendix \ref{app proofs}.
Theorem \ref{thm main} shows that SkipV1Former can drive the prediction error at least \(c\) lower than MHA Transformer on this task.
In the proof, we show that SkipV1Former can realize an "uncompressed" GD—just as in the ideal one-step update in Eq.~\eqref{eq one step GD}. In this sense, SkipV1Former performs equivalently as an MHA Transformer with extra embedding dimension that can fully encode the $(\mathbf{x},\mathbf{y})$ pair. Further discussions are provided in Appendix \ref{app proofs}.

\section{Extensions \& Practical Recipes} \label{sec extensions practical recipes}
\subsection{Uptraining from Pretrained MHA Checkpoints} \label{subsec uptrain}
Modern large language models are typically released with standard MHA Transformer checkpoints. To train a SkipV1Former from an MHA Transformer checkpoint, we follow a similar uptraining strategy to that of Ainslie et al.~\cite{DBLP:conf/emnlp/AinslieLJZLS23} for Multi-Query Attention \cite{shazeer2019fast}. Concretely, 1) Convert an MHA Transformer checkpoint into a SkipV1Former checkpoint by inserting a mean pool across every two head projections in layers beyond the first. 2) Initialize the new SkipV1Former layers with these pooled weights while preserving the original first-layer Value projections and continue pretraining. An illustration is shown in Figure \ref{fig:two-figs}.
It is shown in Section \ref{subsec exp extensions} that this uptraining can match the perplexity of training from scratch using an additional 10\%-15\% of the original compute budget.

\subsection{Combining with KV-Cache-Efficient Methods} \label{subsec combining}
Our first‐value‐head skip connection is compatible with most existing KV cache compression methods, such as sparsity methods and same-layer KV sharing methods. In this Section, we consider Group-Query Attention (GQA) \cite{DBLP:conf/emnlp/AinslieLJZLS23} and Multi-Latent Attention (MLA) \cite{liu2024deepseek}. We also consider YOCO in Appendix \ref{app subsec skip kv}.
\begin{itemize}
    \item \textbf{Group-Query Attention}: GQA divides query heads into groups, each sharing a single key and value head. To integrate, we first replace half of the Value heads with those from the first layer before grouping. Each group thus shares either a current-layer or first-layer Value head, enabling shallow-layer reuse without disrupting GQA semantics.
    \item \textbf{Multi-Latent Attention}: MLA uses a low-rank joint compression $\mathbf{c}$ for attention keys and values. To integrate with our technique, we interleave half of each layer’s \(\mathbf{c}\)-dimensional latent vector with the corresponding first-layer latent features, reducing the $\mathbf{c}$-cache footprint. 
\end{itemize}
See Figure \ref{fig:two-figs} for an illustration.
It is shown in Section \ref{subsec exp extensions} that combining our methods with GQA or MLA leads to another 25\% or 50\% reduction on KV cache and attaining lower perplexity. We also experiment on TinyLlama-1.1B model in Appendix \ref{app subsec llama details} and observe similar improvements.

\begin{figure}[tb]
  \centering
  \includegraphics[height=0.27\textwidth]{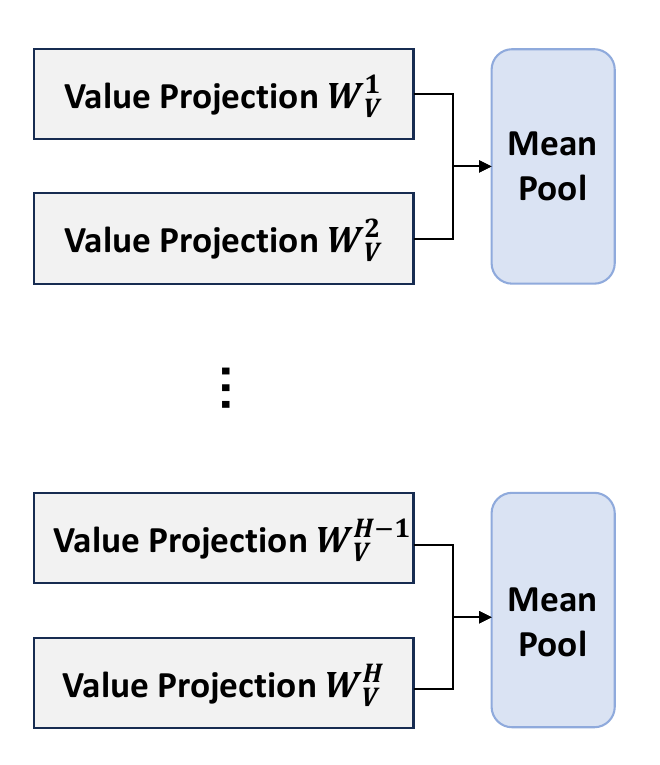}%
  \hspace{5pt}%
  \includegraphics[height=0.27\textwidth]{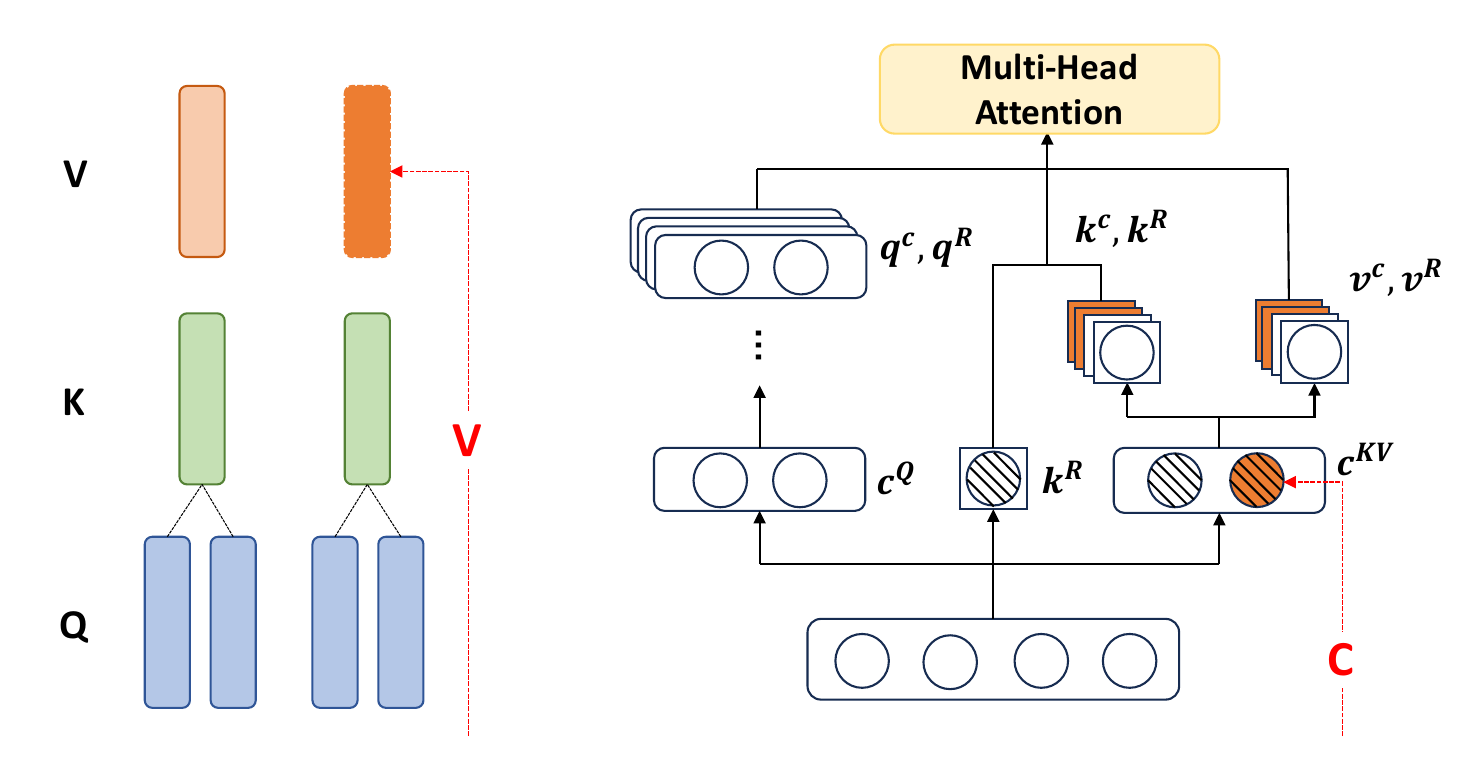}
  \caption{\textbf{Left}: Uptraining checkpoint conversion: we apply mean pooling over every two $W_V$ heads to reduce dimensional mismatch. \textbf{Middle}: Integration of SkipV1Former with GQA. \textbf{Right}: Integration of SkipV1Former with MLA; shaded regions follow the style of \cite{liu2024deepseek} and indicate cached components.}
  \label{fig:two-figs}
\end{figure}

\section{Experiments} \label{sec experiments}
In this section, we evaluate SkipV1Former on GPT-2 \cite{brown2020language} and LLaMA models \cite{touvron2023llama}, examining primarily pre-training dynamics and losses, inference memory, uptraining, and integration with KV-efficient techniques.

\subsection{Experiments on GPT-2 Series}
\paragraph{Setup.} We pretrain the models on OpenWebText2 \cite{gao2020pile}, an open-source replication of OpenWeb-TextCorpus comprising around 17B tokens. We also evaluate the pretrained checkpoints on a set of standard downstream tasks in a zero-shot manner. We experiment on four model scales: 1) GPT2-S (125 M) and GPT2-M (355 M), following the configurations in Radford et al. \cite{radford2019language} 2) Two additional intermediate variants for finer granularity: 200M and 550M.

For the architectures, we choose MHA Transformer as the baseline model. We pick two recent feature-augmentation methods: DenseFormer \cite{pagliardini2024denseformer} and ResFormer \cite{zhou2024value}, and two KV-sharing methods: YOCO \cite{sun2024you} and Cross-Layer Attention (CLA) \cite{brandon2024reducing} for comparisons. To ensure fair comparison in the KV-cache-saving regime, we consider V-only YOCO and V-only CLA, aligning with SkipV1Former's design. All models are trained using AdamW. More details are provided in Appendix \ref{app subsec gpt2 details}.

\paragraph{Results.}
The per‐iteration validation losses for six architectures during pretraining are presented in Figure \ref{fig:GPT2 pretrain}. SkipV1Former matches ResFormer in terms of final loss—achieving the lowest value overall. Importantly, SkipV1Former consistently outperforms both the baseline model and DenseFormer, highlighting the benefit of our skip connection with the first-layer Value. In the same KV-saving regime, SkipV1Former significantly surpasses the lightweight cache‐efficient methods YOCO and CLA. Additionally, the validation‐loss curve for SkipV1Former is as smooth as those of the regular models, demonstrating that our skip connection does not introduce any additional training instability.

We further evaluate these pretrained models in a zero-shot setting across standard downstream tasks in Appendix \ref{app additional exp}. SkipV1Former and ResFormer yield the strongest average performance.

\begin{figure}[tb]
  \centering
  \begin{subfigure}[t]{0.5\textwidth}
    \centering
    \includegraphics[width=0.8\linewidth]{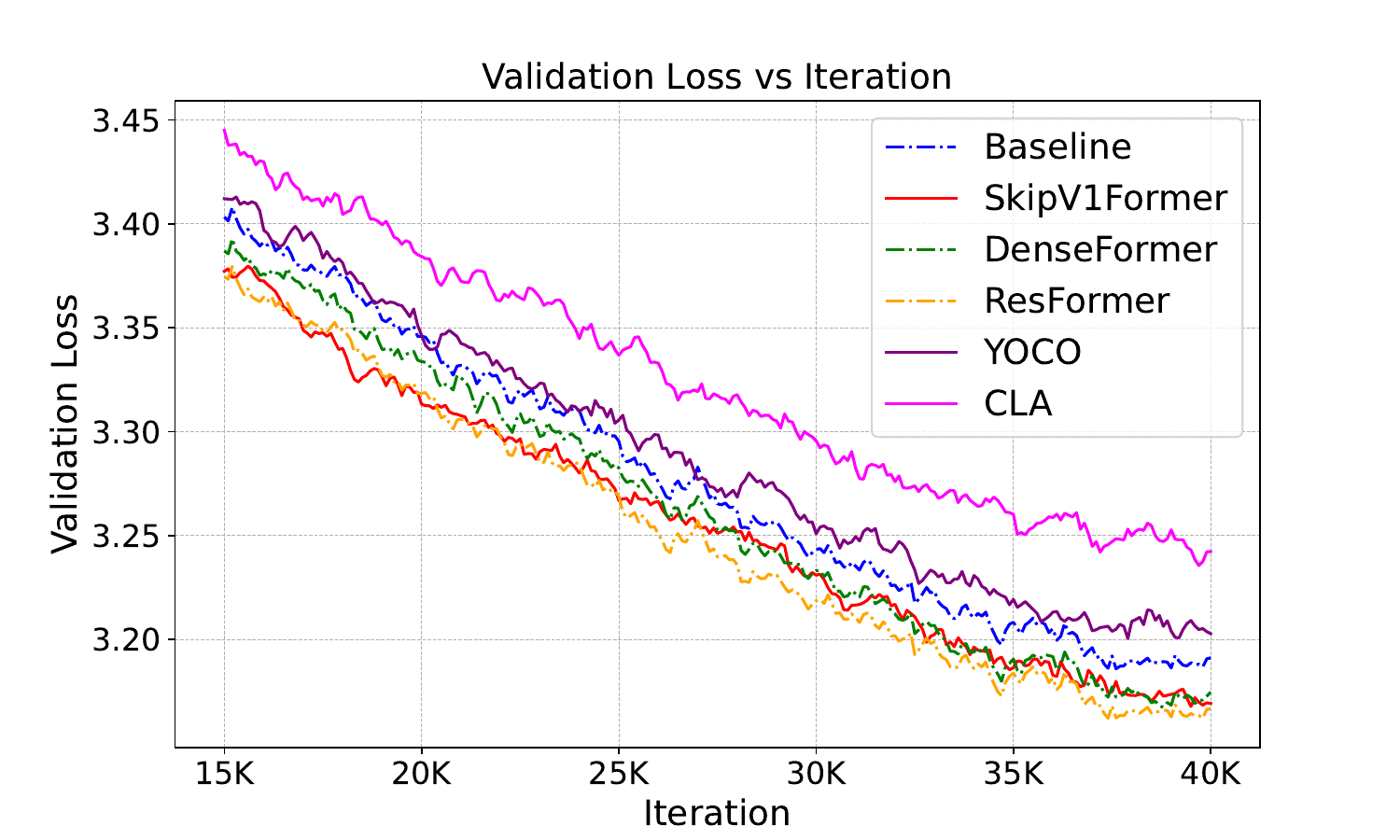}
    \caption{GPT2-125M Model}
    \label{fig:small}
  \end{subfigure}\hfill%
  \begin{subfigure}[t]{0.5\textwidth}
    \centering
    \includegraphics[width=0.8\linewidth]{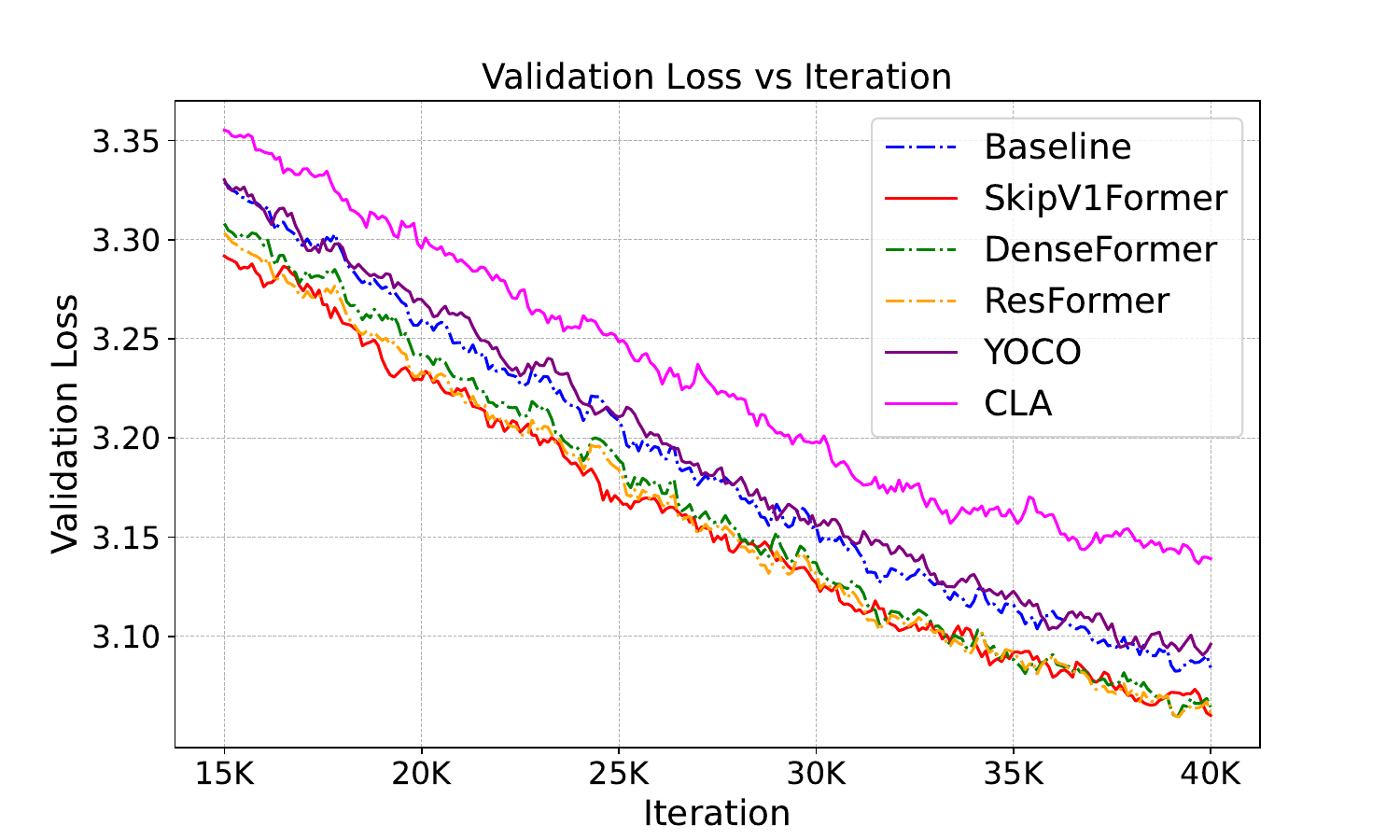}
    \caption{GPT2-200M Model}
    \label{fig:small_medium}
  \end{subfigure}

  \vspace{0.5em}

  \begin{subfigure}[t]{0.5\textwidth}
    \centering
    \includegraphics[width=0.8\linewidth]{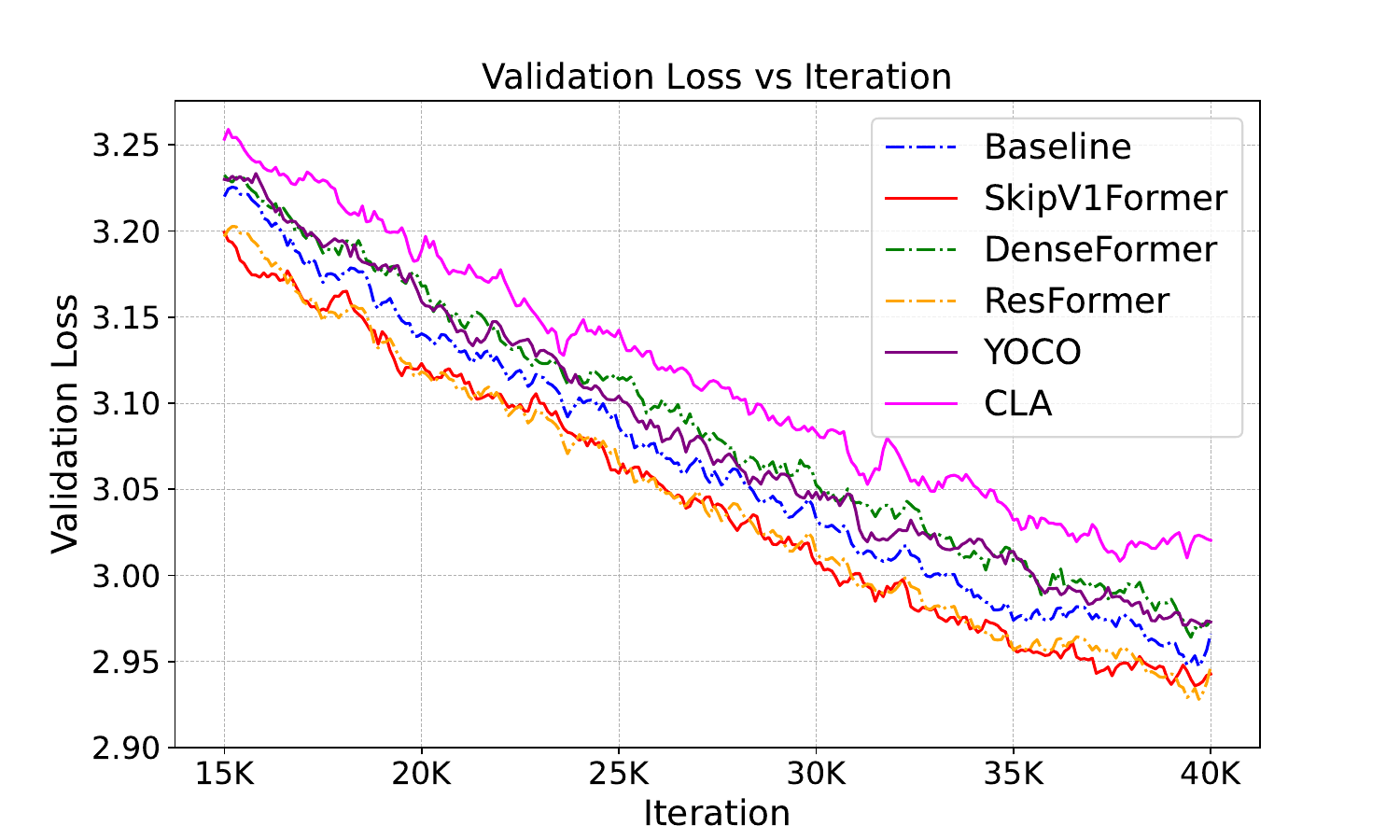}
    \caption{GPT2-355M Model}
    \label{fig:medium}
  \end{subfigure}\hfill%
  \begin{subfigure}[t]{0.5\textwidth}
    \centering
    \includegraphics[width=0.8\linewidth]{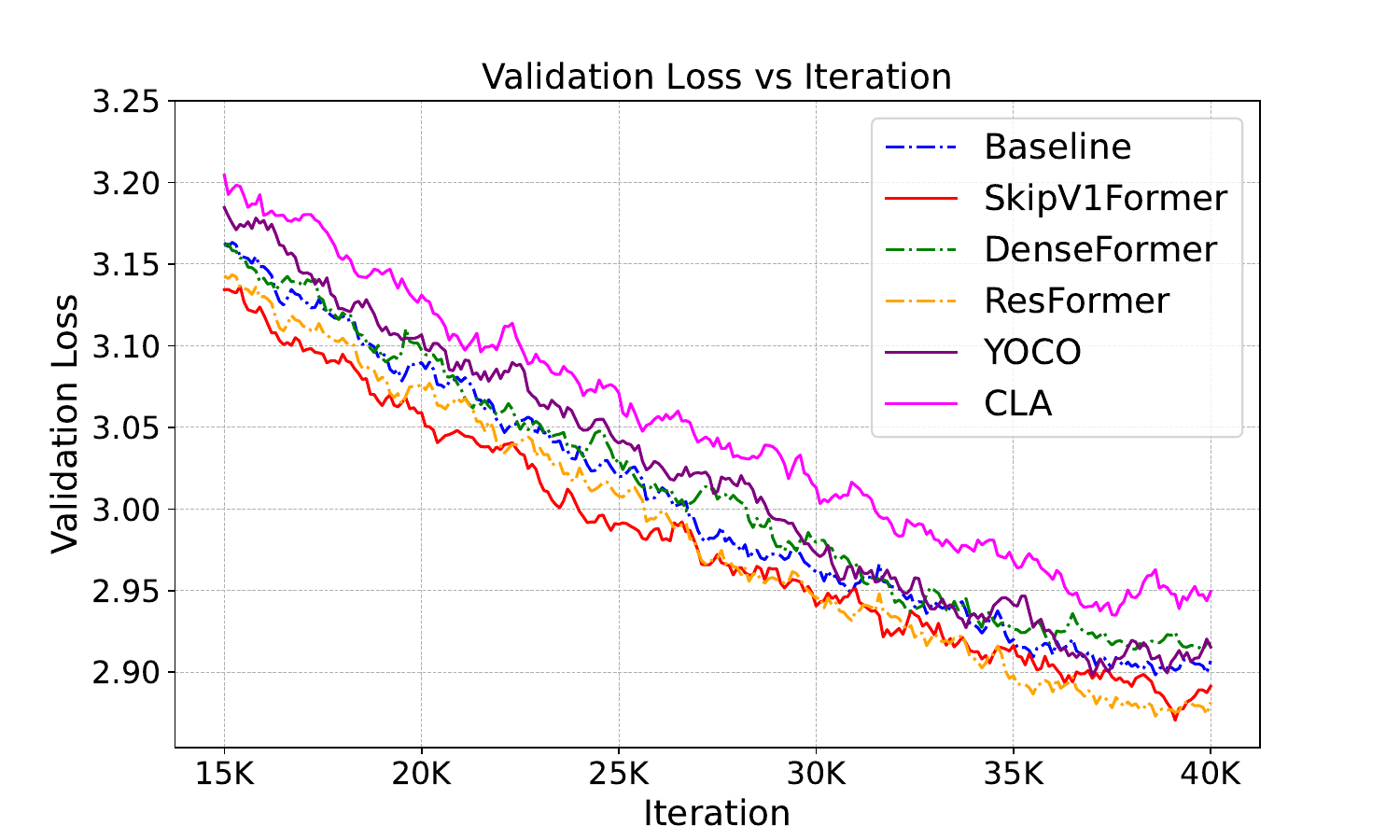}
    \caption{GPT2-550M Model}
    \label{fig:medium_large}
  \end{subfigure}

  \caption{Validation loss curves across four GPT-2 model sizes (125M, 200M, 355M, 550M) for six architectures plotted over training iterations. Solid lines indicate KV-cache–efficient variants (YOCO-V, CLA-V, SkipV1Former), while dashed lines correspond to non-KV-cache-efficient models (MHA Transformer, DenseFormer, ResFormer).}
  \label{fig:GPT2 pretrain}
\end{figure}

\subsection{Experiments on LLaMA Series}
\paragraph{Setup.} We further compare SkipV1Former and the baseline model MHA Transformer on LLaMA series \cite{touvron2023llama} to validate the effectiveness of our architecture. We pre-train on C4 dataset, a colossal, cleaned version of Common Crawl’s web crawl corpus \cite{raffel2020exploring}. We evaluate sizes from 60 M up to 3 B parameters to test scalability. More details are provided in Appendix \ref{app subsec llama details}.

\paragraph{Results.}
Table \ref{tab:llama-loss-ppl} and Figure \ref{fig:llama-loss-curve} report validation loss and perplexity across all sizes. SkipV1Former outperforms the baseline at every scale. On the small and medium models (60M–350M), it achieves notable reduction in validation loss of approximately 0.03–0.05. For larger models (1B–3B), SkipV1Former maintains a noticeable improvement regarding the model scale. Importantly, these performance gains come alongside a reduction of approximately 25\% in KV cache usage and around 4\% in total parameter count. Furthermore, Table \ref{tab:3b_downstream} reports the downstream task accuracy of the 3B models, with results for other scales provided in Appendix~\ref{app subsec downstream}. SkipV1Former also surpasses the baseline in average downstream performance.

\begin{figure}[tb]
  \centering
  \begin{subfigure}[t]{0.51\textwidth}
    \vspace*{0pt}  
    \caption{Validation Loss and Perplexity}
    \label{tab:llama-loss-ppl}
    \begin{adjustbox}{max width=\linewidth}
      \begin{tabular}{c|ccccc}
        \toprule
        & \multicolumn{5}{c}{\textbf{Validation Loss}} \\
        \cmidrule(lr){2-6}
        \textbf{Model Size} & 60M   & 130M  & 350M  & 1B    & 3B    \\
        \midrule
        Baseline       & 3.549 & 3.229 & 2.931 & 2.723 & 2.664 \\
        SkipV1Former   & 3.504 & 3.192 & 2.885 & 2.711 & 2.652 \\
        \addlinespace[4pt]
        & \multicolumn{5}{c}{\textbf{Perplexity}} \\
        \cmidrule(lr){2-6}
        \textbf{Model Size} & 60M   & 130M  & 350M  & 1B    & 3B    \\
        \midrule
        Baseline       & 34.78 & 25.25 & 18.75 & 15.23 & 14.35 \\
        SkipV1Former   & 33.25 & 24.34 & 17.90 & 15.04 & 14.18 \\
        \bottomrule
      \end{tabular}
    \end{adjustbox}
  \end{subfigure}%
  \hfill
  \begin{subfigure}[t]{0.39\textwidth}
    \vspace*{0pt}  
    \caption{Val. \ Loss vs.\ Model Size}
    \label{fig:llama-loss-curve}
    \begin{adjustbox}{max width=\linewidth,max height=0.35\textheight}
      \includegraphics[width=\linewidth]{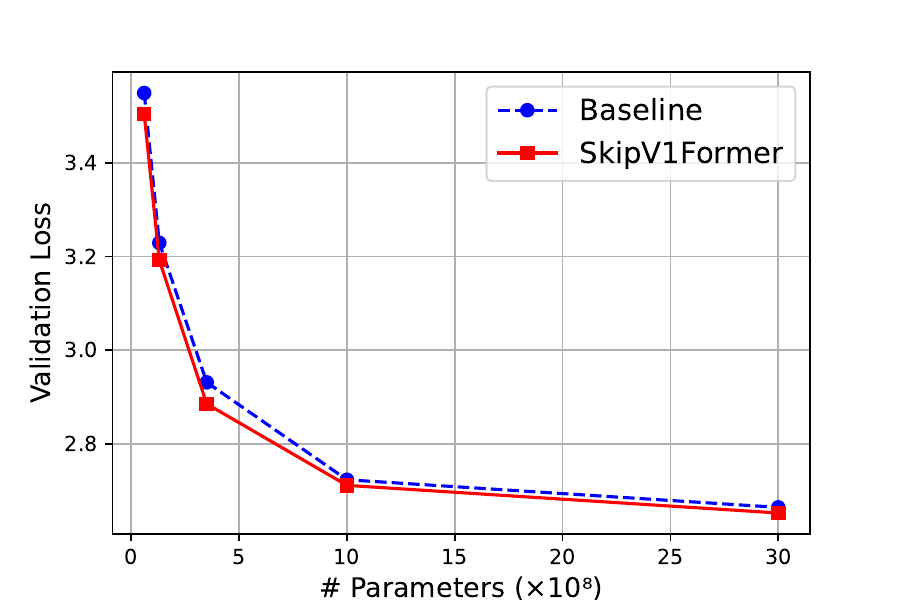}
    \end{adjustbox}
  \end{subfigure}

  \caption{Validation Loss, Perplexity, and Loss Curve for LLaMA Models scaling from 60M to 3B. SkipV1Former consistently outperformed MHA Transformer with with reduced KV cache and fewer parameters.}
  \label{fig:combined-llama1}
\end{figure}

\begin{table}[htb]
  \centering
  \caption{Test accuracies (\%) of 3B-scale models on downstream tasks, with overall average.}
  \begin{adjustbox}{max width=0.99\linewidth}
  \begin{tabular}{l|ccccccccc|c}
    \hline
    \textbf{Model} & \textbf{ARC-C} & \textbf{ARC-E} & \textbf{BoolQ} & \textbf{Hella} & \textbf{OBQA} & \textbf{PIQA} & \textbf{RTE} & \textbf{SciQ} & \textbf{Winogr} & \textbf{Avg} \\
    \hline
    Baseline      & 21.0 & \textbf{49.5} & 55.5 & 34.5 & \textbf{18.6} & 68.4 & 53.1 & 77.5 & 49.3 & 47.5 \\
    SkipV1Former & 21.0 & 49.4 & \textbf{60.3} & \textbf{35.1} & 17.6 & \textbf{69.3} & \textbf{53.8} & \textbf{79.1} & \textbf{49.5} & \textbf{48.3} \\
    \hline
  \end{tabular}
  \end{adjustbox}
  \label{tab:3b_downstream}
\end{table}

\subsection{Inference} \label{subsec gpu memory}
\paragraph{GPU Memory.}
To validate real-world savings, we measure and compare GPU memory consumption of SkipV1Former and a baseline MHA Transformer across different dimensions. All experiments are conducted on LLaMA variants using a single NVIDIA RTX A6000 GPU with 48 GB of memory, and the results are summarized in Figure~\ref{fig:inference-memory}.

Figure \ref{fig:kv-vs-seqlen} plots KV cache size (in MB) as a function of sequence length. While both SkipV1Former and MHA Transformer exhibit linear cache growth, SkipV1Former’s slope is $\sim$25 \% lower than that of MHA across all LLaMA variants, aligning with our theoretical predictions. Figure \ref{fig:kv-per-token} plots per‐token KV cache (KB) against model size (ranging from 350M to 7B parameters). Again, SkipV1Former maintains a $\sim$25 \% reduction at all scales, demonstrating uniform savings regardless of model capacity. Finally, Figure \ref{fig:memory-breakdown} provides a stacked breakdown of peak allocated memory in inference without activation offloading. SkipV1Former not only shrinks KV cache size but also lowers the overall peak memory use by a noticeable margin of $\sim$ 5.6GB.

\begin{figure}[tb]
  \centering
  \begin{subfigure}[b]{0.32\textwidth}
    \centering
    \includegraphics[width=\linewidth]{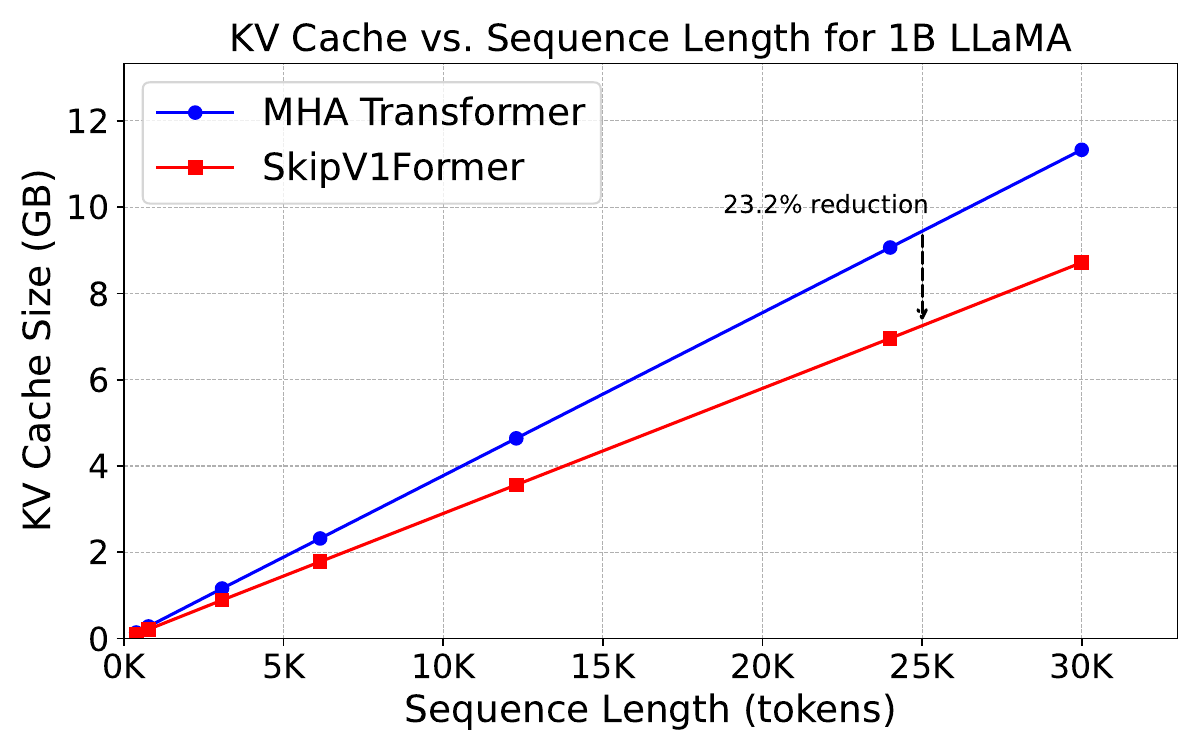}
    \caption{KV cache size across different sequence length.}
    \label{fig:kv-vs-seqlen}
  \end{subfigure}
  \hfill
  \begin{subfigure}[b]{0.32\textwidth}
    \centering
    \includegraphics[width=\linewidth]{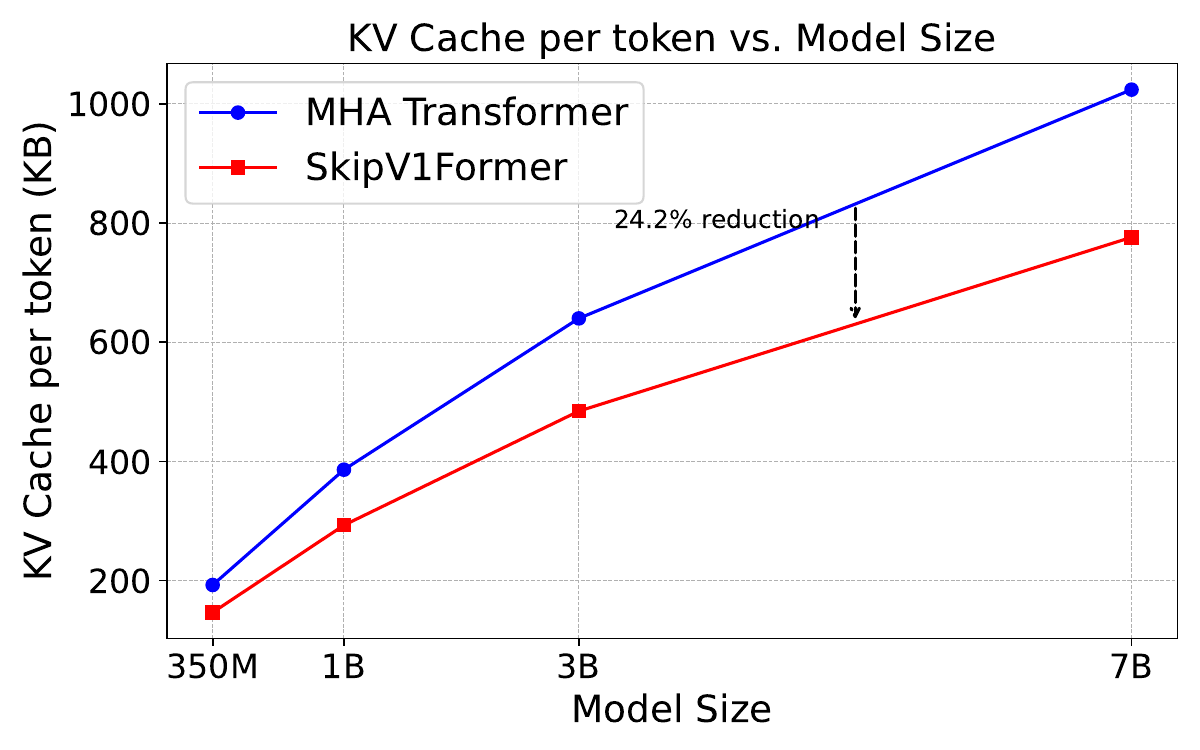}
    \caption{Per-token KV cache as a function of model size.}
    \label{fig:kv-per-token}
  \end{subfigure}
  \hfill
  \begin{subfigure}[b]{0.32\textwidth}
    \centering
    \includegraphics[width=\linewidth]{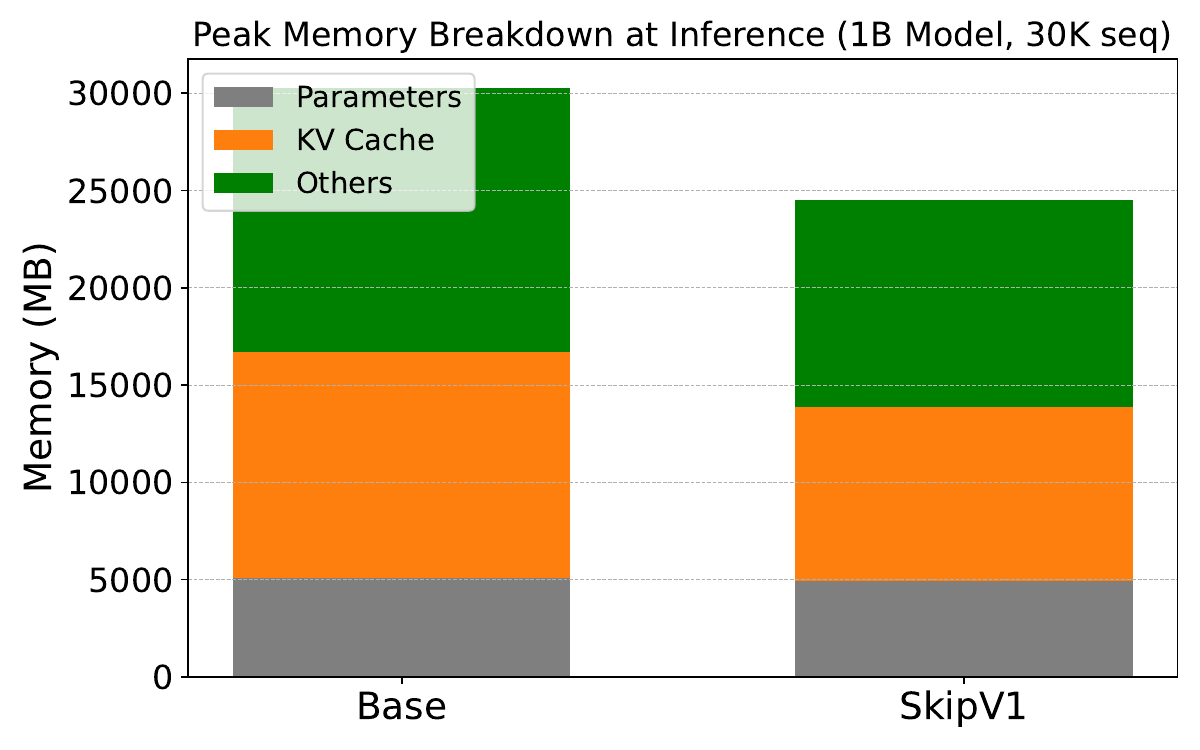}
    \caption{Memory breakdown at inference for 1B LLaMA.}
    \label{fig:memory-breakdown}
  \end{subfigure}
  \caption{Inference memory evaluation of SkipV1Former vs. standard MHA Transformer on LLaMA.}  
  \label{fig:inference-memory}
\end{figure}

\paragraph{Prefill Time and Throughput.}
We also evaluate inference speed in terms of prefill latency (time to build the KV cache) and decoding throughput (tokens/s with cached KV states). Figure~\ref{fig:prefill-throughput} reports results across 60M--7B models. SkipV1Former reduces prefill time by 5--8 ms/100 tokens due to halved $W_V$ projections, and achieves throughput within $\pm$2\% of the baseline at practical batch sizes ($B=8,16$), with occasional gains. These modest but consistent improvements are achieved without the additional computational overhead typically introduced by alternative KV-cache compression methods (e.g., SVD-based), underscoring SkipV1Former’s efficiency. This overhead is expected to shrink with dedicated fused kernels. More details are provided in Appendix \ref{app subsec llama details}.

\begin{figure}[tb]
\centering
\begin{subfigure}[t]{0.32\textwidth}
\includegraphics[width=\linewidth]{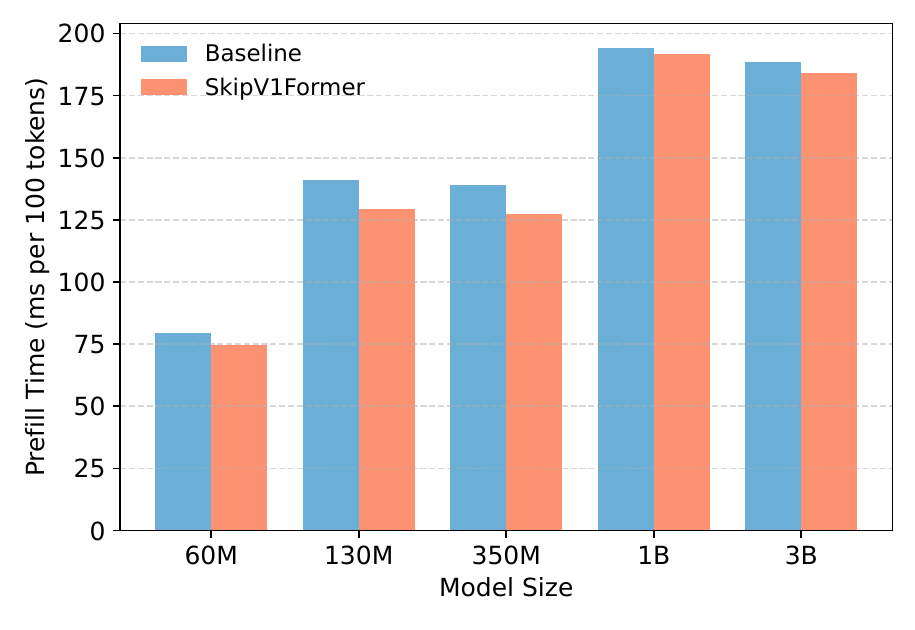}
\caption{Prefill (B=1): lower is better.}
\label{fig:prefill-time}
\end{subfigure}\hfill
\begin{subfigure}[t]{0.32\textwidth}
\includegraphics[width=\linewidth]{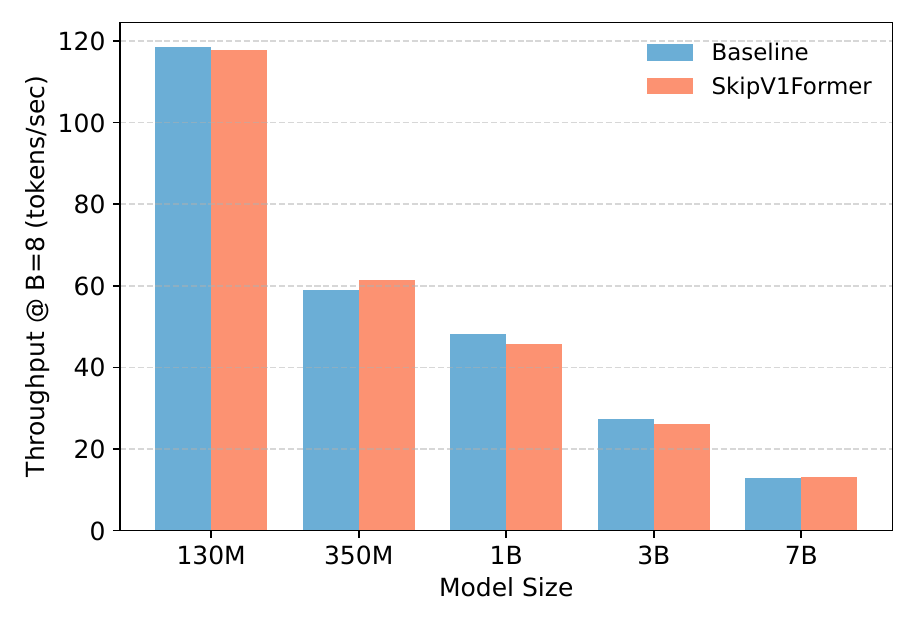}
\caption{Throughput (B=8).}
\label{fig:throughput-b8}
\end{subfigure}\hfill
\begin{subfigure}[t]{0.32\textwidth}
\includegraphics[width=\linewidth]{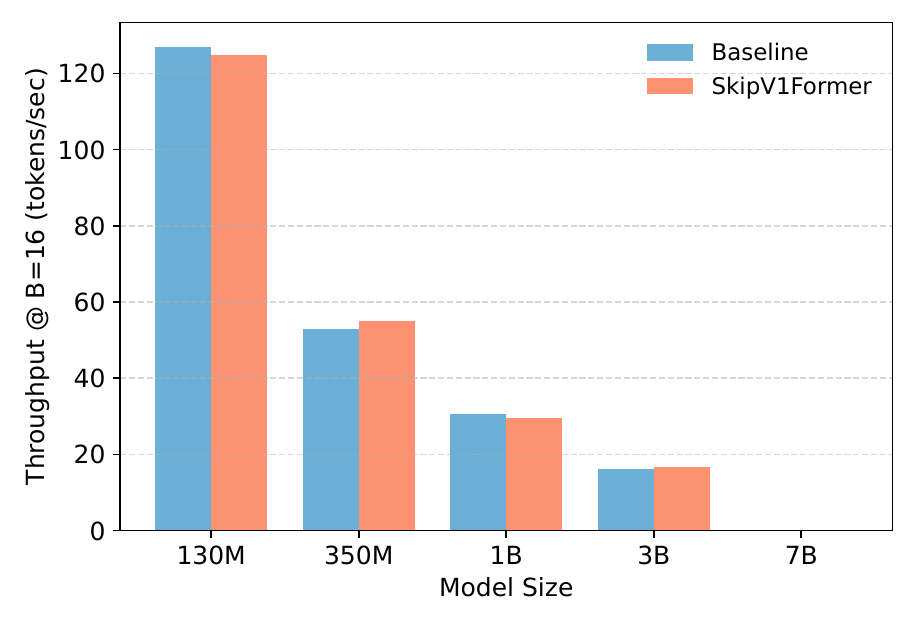}
\caption{Throughput (B=16). 7B OOM.}
\label{fig:throughput-b16}
\end{subfigure}
\caption{Inference speed of SkipV1Former vs.\ baseline.}
\label{fig:prefill-throughput}
\end{figure}

\subsection{Extensions} \label{subsec exp extensions}
\paragraph{Uptraining.} We uptrain SkipV1Formers from the 125M to 355M GPT2-MHA Transformer checkpoints using our uptraining strategy in Section \ref{subsec uptrain}. Results are shown in Figure~\ref{fig:uptraining}, and further experimental details are included in Appendix~\ref{app subsec gpt2 details}.

As shown in Figure~\ref{fig:uptraining}, uptraining requires only 10\%–15\% of the original compute budget to reach the same validation loss as training from scratch. Furthermore, to match the final performance of a fully trained MHA Transformer, SkipV1Former requires only 5 \%–10 \% of the original training compute. The checkpoint conversion process incurs negligible resources compared to full training. We also compare alternative checkpoint conversion approaches in Appendix~\ref{app subsec llama details}.

\begin{figure}[tb]
  \centering
  \begin{subfigure}[t]{0.32\textwidth}
    \centering
    \includegraphics[width=\linewidth]{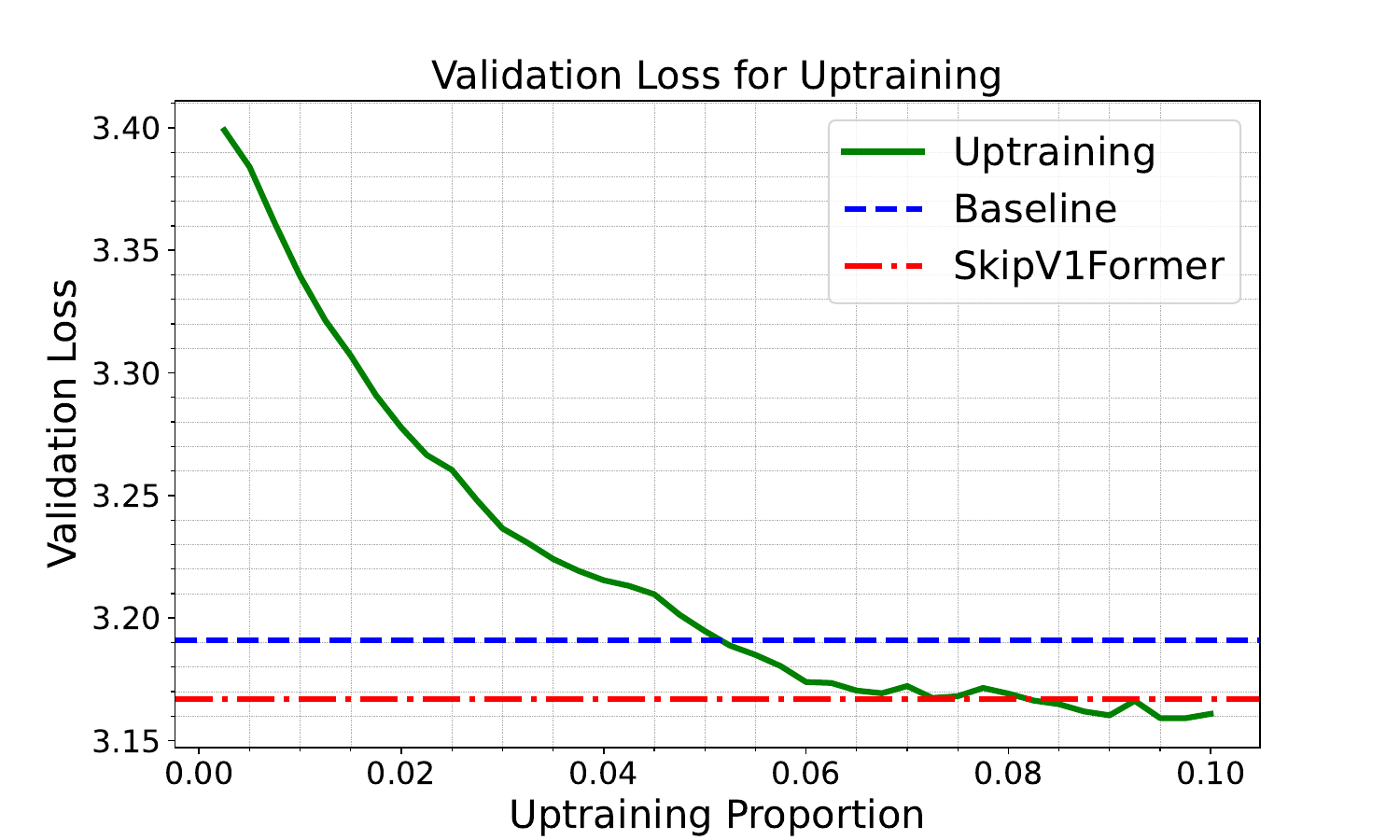}
    \caption{GPT2-125M Model}
    \label{fig:up_s}
  \end{subfigure}\hfill%
  \begin{subfigure}[t]{0.32\textwidth}
    \centering
    \includegraphics[width=\linewidth]{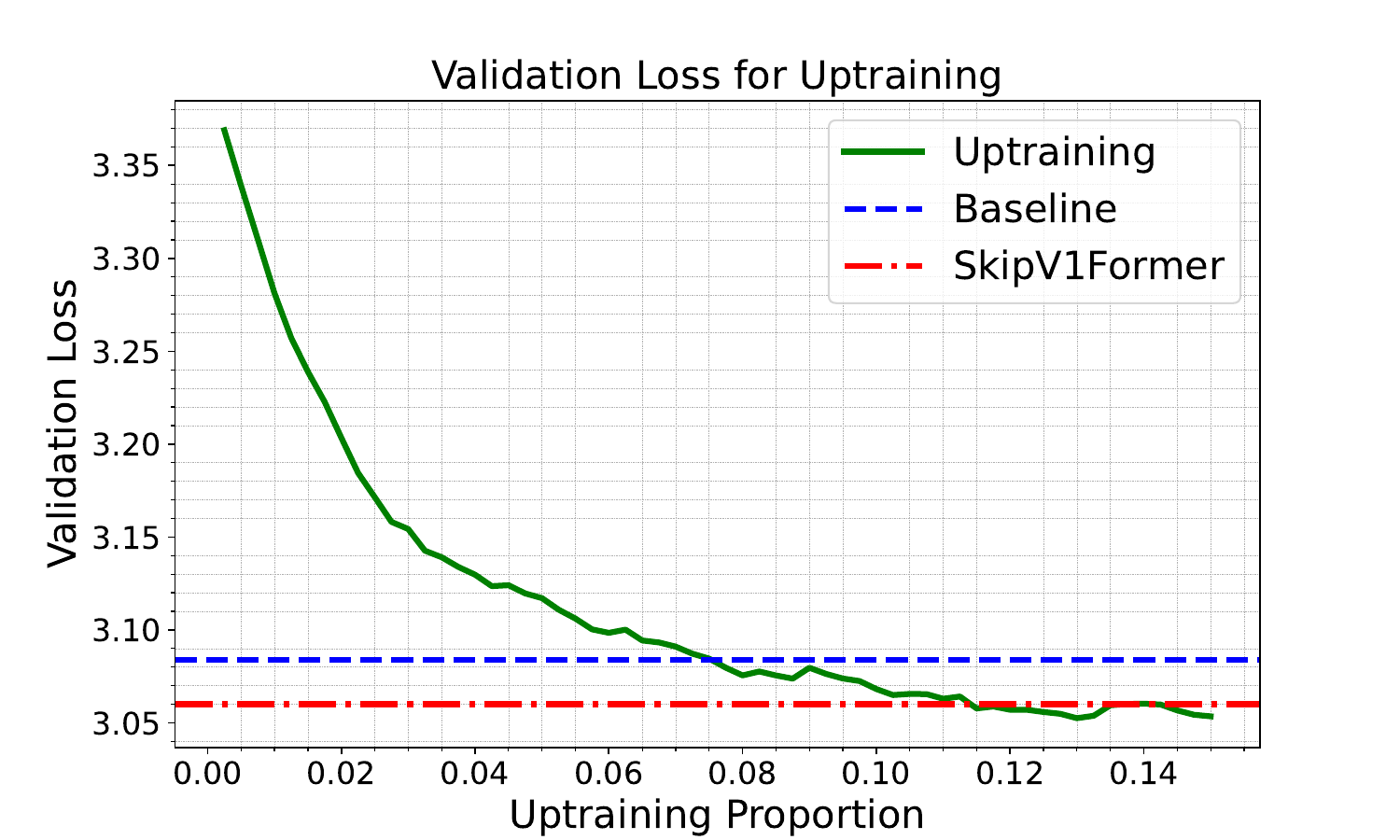}
    \caption{GPT2-200M Model}
    \label{fig:up_sm}
  \end{subfigure}\hfill%
  \begin{subfigure}[t]{0.32\textwidth}
    \centering
    \includegraphics[width=\linewidth]{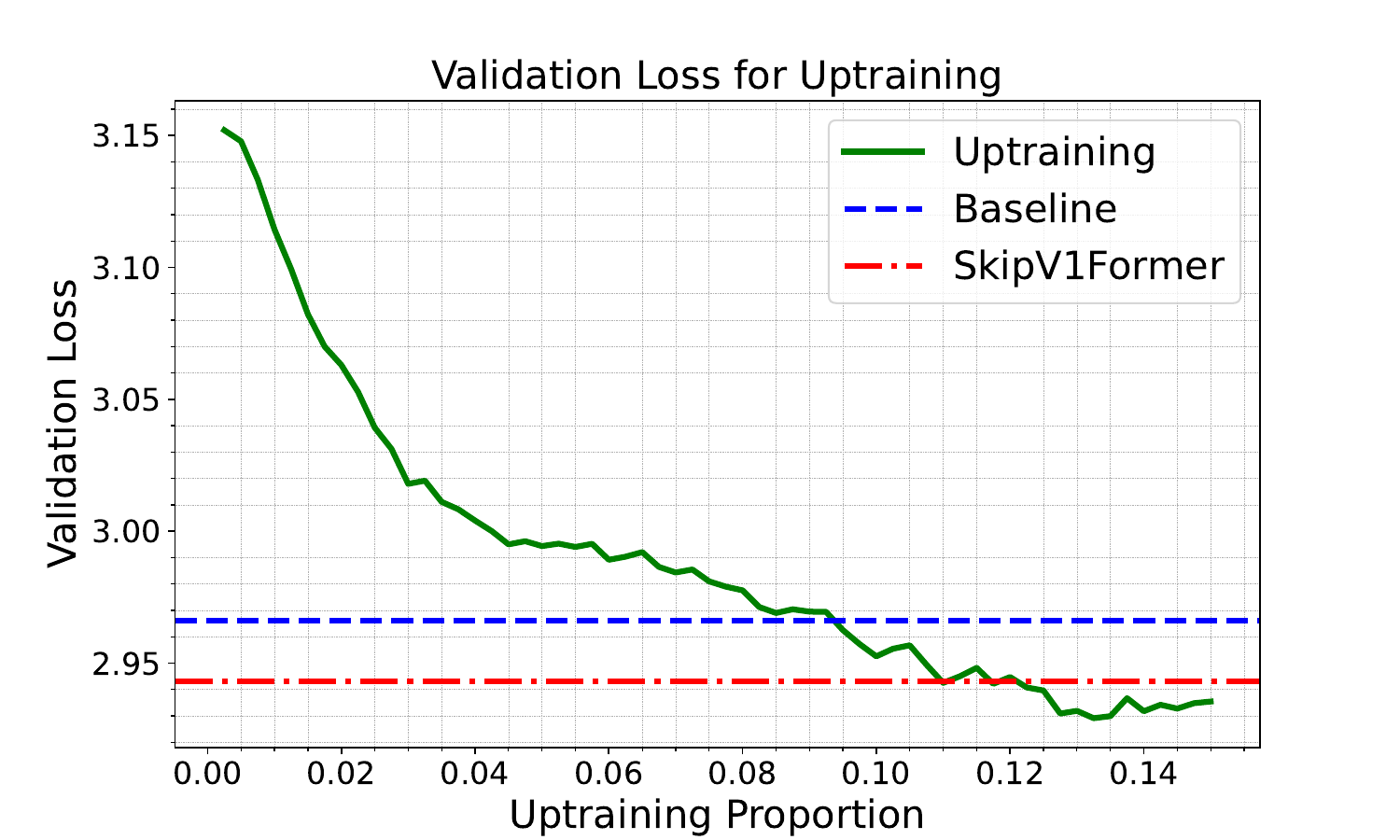}
    \caption{GPT2-355M Model}
    \label{fig:up_m}
  \end{subfigure}

  \caption{Validation loss versus uptraining proportion for GPT2-125M, GPT2-200M, and GPT2-355M models.  The horizontal dashed lines mark the final validation losses of the pretrained baseline (blue) and SkipV1Former (red) when trained from scratch.}
  \label{fig:uptraining}
\end{figure}

\paragraph{Combining with Existing Methods.}
The skip connection mechanism of our SkipV1Former can be combined with other layer-replacement or low-rank methods such as GQA and MLA. We evaluate the SkipV1Former-GQA and SkipV1Former-MLA against their vanilla counterparts on the LlaMA 350M architecture on OpenWebText2. As shown in Table \ref{table:combining-gqa-mla}, the SkipV1Former-version of GQA and MLA achieve lower validation loss and reduce the KV bytes per token by a large margin. The total parameter count is also reduced, benefiting from the lightweight nature of our skip mechanism. To further align with recent practice, we also experiment with GQA using a group-of-4 configuration on TinyLLaMA models. Results in Appendix~\ref{app subsec llama details} show that SkipV1Former continues to outperform the baseline under this setting, confirming the robustness of the method across different GQA schemes. Additional results on combining SkipV1Former with YOCO are provided in Appendix~\ref{app subsec skip kv}.

\begin{table}[tb]
\centering
\caption{Performance and KV cache reduction when composing SkipV1Former with GQA and MLA on GPT2-355M.  
"$\Delta$ KV \%" indicates cache reduction relative to the non-SkipV1 methods.}
\label{table:combining-gqa-mla}
\begin{adjustbox}{max width=0.75\linewidth}
\begin{tabular}{l|ccccc}
\hline
\textbf{Models}     & \textbf{Val. Loss} & \textbf{PPL} & \textbf{Params} & \textbf{KV Bytes / token} & \textbf{$\Delta$ KV \%} \\
\hline
Transformer-GQA & 2.912 & 18.39 & 334.7 M & 98 304 B & —\\
SkipV1Former-GQA & 2.893 & 18.04 & 328.9 M  & 74 752 B & $-$24.0\% \\
\hline\hline
Transformer-MLA & 2.896 & 18.10 & 337.4 M & 13 824 B & —     \\
SkipV1Former-MLA & 2.888 & 17.95 & 334.4 M & 7 680 B & $-$44.4\%     \\
\hline
\end{tabular}
\end{adjustbox}
\end{table}

\subsection{Ablations} \label{subsec ablations}
\paragraph{Skip‐Head Ratio.}
We vary the fraction of attention heads whose Value are replaced by first-layer Values—from 25 \% up to 75 \%—on GPT2-355M. As shown in Figure \ref{fig:head_vs_baseline}, skipping more than 50 \% of heads begins to hurt validation loss, while skipping fewer than 50 \% also underperforms the 50\% setting while consuming more memory. Hence, skipping exactly half of the heads seems to achieve an optimal tradeoff between model quality and memory saving. Additional experiments on LlaMA-1B models and discussions are provided in Appendix \ref{app subsec llama details}.

\begin{figure}[htb]
  \centering
  \begin{subfigure}[tb]{0.45\textwidth}
    \centering
    \includegraphics[width=0.9\linewidth,height=3cm]{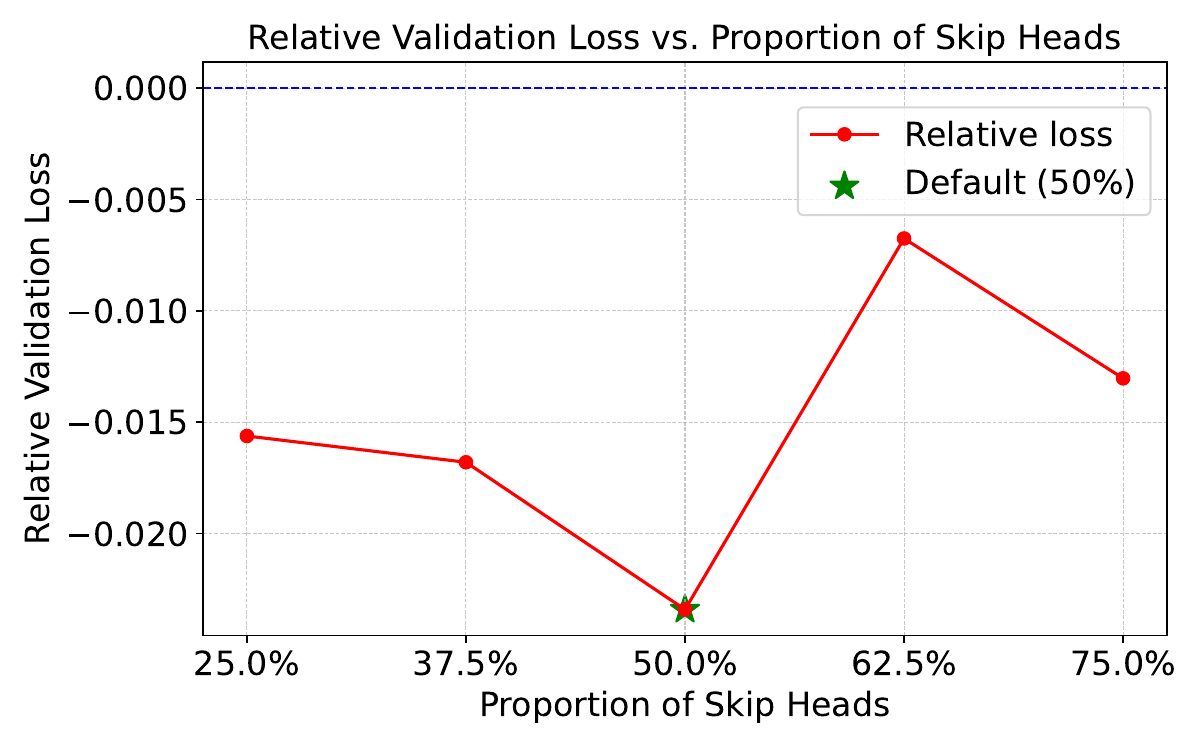}
    \caption{Relative validation loss for cross-layer head ratio compared with baseline. The asterisk denotes the default ratio of skip heads.} 
    \label{fig:head_vs_baseline}
  \end{subfigure}
  \hfill
  \begin{subfigure}[tb]{0.45\textwidth}
    \centering
    \includegraphics[width=0.9\linewidth,height=3cm]{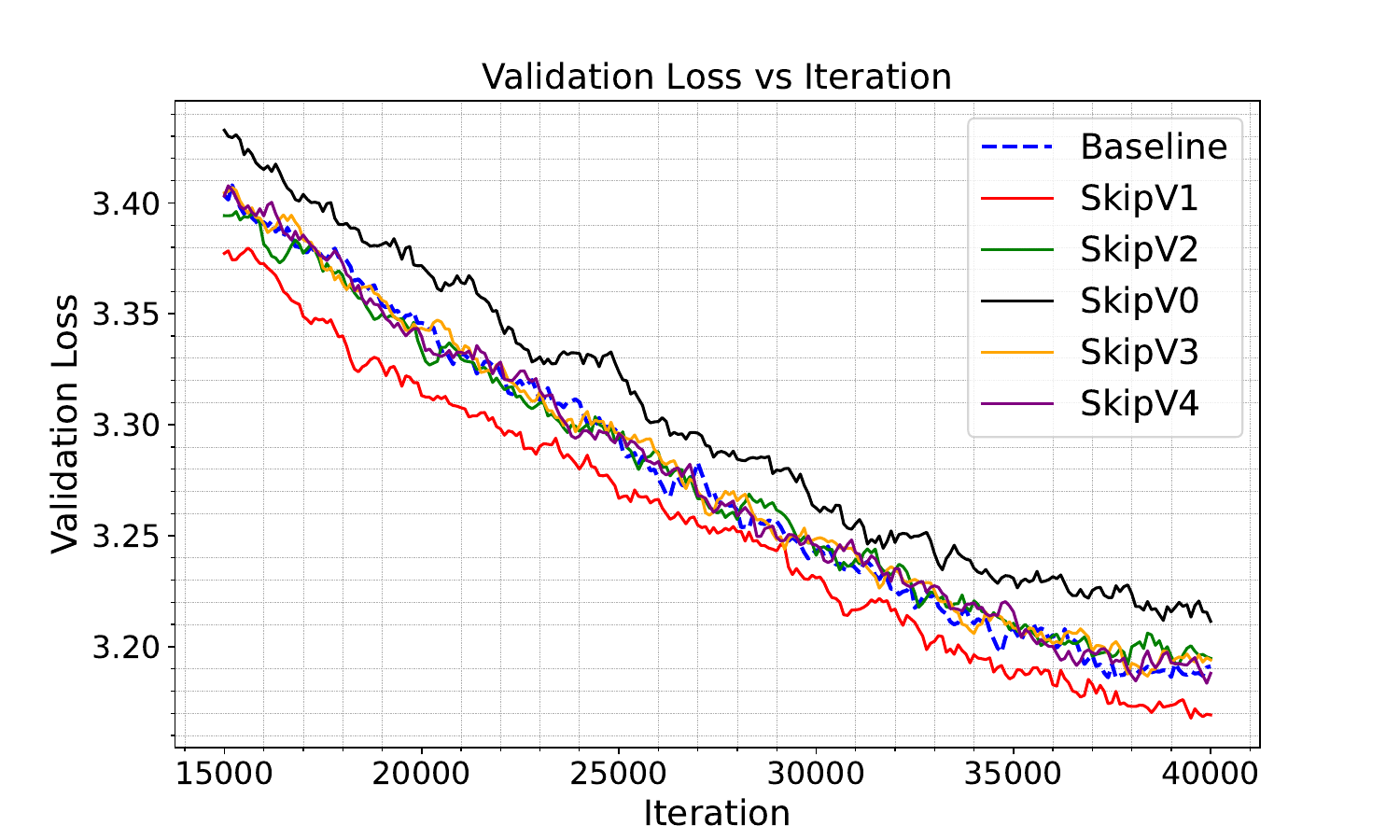}
    \caption{Validation loss per iteration for cross-layer V at different layers and comparison with no skip (SkipV0) and baseline.}
    \label{fig:layerV_vs_baseline}
  \end{subfigure}
  \caption{Comparison of validation loss under different head and layer configurations.}
  \label{fig:head_layer_comparison}
\end{figure}

\paragraph{Skip Connection with Different Layers.}

We further investigate the performance of reusing Values of other layers (layer 2, 3, 4) as well as reusing no Values (SkipV0). As shown in Figure~\ref{fig:layerV_vs_baseline}, reusing second-layer Values (SkipV2) brings only marginal gains, and removing skip connections while halving the Value projection (SkipV0) severely degrades performance. Extending the skip to deeper layers (SkipV3 \& SkipV4) similarly yields weaker results than SkipV1. Together, these findings highlight the unique importance of first-layer Values for cross-layer reuse.

\section{Conclusions and Future Works}
SkipV1Former demonstrates that strategic reuse of first-layer Value can simultaneously improve the model's representation capability and reduce KV cache size. This design enables deeper layers to access information typically lost to aggressive compression, thereby enhancing the model’s inherent mesa‐optimization. Across different model scales, SkipV1Former consistently outperforms MHA Transformer and related variants while using less KV memory. Its "uptraining" methodology and seamless integration with existing techniques further demonstrate its practical value and flexibility.

Looking forward, promising directions include kernel-level optimizations for faster inference, training-free transformations to retrofit existing models, and applications in fine-tuning. We leave these to future investigation and hope our study motivates broader exploration of cross-layer skip connections in large-scale Transformer architectures.



\section*{Acknowledgements}
K. Yuan is supported by the NSF China under Grants 92370121, 12301392, and W2441021. Z. Lin is supported by the NSF China under Grants 62276004 and the State Key Laboratory of General Artificial Intelligence.

\bibliographystyle{IEEEtran}
\bibliography{ref.bib}

\begin{thebibliography}{10}
\providecommand{\url}[1]{#1}
\csname url@samestyle\endcsname
\providecommand{\newblock}{\relax}
\providecommand{\bibinfo}[2]{#2}
\providecommand{\BIBentrySTDinterwordspacing}{\spaceskip=0pt\relax}
\providecommand{\BIBentryALTinterwordstretchfactor}{4}
\providecommand{\BIBentryALTinterwordspacing}{\spaceskip=\fontdimen2\font plus
\BIBentryALTinterwordstretchfactor\fontdimen3\font minus
  \fontdimen4\font\relax}
\providecommand{\BIBforeignlanguage}[2]{{%
\expandafter\ifx\csname l@#1\endcsname\relax
\typeout{** WARNING: IEEEtran.bst: No hyphenation pattern has been}%
\typeout{** loaded for the language `#1'. Using the pattern for}%
\typeout{** the default language instead.}%
\else
\language=\csname l@#1\endcsname
\fi
#2}}
\providecommand{\BIBdecl}{\relax}
\BIBdecl

\bibitem{vaswani2017attention}
A.~Vaswani, N.~Shazeer, N.~Parmar, J.~Uszkoreit, L.~Jones, A.~N. Gomez,
  {\L}.~Kaiser, and I.~Polosukhin, ``Attention is all you need,''
  \emph{Advances in Neural Information Processing Systems}, 2017.

\bibitem{radford2019language}
A.~Radford, J.~Wu, R.~Child, D.~Luan, D.~Amodei, I.~Sutskever \emph{et~al.},
  ``Language models are unsupervised multitask learners,'' \emph{OpenAI blog},
  2019.

\bibitem{brown2020language}
T.~Brown, B.~Mann, N.~Ryder, M.~Subbiah, J.~D. Kaplan, P.~Dhariwal,
  A.~Neelakantan, P.~Shyam, G.~Sastry, A.~Askell \emph{et~al.}, ``Language
  models are few-shot learners,'' \emph{Advances in Neural Information
  Processing Systems}, 2020.

\bibitem{touvron2023llama}
H.~Touvron, T.~Lavril, G.~Izacard, X.~Martinet, M.-A. Lachaux, T.~Lacroix,
  B.~Rozi{\`e}re, N.~Goyal, E.~Hambro, F.~Azhar \emph{et~al.}, ``Llama: Open
  and efficient foundation language models,'' \emph{ArXiv preprint
  arXiv:2302.13971}, 2023.

\bibitem{pope2023efficiently}
R.~Pope, S.~Douglas, A.~Chowdhery, J.~Devlin, J.~Bradbury, J.~Heek, K.~Xiao,
  S.~Agrawal, and J.~Dean, ``Efficiently scaling transformer inference,''
  \emph{Proceedings of Machine Learning and Systems}, 2023.

\bibitem{petty2023impact}
J.~Petty, S.~van Steenkiste, F.~Sha, I.~Dasgupta, D.~Garrette, and T.~Linzen,
  ``The impact of depth and width on transformer language model
  generalization,'' \emph{ArXiv preprint arXiv:2310.19956}, 2023.

\bibitem{pagliardini2024denseformer}
M.~Pagliardini, A.~Mohtashami, F.~Fleuret, and M.~Jaggi, ``Dense{F}ormer:
  Enhancing information flow in transformers via depth weighted averaging,''
  \emph{Advances in Neural Information Processing Systems}, 2024.

\bibitem{DBLP:conf/acl/HeRKA21}
R.~He, A.~Ravula, B.~Kanagal, and J.~Ainslie, ``Real{F}ormer: Transformer likes
  residual attention,'' in \emph{Findings of the Association for Computational
  Linguistics}, 2021.

\bibitem{huang2017densely}
G.~Huang, Z.~Liu, L.~Van Der~Maaten, and K.~Q. Weinberger, ``Densely connected
  convolutional networks,'' in \emph{Proceedings of IEEE Conference on Computer
  Vision and Pattern Recognition}, 2017.

\bibitem{DBLP:conf/miccai/RonnebergerFB15}
O.~Ronneberger, P.~Fischer, and T.~Brox, ``U-{N}et: Convolutional networks for
  biomedical image segmentation,'' in \emph{Medical Image Computing and
  Computer-Assisted Intervention}, 2015.

\bibitem{DBLP:conf/emnlp/AinslieLJZLS23}
J.~Ainslie, J.~Lee{-}Thorp, M.~de~Jong, Y.~Zemlyanskiy, F.~Lebr{\'{o}}n, and
  S.~Sanghai, ``{GQA:} training generalized multi-query transformer models from
  multi-head checkpoints,'' in \emph{Conference on Empirical Methods in Natural
  Language Processing}, 2023.

\bibitem{DBLP:conf/nips/BrandonMNPR24}
W.~Brandon, M.~Mishra, A.~Nrusimha, R.~Panda, and J.~Ragan{-}Kelley, ``Reducing
  transformer key-value cache size with cross-layer attention,'' \emph{Advances
  in Neural Information Processing Systems}, 2024.

\bibitem{wolpert1997no}
D.~H. Wolpert and W.~G. Macready, ``No free lunch theorems for optimization,''
  \emph{IEEE Transactions on Evolutionary Computation}, 1997.

\bibitem{von2023uncovering}
J.~Von~Oswald, M.~Schlegel, A.~Meulemans, S.~Kobayashi, E.~Niklasson,
  N.~Zucchet, N.~Scherrer, N.~Miller, M.~Sandler, M.~Vladymyrov \emph{et~al.},
  ``Uncovering mesa-optimization algorithms in transformers,'' \emph{ArXiv
  preprint arXiv:2309.05858}, 2023.

\bibitem{sun2024you}
Y.~Sun, L.~Dong, Y.~Zhu, S.~Huang, W.~Wang, S.~Ma, Q.~Zhang, J.~Wang, and
  F.~Wei, ``You only cache once: Decoder-decoder architectures for language
  models,'' \emph{Advances in Neural Information Processing Systems}, 2024.

\bibitem{brandon2024reducing}
W.~Brandon, M.~Mishra, A.~Nrusimha, R.~Panda, and J.~Ragan-Kelley, ``Reducing
  transformer key-value cache size with cross-layer attention,'' \emph{Advances
  in Neural Information Processing Systems}, 2024.

\bibitem{liu2021rethinking}
F.~Liu, X.~Ren, Z.~Zhang, X.~Sun, and Y.~Zou, ``Rethinking skip connection with
  layer normalization in transformers and resnets,'' \emph{ArXiv preprint
  arXiv:2105.07205}, 2021.

\bibitem{chen2024skip}
Q.~Chen, W.~Wang, Q.~Zhang, S.~Zheng, S.~Zhang, C.~Deng, H.~Yu, J.~Liu, Y.~Ma,
  and C.~Zhang, ``Skip-layer attention: Bridging abstract and detailed
  dependencies in transformers,'' \emph{ArXiv preprint arXiv:2406.11274}, 2024.

\bibitem{zhu2024hyper}
D.~Zhu, H.~Huang, Z.~Huang, Y.~Zeng, Y.~Mao, B.~Wu, Q.~Min, and X.~Zhou,
  ``Hyper-connections,'' \emph{ArXiv preprint arXiv:2409.19606}, 2024.

\bibitem{DBLP:conf/acl/DaiS0HMSW23}
D.~Dai, Y.~Sun, L.~Dong, Y.~Hao, S.~Ma, Z.~Sui, and F.~Wei, ``Why can {GPT}
  learn in-context? language models secretly perform gradient descent as
  meta-optimizers,'' in \emph{Findings of the Association for Computational
  Linguistics}, A.~Rogers, J.~L. Boyd{-}Graber, and N.~Okazaki, Eds., 2023.

\bibitem{DBLP:conf/acl/WuT24}
H.~Wu and K.~Tu, ``Layer-condensed {KV} cache for efficient inference of large
  language models,'' in \emph{Proceedings of the 62nd Annual Meeting of the
  Association for Computational Linguistics}, L.~Ku, A.~Martins, and
  V.~Srikumar, Eds., 2024.

\bibitem{DBLP:conf/iclr/VenkataramananG24}
S.~Venkataramanan, A.~Ghodrati, Y.~M. Asano, F.~Porikli, and A.~Habibian,
  ``Skip-attention: Improving vision transformers by paying less attention,''
  in \emph{International Conference on Learning Representations}, 2024.

\bibitem{xie2023residual}
S.~Xie, H.~Zhang, J.~Guo, X.~Tan, J.~Bian, H.~H. Awadalla, A.~Menezes, T.~Qin,
  and R.~Yan, ``Residual: Transformer with dual residual connections,''
  \emph{ArXiv preprint arXiv:2304.14802}, 2023.

\bibitem{nguyen2023mitigating}
T.~Nguyen, T.~Nguyen, and R.~Baraniuk, ``Mitigating over-smoothing in
  transformers via regularized nonlocal functionals,'' \emph{Advances in Neural
  Information Processing Systems}, 2023.

\bibitem{zhou2024value}
Z.~Zhou, T.~Wu, Z.~Jiang, and Z.~Lan, ``Value residual learning for alleviating
  attention concentration in transformers,'' \emph{ArXiv preprint
  arXiv:2410.17897}, 2024.

\bibitem{zhu2024survey}
X.~Zhu, J.~Li, Y.~Liu, C.~Ma, and W.~Wang, ``A survey on model compression for
  large language models,'' \emph{Transactions of the Association for
  Computational Linguistics}, 2024.

\bibitem{hooper2024kvquant}
C.~Hooper, S.~Kim, H.~Mohammadzadeh, M.~W. Mahoney, S.~Shao, K.~Keutzer, and
  A.~Gholami, ``K{V}quant: Towards 10 million context length {LLM} inference
  with {KV} cache quantization,'' \emph{Advances in Neural Information
  Processing Systems}, 2024.

\bibitem{sheng2023flexgen}
Y.~Sheng, L.~Zheng, B.~Yuan, Z.~Li, M.~Ryabinin, B.~Chen, P.~Liang, C.~R{\'e},
  I.~Stoica, and C.~Zhang, ``Flexgen: High-throughput generative inference of
  large language models with a single {GPU},'' in \emph{International
  Conference on Machine Learning}, 2023.

\bibitem{katharopoulos2020transformers}
A.~Katharopoulos, A.~Vyas, N.~Pappas, and F.~Fleuret, ``Transformers are
  {RNN}s: Fast autoregressive transformers with linear attention,'' in
  \emph{International Conference on Machine Learning}, 2020.

\bibitem{child2019generating}
R.~Child, S.~Gray, A.~Radford, and I.~Sutskever, ``Generating long sequences
  with sparse transformers,'' \emph{ArXiv preprint arXiv:1904.10509}, 2019.

\bibitem{wang2021spatten}
H.~Wang, Z.~Zhang, and S.~Han, ``Spatten: Efficient sparse attention
  architecture with cascade token and head pruning,'' in \emph{2021 IEEE
  International Symposium on High-Performance Computer Architecture (HPCA)},
  2021.

\bibitem{zhang2023h2o}
Z.~Zhang, Y.~Sheng, T.~Zhou, T.~Chen, L.~Zheng, R.~Cai, Z.~Song, Y.~Tian,
  C.~R{\'e}, C.~Barrett \emph{et~al.}, ``H2{O}: Heavy-hitter oracle for
  efficient generative inference of large language models,'' \emph{Advances in
  Neural Information Processing Systems}, 2023.

\bibitem{DBLP:conf/iclr/XiaoTCHL24}
G.~Xiao, Y.~Tian, B.~Chen, S.~Han, and M.~Lewis, ``Efficient streaming language
  models with attention sinks,'' in \emph{International Conference on Learning
  Representations}, 2024.

\bibitem{liu2023scissorhands}
Z.~Liu, A.~Desai, F.~Liao, W.~Wang, V.~Xie, Z.~Xu, A.~Kyrillidis, and
  A.~Shrivastava, ``Scissorhands: Exploiting the persistence of importance
  hypothesis for {LLM} {KV} cache compression at test time,'' \emph{Advances in
  Neural Information Processing Systems}, 2023.

\bibitem{kang2024gear}
H.~Kang, Q.~Zhang, S.~Kundu, G.~Jeong, Z.~Liu, T.~Krishna, and T.~Zhao, ``Gear:
  An efficient {KV} cache compression recipe for near-lossless generative
  inference of {LLM},'' \emph{ArXiv preprint arXiv:2403.05527}, 2024.

\bibitem{shazeer2019fast}
N.~Shazeer, ``Fast transformer decoding: One write-head is all you need,''
  \emph{ArXiv preprint arXiv:1911.02150}, 2019.

\bibitem{wu2024systematic}
Y.~Wu, H.~Wu, and K.~Tu, ``A systematic study of cross-layer kv sharing for
  efficient llm inference,'' \emph{ArXiv preprint arXiv:2410.14442}, 2024.

\bibitem{yang2024kvsharer}
Y.~Yang, Z.~Cao, Q.~Chen, L.~Qin, D.~Yang, H.~Zhao, and Z.~Chen, ``K{V}sharer:
  Efficient inference via layer-wise dissimilar {KV} cache sharing,''
  \emph{ArXiv preprint arXiv:2410.18517}, 2024.

\bibitem{zuhri2024mlkv}
Z.~M.~K. Zuhri, M.~F. Adilazuarda, A.~Purwarianti, and A.~F. Aji, ``{MLKV}:
  Multi-layer key-value heads for memory efficient transformer decoding,''
  \emph{ArXiv preprint arXiv:2406.09297}, 2024.

\bibitem{yang2024lossless}
Z.~Yang, J.~Han, K.~Wu, R.~Xie, A.~Wang, X.~Sun, and Z.~Kang, ``Lossless kv
  cache compression to 2\%,'' \emph{ArXiv preprint arXiv:2410.15252}, 2024.

\bibitem{von2023transformers}
J.~Von~Oswald, E.~Niklasson, E.~Randazzo, J.~Sacramento, A.~Mordvintsev,
  A.~Zhmoginov, and M.~Vladymyrov, ``Transformers learn in-context by gradient
  descent,'' in \emph{International Conference on Machine Learning}, 2023.

\bibitem{ahn2023transformers}
K.~Ahn, X.~Cheng, H.~Daneshmand, and S.~Sra, ``Transformers learn to implement
  preconditioned gradient descent for in-context learning,'' \emph{Advances in
  Neural Information Processing Systems}, 2023.

\bibitem{fu2024transformers}
D.~Fu, T.-q. Chen, R.~Jia, and V.~Sharan, ``Transformers learn to achieve
  second-order convergence rates for in-context linear regression,''
  \emph{Advances in Neural Information Processing Systems}, 2024.

\bibitem{DBLP:conf/acl/WuV24}
X.~Wu and L.~R. Varshney, ``A meta-learning perspective on transformers for
  causal language modeling,'' in \emph{Findings of the Association for
  Computational Linguistics}, L.~Ku, A.~Martins, and V.~Srikumar, Eds., 2024.

\bibitem{DBLP:conf/acl/YuanGD0ZZXWW0WR25}
J.~Yuan, H.~Gao, D.~Dai, J.~Luo, L.~Zhao, Z.~Zhang, Z.~Xie, Y.~Wei, L.~Wang,
  Z.~Xiao, Y.~Wang, C.~Ruan, M.~Zhang, W.~Liang, and W.~Zeng, ``Native sparse
  attention: Hardware-aligned and natively trainable sparse attention,'' in
  \emph{Proceedings of the 63rd Annual Meeting of the Association for
  Computational Linguistics}, W.~Che, J.~Nabende, E.~Shutova, and M.~T.
  Pilehvar, Eds., 2025.

\bibitem{DBLP:conf/emnlp/Dong0DZMLXX0C0S24}
Q.~Dong, L.~Li, D.~Dai, C.~Zheng, J.~Ma, R.~Li, H.~Xia, J.~Xu, Z.~Wu, B.~Chang,
  X.~Sun, L.~Li, and Z.~Sui, ``A survey on in-context learning,'' in
  \emph{Conference on Empirical Methods in Natural Language Processing},
  Y.~Al{-}Onaizan, M.~Bansal, and Y.~Chen, Eds., 2024.

\bibitem{olsson2022context}
C.~Olsson, N.~Elhage, N.~Nanda, N.~Joseph, N.~DasSarma, T.~Henighan, B.~Mann,
  A.~Askell, Y.~Bai, A.~Chen \emph{et~al.}, ``In-context learning and induction
  heads,'' \emph{ArXiv preprint arXiv:2209.11895}, 2022.

\bibitem{DBLP:conf/iclr/MahankaliH024}
A.~V. Mahankali, T.~Hashimoto, and T.~Ma, ``One step of gradient descent is
  provably the optimal in-context learner with one layer of linear
  self-attention,'' in \emph{International Conference on Learning
  Representations}, 2024.

\bibitem{zhang2024trained}
R.~Zhang, S.~Frei, and P.~L. Bartlett, ``Trained transformers learn linear
  models in-context,'' \emph{Journal of Machine Learning Research}, 2024.

\bibitem{liu2024deepseek}
A.~Liu, B.~Feng, B.~Wang, B.~Wang, B.~Liu, C.~Zhao, C.~Dengr, C.~Ruan, D.~Dai,
  D.~Guo \emph{et~al.}, ``Deepseek-{V}2: A strong, economical, and efficient
  mixture-of-experts language model,'' \emph{ArXiv preprint arXiv:2405.04434},
  2024.

\bibitem{gao2020pile}
L.~Gao, S.~Biderman, S.~Black, L.~Golding, T.~Hoppe, C.~Foster, J.~Phang,
  H.~He, A.~Thite, N.~Nabeshima \emph{et~al.}, ``{OpenWebText}2 dataset, as
  part of `the pile: An 800{GB} dataset of diverse text for language
  modeling','' \emph{ArXiv preprint arXiv:2101.00027}, 2020.

\bibitem{raffel2020exploring}
C.~Raffel, N.~Shazeer, A.~Roberts, K.~Lee, S.~Narang, M.~Matena, Y.~Zhou,
  W.~Li, and P.~J. Liu, ``Exploring the limits of transfer learning with a
  unified text-to-text transformer,'' \emph{Journal of machine learning
  research}, 2020.

\bibitem{eckart1936approximation}
C.~Eckart and G.~Young, ``The approximation of one matrix by another of lower
  rank,'' \emph{Psychometrika}, 1936.

\bibitem{edelman1988eigenvalues}
A.~Edelman, ``Eigenvalues and condition numbers of random matrices,''
  \emph{SIAM Journal on Matrix Analysis and Applications}, 1988.

\bibitem{DBLP:conf/nips/ChenZZ24}
X.~Chen, L.~Zhao, and D.~Zou, ``How transformers utilize multi-head attention
  in in-context learning? {A} case study on sparse linear regression,''
  \emph{Advances in Neural Information Processing Systems}, 2024.

\bibitem{DBLP:conf/iclr/LoshchilovH19}
I.~Loshchilov and F.~Hutter, ``Decoupled weight decay regularization,'' in
  \emph{International Conference on Learning Representations}, 2019.

\bibitem{dao2022flashattention}
T.~Dao, D.~Fu, S.~Ermon, A.~Rudra, and C.~R{\'e}, ``Flashattention: Fast and
  memory-efficient exact attention with io-awareness,'' \emph{Advances in
  Neural Information Processing Systems}, 2022.

\bibitem{DBLP:conf/icml/Zhao0CWAT24}
J.~Zhao, Z.~Zhang, B.~Chen, Z.~Wang, A.~Anandkumar, and Y.~Tian, ``Ga{L}ore:
  Memory-efficient {LLM} training by gradient low-rank projection,'' in
  \emph{International Conference on Machine Learning}, 2024.

\bibitem{su2024roformer}
J.~Su, M.~Ahmed, Y.~Lu, S.~Pan, W.~Bo, and Y.~Liu, ``Ro{F}ormer: Enhanced
  transformer with rotary position embedding,'' \emph{Neurocomputing}, 2024.

\bibitem{DBLP:conf/iclr/AlainB17}
G.~Alain and Y.~Bengio, ``Understanding intermediate layers using linear
  classifier probes,'' in \emph{International Conference on Learning
  Representations}.\hskip 1em plus 0.5em minus 0.4em\relax OpenReview.net,
  2017.

\bibitem{DBLP:conf/iclr/0004HMWXS024}
T.~Guo, W.~Hu, S.~Mei, H.~Wang, C.~Xiong, S.~Savarese, and Y.~Bai, ``How do
  transformers learn in-context beyond simple functions? {A} case study on
  learning with representations,'' in \emph{International Conference on
  Learning Representations}, 2024.

\bibitem{dong2021attention}
Y.~Dong, J.-B. Cordonnier, and A.~Loukas, ``Attention is not all you need: Pure
  attention loses rank doubly exponentially with depth,'' in
  \emph{International conference on machine learning}, 2021.

\bibitem{DBLP:conf/iclr/WangZCW22}
P.~Wang, W.~Zheng, T.~Chen, and Z.~Wang, ``Anti-oversmoothing in deep vision
  transformers via the fourier domain analysis: From theory to practice,'' in
  \emph{International Conference on Learning Representations}, 2022.

\bibitem{wu2022tinyvit}
K.~Wu, J.~Zhang, H.~Peng, M.~Liu, B.~Xiao, J.~Fu, and L.~Yuan, ``Tiny{V}it:
  Fast pretraining distillation for small vision transformers,'' in
  \emph{European Conference on Computer Vision}.\hskip 1em plus 0.5em minus
  0.4em\relax Springer, 2022.

\bibitem{DBLP:conf/iclr/DosovitskiyB0WZ21}
A.~Dosovitskiy, L.~Beyer, A.~Kolesnikov, D.~Weissenborn, X.~Zhai,
  T.~Unterthiner, M.~Dehghani, M.~Minderer, G.~Heigold, S.~Gelly, J.~Uszkoreit,
  and N.~Houlsby, ``An image is worth 16x16 words: Transformers for image
  recognition at scale,'' in \emph{International Conference on Learning
  Representations}, 2021.

\end{thebibliography}

\newpage
\appendix

\section{Theoretical Analysis} \label{app proofs}
\subsection{Results and Proofs}
In this section, we give a formal statement of Theorem \ref{thm main} and the detailed proof. We first restate the problem settings and assumptions.

We consider a set of in-context examples $\{(\mathbf{x}_i, \mathbf{y}_i)\}_i$, where $\mathbf{x}_i, \mathbf{y}_i \in \mathbb{R}^d$ and $\mathbf{y}_i=W^*\mathbf{x}_i+\varepsilon_i$. Here, $W^*$ is a shared coefficient matrix and $\varepsilon_i$ is the  additive Gaussian noise for each sequence. The training input to the model is given by
\[
E = (\mathbf{x}_1, \mathbf{y}_1, \cdots, \mathbf{y}_n, \mathbf{x}_{n+1})\in \mathbb{R}^{d\times (2n+1)},
\] where each $\mathbf{x}_i$ is sampled independently from $\mathcal{N}(0,I_{d})$, $W^*\sim \mathcal{N}(W_0,I_{d^2})$ for some deterministic $W_0\in \mathbb{R}^{d\times d}$, and $\varepsilon_i \sim \mathcal{N}(0, \sigma^2 I_d)$ are i.i.d. Gaussian noise. For the model, we follow the common simplifications \cite{von2023transformers, DBLP:conf/iclr/MahankaliH024, ahn2023transformers} considering a two-layer linear attention Transformer with two heads per layer and no residual connections. This simplified model preserves the key component—multi-head attention—and is sufficient to highlight the differences in information flow between the standard MHA Transformer and SkipV1Former. We train the model using only the in-context tokens $(\mathbf{x}_1, \mathbf{y}_1, \cdots, \mathbf{y}_n)$. 

For the MHA Transformer, the output of each transformer block is 
\begin{equation} \label{eq TF vanilla}
    \mathrm{TF}_l(E) = \sum_{h=1}^2 W_V^{l,h} E P \left(E^\top (W_K^{l,h})^\top W_Q^{l,h} E \right), \quad l = 1, 2,
\end{equation}
where $P = \begin{pmatrix}
    I_{2n} & \mathbf{0} \\
    \mathbf{0} & 0
\end{pmatrix}\in \mathbb{R}^{(2n+1)\times (2n+1)},W_Q^{l,h}, W_K^{l,h}, W_V^{l,h} \in \mathbb{R}^{d \times d}$ for $l,h=1,2,$ and we omit $W_O$ since it can be incorporated into $W_V^{l,h}$, making no difference in expressivity analysis. 

Denote by $\tau$ the distribution over input sequences $E$. For SkipV1Former with 1 skip head, the output can be expressed as:
\begin{equation} \label{eq TF skip}
    \begin{aligned}
    E^{\mathrm{out}} = 
    &\ W_V^{2,1}  \mathrm{TF}_1(E) P \left( \mathrm{TF}_1(E) (W_K^{2,1})^\top W_Q^{2,2} \mathrm{TF}_1(E) \right)  \\
    &+ W_V^{2,2}  E P \left( \mathrm{TF}_1(E) (W_K^{2,2})^\top W_Q^{2,2} \mathrm{TF}_1(E) \right),
    \end{aligned}
\end{equation}
where $\mathrm{TF}_1(\cdot)$ is defined as in Eq.~\eqref{eq TF vanilla}. The training loss function is:
\begin{equation}\label{eq loss}
    L_{k}\left(\{W_Q^{l,h}, W_{K}^{l,h}, W_{V}^{l,h}\}, H\right) = \mathbb{E}_{\tau} \left\| H E^{\mathrm{out}}_{:,-1} - \mathbf{y}_{n+1} \right\|_F^2,
\end{equation}
where $k=1$ refers to the loss of standard Transformer, and $k=2$ refers to that of SkipV1Former.

As stated in Section \ref{sec first layer important}, we admit the copying mechanism and introduce the following assumption:
\begin{assumption} \label{assump copy}
    The output of the first attention block is 
    $$\mathrm{TF}_1(E) = \left(\mathbf{x}_1, \begin{pmatrix}
        A\mathbf{x}_1 \\
        B\mathbf{y}_1
    \end{pmatrix},\cdots, \mathbf{x}_n,\begin{pmatrix}
        A\mathbf{x}_n \\
        B\mathbf{y}_n
    \end{pmatrix}, \begin{pmatrix}
        A\mathbf{x}_{n+1} \\
        \mathbf{0}
    \end{pmatrix}\right), $$ where $A\in \mathbb{R}^{a\times d}, B\in \mathbb{R}^{(d-a)\times d}$ and $0\leq a \leq d$. 
\end{assumption}
Assumption \ref{assump copy} is close to real scenarios, as the embedding dimension of attention is kept the same as that of token embedding \cite{vaswani2017attention}. Although we do not consider causal mask explicitly, we assume that the copying mechanism is conducted on only $\mathbf{y}_i$s with previous tokens, maintaining the causal structure. In Assumption \ref{assump copy}, the information of the raw samples are compressed as $A\mathbf{x}$ and $B\mathbf{y}$.
Intuitively, MHA can only solve the compressed regression problem with sample-label pairs $(A\mathbf{x}, B\mathbf{y})$, whereas SkipV1Former can access the uncompressed $\x$ and $\y$ and solve the original problem and attain lower loss.

Under Assumption \ref{assump copy}, we drop the superscript $l$ in Eq.~\eqref{eq TF vanilla} and \eqref{eq TF skip}. The formal version of our main theorem is stated as follows:

\begin{theorem} [Formal] \label{thm app formal main}
    Assume that $\sigma_d(W_0)>2\sqrt{d}\sigma$ and $d$ is sufficiently large.
    Under Assumption \ref{assump copy}, there exists a constant $c>0$ independent of $A,B,W^h_Q, W^h_K, W^h_V,H$, such that 
    $$\min_{W^h_Q, W^h_K, W^h_V,H} L_2(\{W^h_Q, W^h_K, W^h_V\}, H) \leq \min_{W^h_Q, W^h_K, W^h_V,H}L_1(\{W^h_Q, W^h_K, W^h_V\}, H) - c,$$
    where $L_1$ and $L_2$ are defined as in Eq.~\eqref{eq loss}.
\end{theorem}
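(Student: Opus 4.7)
My plan is to prove the gap by separately upper bounding $\min L_2$ via an explicit construction and lower bounding $\min L_1$ via a Bayes-risk argument based on information loss in the copying step; subtracting yields the constant $c$. The source of the gap is that SkipV1Former's second head reads values from the raw input $E$, preserving uncompressed $\y_i$, whereas every MHA realization must factor both the value and the attention pathways through $\mathrm{TF}_1(E)$, which under Assumption~\ref{assump copy} retains only the compressed pairs $(A\x_i,B\y_i)$.

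\textbf{Step 1: upper bound on $\min L_2$.} I would nullify the first head ($W_V^{2,1}=0$) and use the second head to implement the idealized one-step gradient descent $\hat\y_{n+1}=\tfrac{\eta}{n}\sum_i\y_i\x_i^\top\x_{n+1}$ of Eq.~\eqref{eq one step GD}. Setting $W_V^{2,2}$ appropriately lets head 2's value at even position $2i$ equal $\y_i$ (read directly from $E$); choosing $W_Q^{2,2},W_K^{2,2}$ so that their action on $\mathrm{TF}_1(E)$ produces attention scores proportional to $\x_i^\top\x_{n+1}$ (exactly when $A$ is full rank, via a pseudo-inverse of $A$ with a controlled residual otherwise), and using the output projection $H$ to scale and cancel odd-position contributions, recovers the ideal GD estimator. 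A standard Gaussian expectation calculation over $W^*,\{\x_i\},\{\varepsilon_i\}$, optimized over $\eta$, then produces a closed-form upper bound on $\min L_2$ in terms of $W_0,\sigma,d,n$.

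\textbf{Step 2: lower bound on $\min L_1$.} For MHA the output $H E^{\mathrm{out}}_{:,-1}$ is measurable with respect to the sigma-algebra $\mathcal{F}_{A,B}=\sigma(A\x_{n+1},\{(A\x_i,B\y_i)\}_{i=1}^n)$, since every factor in Eq.~\eqref{eq TF vanilla} depends only on the columns of $\mathrm{TF}_1(E)$. The Bayes bound therefore gives
\begin{equation*}
\min L_1 \;\geq\; \mathbb{E}\bigl\|\y_{n+1}-\mathbb{E}[\y_{n+1}\mid\mathcal{F}_{A,B}]\bigr\|^2.
\end{equation*}
Under the joint Gaussianity of $(W^*,\{\x_i\},\{\varepsilon_i\})$, this conditional expectation is a linear projection whose posterior covariance can be computed via a Schur-complement identity applied to the joint covariance restricted to $\mathcal{F}_{A,B}$, yielding a closed-form lower bound as a function of $A,B,W_0,\sigma,d,n$.

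\textbf{Main obstacle: uniformity in $(A,B)$.} The nontrivial step is showing the closed forms of Steps 1 and 2 differ by a universal constant $c>0$. The crucial observation is that since $(\x_i,\y_i)\in\mathbb{R}^{2d}$ cannot be losslessly packed into a $d$-dimensional token, for every $(A,B)$ with row counts summing to $d$ there is a subspace $\mathcal{V}\subseteq\mathbb{R}^d$ of dimension at least $\mathrm{rank}(A)$ on which $\y_i$ is unobserved in the compressed representation. The assumption $\sigma_d(W_0)>2\sqrt{d}\sigma$ guarantees that the signal $W_0\x_{n+1}$ has a component in $\mathcal{V}$ of magnitude bounded below by an absolute constant, so the conditional variance of $\y_{n+1}$ given $\mathcal{F}_{A,B}$ inherits a lower bound independent of $A,B$. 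Boundary cases $\mathrm{rank}(A)\in\{0,d\}$ follow by direct calculation; the intermediate regime I would handle by simultaneously diagonalizing $A^\top A$ and $B^\top B$ and tracking the suppressed spectral mass through the Schur-complement formula, which is where I expect the technical heart of the argument to lie.
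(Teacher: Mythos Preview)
Your approach diverges from the paper's in the lower-bound step, and the divergence hides a real difficulty that your outline does not address.

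\textbf{The mismatch between your two bounds.} Your Step~1 upper bound on $\min L_2$ is the one-step gradient-descent risk, i.e., the loss of the estimator $\eta^\ast YX^\top \x_{n+1}$; in the paper's notation this equals $C+C_3$, where $C$ is the full-information Bayes risk (ridge regression) and $C_3>0$ is the strictly positive excess of one-step GD over the Bayes-optimal predictor. Your Step~2 lower bound on $\min L_1$ is the Bayes risk conditional on $\mathcal F_{A,B}$, which is the infimum over \emph{all} measurable predictors given the compressed data. For your subtraction to produce a positive $c$, you need the partial-information Bayes risk to exceed $C+C_3$, i.e., you need the information loss $\Delta(A,B)=\text{Bayes}(\mathcal F_{A,B})-\text{Bayes}(\text{full})$ to dominate the parametric slack $C_3$ uniformly in $(A,B)$. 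Nothing in your outline establishes this: the Bayes predictor under $\mathcal F_{A,B}$ can exploit ridge-type adaptive processing that one-step GD cannot, so for choices of $(A,B)$ that retain most of the relevant directions (and for moderate $n$ where $C_3$ is non-negligible) it is entirely plausible that $\text{Bayes}(\mathcal F_{A,B})<C+C_3$, at which point your lower bound simply undershoots your upper bound. The obstacle you identify (uniformity in $(A,B)$) is downstream of this; the prior question is whether the sign is right at all.

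\textbf{How the paper avoids this.} The paper never leaves the parametric class. It rewrites both losses as $C+\mathbb E_D\|\,\cdot\,-\hat W_D\|_F^2$, splits off the odd-in-$Y$ part $N_3$, and then (Lemma~4) recenters around the one-step-GD target $\Lambda YX^\top$, so that both $L_1$ and $L_2$ become $C+C_3+\mathbb E_D\|N_3-\Lambda YX^\top\|_F^2$ plus nonnegative remainders. SkipV1Former can realize $N_3'=\Lambda YX^\top$ exactly (choosing $A=I$, $B=0$), giving $\min L_2=C+C_3$. For MHA, because $M^h$ factors through $\begin{pmatrix}A\\ 0\end{pmatrix}$, the expression $N_3(X,Y)$ has rank at most $a<d$, so Eckart--Young forces $\mathbb E_D\|N_3-\Lambda YX^\top\|_F^2\ge \mathbb E_D[\sigma_d^2(\Lambda YX^\top)]$, and the assumption $\sigma_d(W_0)>2\sqrt d\,\sigma$ is used to make this last quantity strictly positive. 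The gap therefore comes from a rank obstruction inside the linear-attention family, not from an information-theoretic bound over all estimators.

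\textbf{A smaller correctness issue.} Your sigma-algebra $\mathcal F_{A,B}=\sigma(A\x_{n+1},\{(A\x_i,B\y_i)\})$ is too small: under Assumption~\ref{assump copy} the odd columns of $\mathrm{TF}_1(E)$ are the uncompressed $\x_1,\dots,\x_n$, so the MHA output is measurable with respect to $\sigma(\{\x_i\}_{i=1}^n,\{B\y_i\}_{i=1}^n, A\x_{n+1})$. Enlarging the sigma-algebra only lowers the Bayes risk, which makes the comparison in the previous paragraph harder, not easier.
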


The proof follows the analysis framework in \cite{DBLP:conf/iclr/MahankaliH024}.
We begin by rewriting the loss function in a more convenient form. Define
$W^h=HW^h_V, M^h = (W^h_K)^\top W^h_Q\begin{pmatrix}
    A \\ \mathbf{0}
\end{pmatrix}$ 
and set
\[
G_{1,D} = \sum_{i=1}^n \mathbf{x}_i \mathbf{x}_i^\top + 
\begin{pmatrix}
A\mathbf{x}_i \mathbf{x}_i^\top A^\top & A\mathbf{x}_i \mathbf{y}_i^\top B^\top \\
B\mathbf{y}_i \mathbf{x}_i^\top A^\top & B\mathbf{y}_i \mathbf{y}_i^\top B^\top
\end{pmatrix}
\]
for each input sequence $D = (\x_1, \y_1, \cdots, \y_n)$. Then the loss of MHA Transformer can be simplified as
\begin{equation} \label{eq L1}
L_1(A, B, W, M) = \mathbb{E}_{\tau} \left\| \sum_{h=1}^2 W^h G_{1,D} M^h \mathbf{x}_{n+1} - \mathbf{y}_{n+1} \right\|_F^2.
\end{equation}
Similarly, loss of SkipV1Former can be expressed as
\begin{equation} \label{eq L2}
L_2(A, B, W, M) = \mathbb{E}_{\tau} \left\| \sum_{h=1}^2 W^h G_{2,D}^h M^h \mathbf{x}_{n+1} - \mathbf{y}_{n+1} \right\|_F^2,
\end{equation}
where $G^1_{2,D} = G_{1,D}$ and $G^2_{2,D} = \sum_{i=1}^n \mathbf{x}_i \mathbf{x}_i^\top + \mathbf{y}_i \begin{pmatrix}
    A\mathbf{x}_i \\ 
    B\mathbf{y}_i
\end{pmatrix}^\top$.

To prove the theorem, we present the following lemmas.
\begin{lemma} \label{lemma 1}
    Let $\hat{W}_D$ be the solution to ridge regression with regularization strength $\sigma^2$ on the exemplars $(\x_1,\cdots,\y_n,\x_{n+1})$. For any matrices $A$ and $B$, there exists a constant $C>0$ independent of $A, B,W, M$, such that 
    $$L_k(A, B,W, M) = C+\mathbb{E}_{D} \left\|\sum_{h=1}^2 W^hG_{i,D}M^h - \hat{W}_D \right\|_F^2, \quad k=1,2.$$
\end{lemma}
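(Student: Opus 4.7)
The plan is to establish Lemma 1 by a standard Bayes-risk decomposition. First, I would observe that each predictor has the form $M_k(D)\mathbf{x}_{n+1}$, where $M_k(D) := \sum_{h=1}^2 W^h G_{k,D}^h M^h$ is a matrix depending only on the context $D=(\mathbf{x}_1,\mathbf{y}_1,\ldots,\mathbf{y}_n)$ and the trainable parameters (with the convention $G_{1,D}^h \equiv G_{1,D}$). Writing $\mathbf{y}_{n+1}=W^*\mathbf{x}_{n+1}+\varepsilon_{n+1}$, the losses in Eqs.~\eqref{eq L1} and \eqref{eq L2} both reduce to $L_k = \mathbb{E}\|M_k(D)\mathbf{x}_{n+1}-\mathbf{y}_{n+1}\|^2$.

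Second, under the Gaussian prior $W^*\sim\mathcal{N}(W_0,I_{d^2})$ and Gaussian noise $\varepsilon_i\sim\mathcal{N}(0,\sigma^2 I_d)$, the conditional posterior mean $\mathbb{E}[W^*\mid D]$ coincides with the ridge regression estimator $\hat{W}_D$ at regularization strength $\sigma^2$. Since $\hat{W}_D\mathbf{x}_{n+1}$ is the $L^2$-projection of $\mathbf{y}_{n+1}$ onto the subspace of $\sigma(D,\mathbf{x}_{n+1})$-measurable random variables, and $M_k(D)\mathbf{x}_{n+1}$ lies in that subspace, I would split
\begin{align*}
L_k = \mathbb{E}\|M_k(D)\mathbf{x}_{n+1}-\hat{W}_D\mathbf{x}_{n+1}\|^2 + \mathbb{E}\|\hat{W}_D\mathbf{x}_{n+1}-\mathbf{y}_{n+1}\|^2,
\end{align*}
where the cross term vanishes by the tower property since $\mathbb{E}[\mathbf{y}_{n+1}-\hat{W}_D\mathbf{x}_{n+1}\mid D,\mathbf{x}_{n+1}]=0$. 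The second term is a constant $C>0$ depending only on the data distribution, not on $(A,B,W,M)$.

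Finally, because $\mathbf{x}_{n+1}\sim\mathcal{N}(0,I_d)$ is independent of $D$ with unit covariance, integrating out $\mathbf{x}_{n+1}$ first gives $\mathbb{E}_{\mathbf{x}_{n+1}}\|(M_k(D)-\hat{W}_D)\mathbf{x}_{n+1}\|^2=\|M_k(D)-\hat{W}_D\|_F^2$, and the outer expectation over $D$ yields the claimed identity. The main technical step I anticipate is the very first one: verifying that after substituting Assumption~\ref{assump copy} into Eqs.~\eqref{eq TF vanilla} and \eqref{eq TF skip}, the last column of the second-layer output factors exactly as $M_k(D)\mathbf{x}_{n+1}$ with the specific matrices $G_{1,D}$ and $G_{2,D}^h$ given in the paper. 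This factorization hinges on the causal mask $P$ (which zeros out the $(n+1)$-th column) together with the zero block in the compressed representation of $\mathbf{x}_{n+1}$ under Assumption~\ref{assump copy}, which together decouple $\mathbf{x}_{n+1}$ from the sums over in-context indices $i\leq n$. Once this linear-in-$\mathbf{x}_{n+1}$ structure is made explicit, the Bayesian posterior-mean identity and orthogonality close the proof.
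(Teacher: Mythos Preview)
Your proposal is correct and follows essentially the same route as the paper: decompose the loss around the Bayes-optimal linear predictor $\hat W_D\mathbf{x}_{n+1}$, kill the cross term (the paper does this via the first-order optimality condition of $g(U)=\mathbb{E}_{\mathbf{y}_{n+1}}[\|U\mathbf{x}_{n+1}-\mathbf{y}_{n+1}\|^2\mid D,\mathbf{x}_{n+1}]$, which is exactly your projection/tower-property argument in different language), and then integrate out $\mathbf{x}_{n+1}\sim\mathcal{N}(0,I_d)$ to turn the quadratic form into a Frobenius norm. The factorization $M_k(D)\mathbf{x}_{n+1}$ that you flag as the main technical input is precisely what the paper records in Eqs.~\eqref{eq L1}--\eqref{eq L2} just before the lemma, so you may take it as given rather than as something to re-derive.
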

\begin{proof}
    We denote 
    \[
    W = (W^1, W^2), \quad 
    \tilde{G}_{1,D} = \begin{pmatrix} G_{1,D} & \\ & G_{1,D} \end{pmatrix}, \quad 
    \tilde{G}_{2,D} = \begin{pmatrix} G^1_{2,D} & \\ & G^2_{2,D} \end{pmatrix}, \quad
    M = \begin{pmatrix} M^1 \\ M^2 \end{pmatrix}.
\]
Using the law of total expectation, the loss can be written as 
\begin{equation*}
    L_k(A, B, W, M) = 
    \mathbb{E}_{D, \mathbf{x}_{n+1}} \mathbb{E}_{\mathbf{y}_{n+1}} \left[ 
        \| \mathbf{y}_{n+1} - W \tilde{G}_{k,D} M \mathbf{x}_{n+1} \|_F^2 
        \,\middle|\, D, \mathbf{x}_{n+1} 
    \right],\quad k=1,2.
\end{equation*}
    Define the function
    \[
    g(U) = \mathbb{E}_{\mathbf{y}_{n+1}} \left[
        \| U \mathbf{x}_{n+1} - \mathbf{y}_{n+1} \|_F^2 
        \,\middle|\, D, \mathbf{x}_{n+1}
    \right].
    \]
Since $g(U)$ is a convex function, the minimizer of the function exists and we denote (any of) it by $\hat{W}_D$. The first-order optimality condition gives
\[
    \mathbf{0} = \nabla_U g(U) 
    = \mathbb{E}_{\mathbf{y}_{n+1}} \left[
        2 (U \mathbf{x}_{n+1} - \mathbf{y}_{n+1}) \mathbf{x}_{n+1}^\top 
        \,\middle|\, D, \mathbf{x}_{n+1}
    \right].
\]
    For any matrix $U$, by taking the dot product of both sides with $U-\hat{W}_D$, we obtain
    \begin{equation} \label{eq optimal condition}
        \mathbb{E}_{\mathbf{y}_{n+1}} \left[
            2 (U \mathbf{x}_{n+1} - \mathbf{y}_{n+1}) \mathbf{x}_{n+1}^\top (U - \hat{W}_D)^\top 
            \,\middle|\, D, \mathbf{x}_{n+1}
        \right] = \mathbf{0}.
    \end{equation}
    Therefore, we have the following decomposition
    \begin{align*}
    &\mathbb{E}_{\mathbf{y}_{n+1}} \left[
        \| \mathbf{y}_{n+1} - W \tilde{G}_{k,D} M \mathbf{x}_{n+1} \|_F^2 
        \,\middle|\, D, \mathbf{x}_{n+1}
    \right] \\
    =&\ \mathbb{E}_{\mathbf{y}_{n+1}} \left[
        \| \mathbf{y}_{n+1} - \hat{W}_D \mathbf{x}_{n+1} 
        + \hat{W}_D \mathbf{x}_{n+1} - W \tilde{G}_{k,D} M \mathbf{x}_{n+1} \|_F^2 
        \,\middle|\, D, \mathbf{x}_{n+1}
    \right] \\
    =&\ \mathbb{E}_{\mathbf{y}_{n+1}} \left[
        \| \mathbf{y}_{n+1} - \hat{W}_D \mathbf{x}_{n+1} \|_F^2 
        + \| \hat{W}_D \mathbf{x}_{n+1} - W \tilde{G}_{k,D} M \mathbf{x}_{n+1} \|_F^2 
        \,\middle|\, D, \mathbf{x}_{n+1}
    \right] \\
    &\quad + 2 \mathbb{E}_{\mathbf{y}_{n+1}} \left[
        (\mathbf{y}_{n+1} - \hat{W}_D \mathbf{x}_{n+1}) 
        (\hat{W}_D \mathbf{x}_{n+1} - W \tilde{G}_{k,D} M \mathbf{x}_{n+1})^\top 
        \,\middle|\, D, \mathbf{x}_{n+1}
    \right].
    \end{align*}
    By setting $U = W \tilde{G}_{k,D} M$ in Eq.~\eqref{eq optimal condition}, the last term vanishes. Hence,
    \begin{align*}
    L_k(A, B, W, M) 
    &= \mathbb{E}_{D, \mathbf{x}_{n+1}} 
    \left[
        \mathbb{E}_{\mathbf{y}_{n+1}} \left[
            \| \hat{W}_D \mathbf{x}_{n+1} - W \tilde{G}_{k,D} M \mathbf{x}_{n+1} \|_F^2 
            \,\middle|\, D, \mathbf{x}_{n+1}
        \right]
    \right] + C,
    \end{align*}
    where
    \[
        C = \mathbb{E}_{D, \mathbf{x}_{n+1}} \mathbb{E}_{\mathbf{y}_{n+1}} \left[
            \| \mathbf{y}_{n+1} - \hat{W}_D \mathbf{x}_{n+1} \|_F^2 
            \,\middle|\, D, \mathbf{x}_{n+1}
        \right]
    \]
    is independent of $A,B,W,M$.
    
    Finally, since $\mathbf{x}_{n+1} \sim \mathcal{N}(0, I_d)$, we have
    \[
        L_k(A, B, W, M) = C + \mathbb{E}_D \left\| \hat{W}_D - W \tilde{G}_{k,D} M \right\|_F^2.
    \]
\end{proof}
We denote \( W^h = (W_1^h, W_2^h) \) and \( M^h = (M_1^h, M_2^h) \), where \( W_1^h, M_1^h \in \mathbb{R}^{d \times a} \) and \( W_2^h, M_2^h \in \mathbb{R}^{d \times (d - a)} \). With these notations, the following lemma provides a further simplification of the loss for both the standard MHA Transformer and the SkipV1Former.

\begin{lemma} \label{lemma 2}
For the standard Transformer, there exists a constant \( C_1(A, B, W, M) > 0 \) such that
\[
L_1(A, B, W, M) = C + C_1 + \mathbb{E}_{D} \left\| H_1 - \hat{W}_D \right\|_F^2,
\]
where
\[
H_1 = \sum_{h=1}^2 \left( W_1^h A X Y^\top B^\top M_2^h + W_2^h B Y X^\top A^\top M_1^h \right).
\]
Similarly, for the SkipV1Former, there exists a constant \( C_2(A, B, W, M) > 0 \) such that
\[
L_2(A, B, W, M) = C + C_2 + \mathbb{E}_{D} \left\| H_2 - \hat{W}_D \right\|_F^2,
\]
where
\[
H_2 = \sum_{h=1}^2 W^h Y X^\top A^\top M_1^h.
\]
\end{lemma}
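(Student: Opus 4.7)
The plan is to prove Lemma 2 by a direct block-matrix expansion of $\sum_h W^h G_{k,D} M^h$ starting from the identity established in Lemma 1. Under the splits $W^h = (W_1^h, W_2^h)$ and $M^h = (M_1^h, M_2^h)$, and writing $\tilde X = (AX;\, BY)$ from Assumption \ref{assump copy}, the MHA kernel $G_{1,D} = XX^\top + \tilde X\tilde X^\top$ decomposes into four blocks plus the auxiliary $W^h XX^\top M^h$ piece: the two off-diagonal blocks $W_1^h AXY^\top B^\top M_2^h$ and $W_2^h BYX^\top A^\top M_1^h$ are the $(X,Y)$-cross terms and collect to $H_1$, while the remaining three pieces (the $XX^\top$, $AXX^\top A^\top$, and $BYY^\top B^\top$ contributions) form a residual $R_1$. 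The SkipV1Former kernel $G_{2,D}^2 = XX^\top + (YX^\top A^\top,\ YY^\top B^\top)$ only yields a single \emph{uncompressed} cross term $W^h YX^\top A^\top M_1^h$ per skip head, so these sum to $H_2$; all other contributions, including the MHA-style cross terms coming from head $1$ via $G_{2,D}^1 = G_{1,D}$, are folded into $R_2$.

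With $\sum_h W^h G_{k,D} M^h = H_k + R_k$ so identified, expanding the squared norm gives
\begin{equation*}
\|H_k+R_k-\hat W_D\|_F^2 \;=\; \|H_k-\hat W_D\|_F^2 + 2\langle H_k-\hat W_D,\,R_k\rangle + \|R_k\|_F^2.
\end{equation*}
Substituting into Lemma 1 and taking the $D$-expectation yields
\begin{equation*}
L_k - C \;=\; \mathbb{E}_D\|H_k-\hat W_D\|_F^2 + \underbrace{\mathbb{E}_D\!\bigl[\|R_k\|_F^2 + 2\langle H_k-\hat W_D,\,R_k\rangle\bigr]}_{=:\ C_k(A,B,W,M)}.
\end{equation*}
By construction $C_k$ is a (non-random) expectation in $D$, hence a deterministic function of the model parameters alone, matching the "constant" language of the lemma.

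The main obstacle is establishing the stated positivity $C_k > 0$. The cleanest route is to prove the cross expectation $\mathbb{E}_D\langle H_k-\hat W_D,\,R_k\rangle$ vanishes, so that $C_k$ collapses to $\mathbb{E}_D\|R_k\|_F^2 \ge 0$, with strict positivity following from the non-degeneracy hypotheses carried into Theorem \ref{thm app formal main}. Both $H_k$ and the Bayes estimator $\hat W_D = (YX^\top + \sigma^2 W_0)(XX^\top + \sigma^2 I)^{-1}$ are linear in $Y$, while the $Y$-dependent pieces of $R_k$ are at most quadratic in $Y$; hence the cross inner product reduces to cubic-and-lower moments of $Y$. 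Using the decomposition $Y = W^\ast X + E$ with independent Gaussian $W^\ast$ and $E$, applying Isserlis' theorem to these Gaussian moments, and invoking the Bayes orthogonality $\mathbb{E}_D[(\hat W_D - W^\ast)\,f(X)] = 0$ valid for any $X$-measurable $f$, one pairs up the resulting moments and checks the cancellation. I expect this Gaussian moment bookkeeping to be the most delicate piece of the argument, though each individual identity is elementary.
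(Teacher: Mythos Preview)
Your decomposition for $L_1$ matches the paper's exactly: your $(H_1,R_1)$ are the paper's $(N_3,\,N_1+N_2)$. However, the cross-term vanishing needs none of the Isserlis machinery you propose. The paper simply observes that $\hat W_D=YX^\top(XX^\top+\sigma^2 I)^{-1}$ and $H_1$ are \emph{odd} (degree-one) functions of $Y$, while $R_1$ is \emph{even} (degree zero or two) in $Y$; hence $\langle H_1-\hat W_D,\,R_1\rangle$ is odd in $Y$ and its expectation vanishes by symmetry. This one-line parity argument replaces the ``most delicate piece'' of your plan and immediately yields $C_1=\mathbb{E}_D\|R_1\|_F^2$.

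For $L_2$ your plan has a real gap. By folding the MHA-style cross terms of head $1$, namely $W_1^1AXY^\top B^\top M_2^1+W_2^1BYX^\top A^\top M_1^1$, into $R_2$, you leave $R_2$ with terms that are \emph{odd} in $Y$. Then $\langle H_2-\hat W_D,\,R_2\rangle$ contains an odd$\times$odd (hence even-in-$Y$) piece whose expectation is generically nonzero; for instance $\mathbb{E}_D\langle\hat W_D,\,W_2^1BYX^\top A^\top M_1^1\rangle$ is quadratic in $Y$ and has no reason to cancel for general $A,B,W,M$. Your Isserlis calculation will therefore not produce the claimed vanishing, and you cannot conclude $C_2>0$ along this route. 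The paper instead takes as split point the full odd-in-$Y$ part
\[
N_3'(X,Y)=W_1^1AXY^\top B^\top M_2^1+W_2^1BYX^\top A^\top M_1^1+W^2YX^\top A^\top M_1^2,
\]
so that the residual $N_1'+N_2'$ is purely even in $Y$ and the same parity argument gives $C_2=\mathbb{E}_D\|N_1'+N_2'\|_F^2$. Note that $N_3'$ does not literally coincide with the lemma's displayed $H_2$; the paper's proof (and the subsequent proof of Theorem~\ref{thm app formal main}) actually works with $N_3'$.
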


\begin{proof}
    From Lemma~\ref{lemma 1} and the definition of $\tilde{G}_{1,D}$ ( $G_{1,D}$), we have
    \begin{equation} \label{eq L1 after lem1}
    \begin{aligned}
    L_1(A, B, W, M) &=C + \mathbb{E}_D \left\| \hat{W}_D - W \tilde{G}_{1,D} M \right\|_F^2 \\
    &=C + \mathbb{E}_D \left\|\hat{W}_D - \sum_{h=1}^2 W^h \left( XX^\top + \begin{pmatrix}
    AX \\ BY
    \end{pmatrix}
    \begin{pmatrix}
    X^\top A^\top & Y^\top B^\top
    \end{pmatrix} \right) M^h \right\|_F^2.
    \end{aligned}
    \end{equation}
    For any fixed \( A, B, W, M \), define
    \begin{equation} \label{eq N123}
        \begin{aligned}
            N_1(X) &= \sum_{h=1}^2 \left( W^h XX^\top M^h + W_1^h A X X^\top A^\top M_1^h \right), \\
    N_2(Y) &= \sum_{h=1}^2 W_2^h B Y Y^\top B^\top M_2^h, \\
    N_3(X, Y) &= \sum_{h=1}^2 \left( W_1^h A X Y^\top B^\top M_2^h + W_2^h B Y X^\top A^\top M_1^h \right).
        \end{aligned}
    \end{equation}
    Then Eq.~\eqref{eq L1 after lem1} becomes
    \[
    L_1(A, B, W, M) = C + \mathbb{E}_D \left\| \hat{W}_D - N_1(X) - N_2(Y) - N_3(X, Y) \right\|_F^2.
    \]
    From the definition of $\hat{W}_D$, it is known that \( \hat{W}_D = Y X^\top (X X^\top + \sigma I_d)^{-1} \) almost everywhere. Using this identity, we observe that \( \hat{W}_D - N_3(X, Y) \) is an odd-degree function in \( Y \), while \( N_1(X) + N_2(Y) \) is an even-degree function in \( Y \). Hence,
    \[
    L_1(A, B, W, M) = C + \mathbb{E}_D \left\| \hat{W}_D - N_3(X, Y) \right\|_F^2 + \mathbb{E}_D \left\| N_1(X) + N_2(Y) \right\|_F^2.
    \]
    The first part of the lemma follows by setting \( H_1 = N_3 \) and \( C_1 = \mathbb{E}_D \left\| N_1(X) + N_2(Y) \right\|_F^2 \).

    Similarly, for the SkipV1Former, we can decompose $W\tilde{G}_{2,D}M$ into components of odd and even degree in \( Y \).  A direct calculation yields
    \[
    L_2(A, B, W, M) = C + \mathbb{E}_D \left\| \hat{W}_D - N_3'(X, Y) \right\|_F^2 + \mathbb{E}_D \left\| N_1'(X) + N_2'(Y) \right\|_F^2,
    \]
    where
    \begin{equation} \label{eq N'123}
        \begin{aligned}
            N_1'(X) &= \sum_{h=1}^2 \left( W^h X X^\top M^h + W_1^1 A X X^\top A^\top M_1^1 \right), \\
    N_2'(Y) &= W_2^1 B Y Y^\top B^\top M_2^1 + W^2 Y Y^\top B^\top M_2^2, \\
    N_3'(X, Y) &= W_1^1 A X Y^\top B^\top M_2^1 + W_2^1 B Y X^\top A^\top M_1^1 + W^2 Y X^\top A^\top M_1^2.
        \end{aligned}
    \end{equation}
    The second part of the lemma follows by taking \( H_2 = N_3' \) and \( C_2 = \mathbb{E}_D \left\| N_1'(X) + N_2'(Y) \right\|_F^2 \).

\end{proof}

Next, for any matrices \( \tilde{W}^h = (\tilde{W}_1^h, \tilde{W}_2^h) \in \mathbb{R}^{d \times 2d} \) and \( \tilde{M}^h = \begin{pmatrix} \tilde{M}_1^h \\ \tilde{M}_2^h \end{pmatrix} \in \mathbb{R}^{2d \times d} \) for \( h = 1,2 \), we define
\[
h(\tilde{W}, \tilde{M}) = \sum_{h=1}^2 \tilde{W}^h \begin{pmatrix} & XY^\top \\ YX^\top & \end{pmatrix} \tilde{M}^h.
\]
We focus on 
\begin{equation} \label{eq tilde L}
    \tilde{L}(\tilde{W}, \tilde{M}) = \mathbb{E}_D \left\| \hat{W}_D - h(\tilde{W}, \tilde{M}) \right\|_F^2 
\end{equation}
in the following analysis. We first present Lemmas 3 and 4 from \cite{DBLP:conf/iclr/MahankaliH024} and treat them as one lemma as follows.

\begin{lemma}[Lemma 3 and 4 in \cite{DBLP:conf/iclr/MahankaliH024}]\label{lemma 3 and 4 in Mahan}
There exists scalars $t_1$ and $t_2$, such that for any $i$, it holds that 
\[
\mathbb{E}_{D}\left[X(Y^T)_{:,i}(Y)_{i,:}X^T\right] = t_1I_{d^2}, \quad 
\mathbb{E}_D\left[X(Y^T)_{:,i}(\hat{W}_D)_{i,:}\right] = t_2I_{d^2},
\]
where $Y_{i,:}$ denotes the $i$-th row  and $Y_{:,i}$ denotes the $i$-th column of $Y$.
Moreover, for any $i$, by setting 
\[
\lambda_i = \frac{\mathbb{E}_D (\hat{W}_D)_{i,:}X(Y^T)_{:,i}}{\mathbb{E}_D(Y)_{i,:}X^TX(Y^T)_{:,i}},
\]
it holds that
\[
\mathbb{E}_{D}\left[\lambda_i X(Y^T)_{:,i}(Y)_{i,:}X^T - X(Y^T)_{:,i}(\hat{W}_D)_{i,:}\right] = \mathbf{0}.
\]
\end{lemma}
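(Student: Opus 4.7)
The plan is to prove the three claims via direct second- and fourth-moment calculations, exploiting independence and (implicit) isotropy of $\x_j$, $\varepsilon_j$, and $W^*$. The scalar-times-identity structure of $t_1 I_d$ and $t_2 I_d$ strongly suggests that all dependence on the index $i$ washes out; the third claim is then a trace check against the first two.

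For the first identity I would decompose $X(Y^T)_{:,i} = XX^T w + X\varepsilon^{(i)}$, with $w = (W^*)_{i,:}^T$ and $\varepsilon^{(i)} = (\varepsilon_{i,1},\dots,\varepsilon_{i,n})^T$. Expanding the product $X(Y^T)_{:,i}(Y)_{i,:}X^T$ yields four summands; the two mixed terms vanish in expectation because $\varepsilon^{(i)}$ has mean zero and is independent of everything else, leaving $\mathbb{E}_X\mathbb{E}_w[XX^T w w^T XX^T] + \sigma^2 \mathbb{E}_X[XX^T]$. Substituting $\mathbb{E}_w[ww^T] = I_d$ reduces the first piece to $\mathbb{E}_X[(XX^T)^2]$, which is a scalar multiple of $I_d$ by a standard Isserlis fourth-moment computation. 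Combining gives $t_1 I_d$ with $t_1 = n(d + n + 1 + \sigma^2)$.

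For the second identity I would substitute the ridge closed form $\hat{W}_D = Y X^T (XX^T + \sigma^2 I_d)^{-1}$ and run the same expansion. After integrating out $w$ and $\varepsilon^{(i)}$, the expression collapses to
\[
\mathbb{E}_X\!\left[(XX^T)^2 (XX^T + \sigma^2 I_d)^{-1}\right] + \sigma^2 \mathbb{E}_X\!\left[XX^T (XX^T + \sigma^2 I_d)^{-1}\right],
\]
and the algebraic identity $(XX^T)^2 + \sigma^2 XX^T = XX^T (XX^T + \sigma^2 I_d)$ cancels the inverse, leaving $\mathbb{E}_X[XX^T] = n I_d$ and hence $t_2 = n$. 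It is precisely the matching of the ridge regularizer to the noise variance $\sigma^2$ that drives this cancellation. For the third claim, taking traces of the first two identities gives $\mathbb{E}_D[(Y)_{i,:} X^T X (Y^T)_{:,i}] = d t_1$ and $\mathbb{E}_D[(\hat{W}_D)_{i,:} X (Y^T)_{:,i}] = d t_2$, so $\lambda_i = t_2 / t_1$ and the advertised difference vanishes identically.

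The main obstacle, and the subtlety I would flag, is that the scalar form $t_1 I_d$ requires $\mathbb{E}_w[w w^T] \propto I_d$, which in turn forces $W_0 = \mathbf{0}$. For general $W_0$, $\mathbb{E}_w[w w^T] = I_d + (W_0)_{i,:}^T (W_0)_{i,:}$ carries a rank-one perturbation that depends on $i$, which would break both the scalar structure and the $i$-independence of $t_1$. I would therefore either work under the centered prior $W_0 = \mathbf{0}$ (matching the original formulation in which this lemma was proved), or, if the ambient theorem really requires $W_0 \neq \mathbf{0}$, reformulate $t_1$ and $t_2$ as $i$-dependent and propagate the rank-one correction through the subsequent reductions.
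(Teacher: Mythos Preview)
The paper does not prove this lemma at all; it is quoted verbatim from \cite{DBLP:conf/iclr/MahankaliH024} and used as a black box in the proofs of Lemma~\ref{lemma 4} and Theorem~\ref{thm app formal main}. Your direct moment computation is a correct and self-contained verification, and your values $t_1=n(d+n+1+\sigma^2)$ and $t_2=n$ check out (modulo what appears to be a typo in the statement: the right-hand sides should be $t_1 I_d$ and $t_2 I_d$, not $I_{d^2}$, since $X(Y^\top)_{:,i}(Y)_{i,:}X^\top$ is a $d\times d$ matrix).

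Your flag about $W_0$ is not a minor caveat but a genuine inconsistency worth highlighting. The cited source proves the lemma under a centered prior $W^*\sim\mathcal{N}(0,I)$, and indeed the scalar-times-identity form fails for $W_0\neq 0$: the term $\mathbb{E}_X[XX^\top m_i m_i^\top XX^\top]$ with $m_i=((W_0)_{i,:})^\top$ contributes a rank-one piece $n(n+1)\,m_i m_i^\top$ that is neither scalar nor $i$-independent. Yet the paper's Theorem~\ref{thm app formal main} explicitly requires $\sigma_d(W_0)>2\sqrt{d}\,\sigma$, so $W_0\neq 0$. The downstream arguments (Lemma~\ref{lemma 4} and the rank-gap bound in the main theorem) invoke this lemma as stated, so either the theorem should be restricted to $W_0=0$ (in which case the singular-value condition becomes vacuous and the positivity of $c$ needs a different argument), or the lemma and everything built on it must be redone with the $i$-dependent rank-one correction you outline. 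Your proposal identifies the right fix; the paper does not address it.
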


\begin{lemma}\label{lemma 4}
There exists a constant $C_3 > 0$ which is independent of $W, M$, such that 
\[
\tilde{L}(\tilde{W},\tilde{M}) = C_3 + \mathbb{E}_D \left\| \Lambda YX^T - h(\tilde{W},\tilde{M}) \right\|_F^2,
\]
where $\Lambda = \diag\{\lambda_1,\cdots,\lambda_d\}$ and
\[
\lambda_i = \frac{\mathbb{E}_D (\hat{W}_D)_{i,:}X(Y^T)_{:,i}}{\mathbb{E}_D(Y)_{i,:}X^TX(Y^T)_{:,i}}.
\]
\end{lemma}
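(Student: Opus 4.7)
The plan is to show that $\tilde L(\tilde W,\tilde M)$ and $\mathbb{E}_D \|\Lambda YX^\top - h(\tilde W,\tilde M)\|_F^2$ differ only by a constant in $(\tilde W,\tilde M)$. Expanding each squared Frobenius norm as $\|Z\|^2 - 2\langle Z,h\rangle + \|h\|^2$ and cancelling the common $\|h\|_F^2$ term gives
\[
\tilde L(\tilde W,\tilde M) - \mathbb{E}_D \|\Lambda YX^\top - h\|_F^2 = \mathbb{E}_D\bigl[\|\hat W_D\|_F^2 - \|\Lambda YX^\top\|_F^2\bigr] - 2\,\mathbb{E}_D\langle \hat W_D - \Lambda YX^\top,\, h(\tilde W,\tilde M)\rangle.
\]
The first bracket is manifestly independent of $(\tilde W,\tilde M)$; setting $C_3 := \mathbb{E}_D[\|\hat W_D\|_F^2 - \|\Lambda YX^\top\|_F^2]$ reduces the lemma to the orthogonality identity $\mathbb{E}_D\langle \hat W_D - \Lambda YX^\top,\, h(\tilde W,\tilde M)\rangle = 0$ for every $(\tilde W,\tilde M)$.

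Next I would substitute the explicit form $h(\tilde W,\tilde M) = \sum_{h=1}^{2}[\tilde W^h_1 XY^\top\tilde M^h_2 + \tilde W^h_2 YX^\top\tilde M^h_1]$ and invoke bilinearity to reduce the orthogonality to two pointwise claims: for any conformable matrices $U, V$,
\[
\mathbb{E}_D\langle \hat W_D - \Lambda YX^\top,\, U XY^\top V\rangle = 0 \quad\text{and}\quad \mathbb{E}_D\langle \hat W_D - \Lambda YX^\top,\, U YX^\top V\rangle = 0.
\]
Writing each trace out coordinate-wise and decomposing $\hat W_D - \Lambda YX^\top$ row-by-row (valid since ridge regression decouples across the rows of $Y$, so the $i$-th row of $\hat W_D$ depends on $Y$ only through $Y_{i,:}$), the expectation splits into diagonal contributions (where the row index of the difference-factor matches the row index appearing in $h$) and off-diagonal cross terms. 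The diagonal contributions vanish exactly by the defining property of $\lambda_i$ supplied by Lemma~\ref{lemma 3 and 4 in Mahan}, namely
\[
\mathbb{E}_D\bigl[\lambda_i X Y_{i,:}^\top Y_{i,:} X^\top - X Y_{i,:}^\top(\hat W_D)_{i,:}\bigr] = \mathbf{0},
\]
which, after contracting with the arbitrary $U, V$, kills each diagonal term.

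For the off-diagonal cross terms with distinct row indices $i\neq k$, I would invoke the conditional independence of $Y_{i,:}$ and $Y_{k,:}$ given $X$—a consequence of the independent rows of $W^*$ and the independent noise coordinates—to factorize the joint expectations, and combine this with the proportional-to-$I_d$ second-moment structure from the same lemma to force these cross-row contributions to vanish. The main obstacle is precisely this off-diagonal bookkeeping: while the diagonal case is a clean one-line application of the cited Mahankali identity, the off-diagonal case requires a more delicate argument using the row-wise structure of $\hat W_D$ together with the symmetries of the data distribution. Once the orthogonality is verified for both $UXY^\top V$ and $UYX^\top V$, linearity concludes the proof, and a Pythagorean-type identity—using that $\Lambda YX^\top$ itself lies in the image of $h$—yields $C_3 = \mathbb{E}_D\|\hat W_D - \Lambda YX^\top\|_F^2 \geq 0$ as required by the lemma's positivity statement.
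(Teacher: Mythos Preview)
Your proposal is correct and follows essentially the same route as the paper. The paper phrases the reduction as ``$\nabla_{(\tilde W,\tilde M)}\tilde L=\nabla_{(\tilde W,\tilde M)}\tilde L'$'' whereas you expand the squares directly and isolate the cross term $\mathbb{E}_D\langle \hat W_D-\Lambda YX^\top,\,h\rangle$, but these are the same computation: both boil down to verifying $\mathbb{E}_D[\hat W_D\,(\tilde M^h)^\top G'_D]=\mathbb{E}_D[\Lambda YX^\top\,(\tilde M^h)^\top G'_D]$ (and the companion identity with $\tilde W^h$), split into diagonal rows handled by Lemma~\ref{lemma 3 and 4 in Mahan} and off-diagonal rows handled via the conditional independence of $Y_{i,:}$ and $Y_{j,:}$ given $X$. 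Your final Pythagorean remark, expressing $C_3=\mathbb{E}_D\|\hat W_D-\Lambda YX^\top\|_F^2$ by noting that $\Lambda YX^\top$ lies in the range of $h$, is a small but genuine improvement over the paper's formula $C_3=\mathbb{E}_D\|\hat W_D\|_F^2-\mathbb{E}_D\|\Lambda YX^\top\|_F^2$, since it makes the nonnegativity claimed in the lemma immediate.
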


\begin{proof}
We denote $\tilde{L}'(W,M) = \mathbb{E}_D \left\| \Lambda YX^T - h(\tilde{W},\tilde{M}) \right\|_F^2$ and $G'_D = \begin{pmatrix} & XY^T \\ YX^T & \end{pmatrix}$ for simplicity. Our goal is to show that
\[
\nabla_{(\tilde{W},\tilde{M})} \tilde{L}(\tilde{W},\tilde{M}) = \nabla_{(\tilde{W},\tilde{M})}\tilde{L}'(\tilde{W},\tilde{M}).
\]

We first consider the gradient with respect to $\tilde{W}^1$:
\begin{align*}
\nabla_{\tilde{W}^1} \tilde{L}(\tilde{W},\tilde{M}) &= 2 \mathbb{E}_D \left[(h(\tilde{W},\tilde{M}) - \hat{W}_D)(\tilde{M}^1)^\top G'_D \right], \\
\nabla_{\tilde{W}^1} \tilde{L}'(\tilde{W},\tilde{M}) &= 2 \mathbb{E}_D \left[(h(\tilde{W},\tilde{M}) - \Lambda YX^T)(\tilde{M}^1)^\top G'_D \right].
\end{align*}
    Thus it suffices to prove
    \begin{equation}\label{eq grad equal}
    \mathbb{E}_D \left[ \hat{W}_D (\tilde{M}^1)^\top G'_D \right] = \mathbb{E}_D \left[ \Lambda YX^\top (\tilde{M}^1)^\top G'_D \right],
    \end{equation}
    which breaks down into:
    \begin{align}
    \mathbb{E}_D \left[ \hat{W}_D (\tilde{M}^1_1)^\top XY^\top \right] &= \mathbb{E}_D \left[ \Lambda YX^\top (\tilde{M}^1_1)^\top XY^\top \right], \label{eq XYT} \\
    \mathbb{E}_D \left[ \hat{W}_D (\tilde{M}^1_2)^\top YX^\top \right] &= \mathbb{E}_D \left[ \Lambda YX^\top (\tilde{M}^1_2)^\top YX^\top \right]. \label{eq YXT}
    \end{align}

    We first prove Eq.~\eqref{eq XYT}. 
    it suffices to show:
    \[
    \text{tr} \left( \mathbb{E}_D \left[ (\tilde{M}^1_1)^\top (XY^\top)_{:,j} (\hat{W}_D)_{i,:} \right] \right) 
    = \text{tr} \left( \mathbb{E}_D \left[ (\tilde{M}^1_1)^\top (XY^\top)_{:,j} (\Lambda YX^\top)_{i,:} \right] \right), \quad \forall i,j.
    \]
    When $i \neq j$, using the i.i.d. Gaussianity of $W$ and $\varepsilon_i$, we obtain:
    \[
    \mathbb{E}_D \left[ (Y^\top)_{:,j} Y_{i,:} \mid X \right] = \mathbf{0}.
    \] 
    Thus we have
    \[\mathbb{E}_D \left[ (XY^\top)_{:,j} (\hat{W}_D)_{i,:} \right] 
    = \mathbb{E}_D \left[ (XY^\top)_{:,j} (\Lambda YX^\top)_{i,:} \right],\quad \forall i\neq j.
    \]
    When $i=j$, by Lemma \ref{lemma 3 and 4 in Mahan}, we also have $$\mathbb{E}_D\left[(XY^\top)_{:,i}(\hat{W}_D)_{i,:}\right] = \mathbb{E}_D\left[(XY^\top)_{:,i}(\Lambda YX^\top)_{i,:}\right], \quad \forall i.$$
    Thus Eq.~\eqref{eq XYT} holds.

    For Eq.~\eqref{eq YXT}, it is equivalent to
    \[
    \mathbb{E}_D \left[(\hat{W}_D)_{i,:}(\tilde{M}^1_2)^\top YX^\top \right]= \mathbb{E}_D \left[(\Lambda YX^\top)_{i,:}(\tilde{M}^1_2)^\top YX^\top \right], \quad \forall i,
    \]
    which can be further written as
    \[
    \mathbb{E}_D \left[(\hat{W}_D)_{i,:}\sum_{j=1}^d((\tilde{M}^1_2)^\top)_{:,j}(YX^\top)_{j,:} \right]= \mathbb{E}_D \left[(\Lambda YX^\top)_{i,:}\sum_{j=1}^d((\tilde{M}^1_2)^\top)_{:,j}(YX^\top)_{j,:} \right], \quad \forall i.
    \]
    By the commutativity of the dot product, the above expressions can be rearranged as
    \[
    \mathbb{E}_D \left[\sum_{j=1}^d(\tilde{M}^1_2)_{j,:}(\hat{W}_D^\top)_{:,i}(YX^\top)_{j,:} \right]= \mathbb{E}_D \left[\sum_{j=1}^d(\tilde{M}^1_2)_{j,:}(XY^\top\Lambda)_{:,i}(YX^\top)_{j,:} \right].
    \]
    From the above discussion, we obtain
    \[
    \mathbb{E}_D\left[(\hat{W}_D^\top)_{:,i}(YX^\top)_{j,:}\right] = \mathbb{E}_D \left[ (XY^\top\Lambda)_{:,i}(YX^\top)_{j,:}\right], \quad \forall i,j,
    \]
    which completes the proof of Eq.~\eqref{eq YXT}.
    
    It remains to show that \(\nabla_{\tilde{M}^1}\tilde{L}(\tilde{W},\tilde{M}) = \nabla_{\tilde{M}^1}\tilde{L}'(\tilde{W},\tilde{M})\). Similarly, this is equivalent to showing
    \[
    \mathbb{E}_D\left[(G'_D)^\top(\tilde{W}^1)^\top\hat{W}_D \right]  = \mathbb{E}_D\left[(G'_D)^\top(\tilde{W}^1)^\top \Lambda YX^\top \right].
    \]
    This can be shown using essentially the same technique as above. In particular, by setting \(\tilde{W}=\mathbf{0}\), we have
    \[
    C_3 = \mathbb{E}_D\left\|\hat{W}_D \right\|_F^2 - \mathbb{E}_D \left \|\Lambda YX^\top \right\|_F^2.
    \]
\end{proof}

We now turn back to the proof of Theorem \ref{thm main}.
\begin{proof} [Proof of Theorem \ref{thm main}]
    By the definition of loss of standard MHA Transformer and SkipV1Former in Eq.~\eqref{eq L1} and \eqref{eq L2} and the form of $\tilde{L}(\tilde{W}, \tilde{M})$ in Eq.~\eqref{eq tilde L}, we immediately obtain
    \[
    \min_{A,B,W,M}\mathbb{E}_D \left\|\hat{W}_D - N_3'(X,Y) \right\|_F^2 \geq \min_{\tilde{W},\tilde{M}} \tilde{L}(\tilde{W},\tilde{M}).
    \]
    On the other hand, when
    \[
    A = I, B=\mathbf{0}, W^1 = -\frac{1}{2}\Lambda, W^2 = \Lambda, M^1=I, M^2=I,
    \]
    it follows from Eq.~\eqref{eq N'123} that
    \[
    N_1'(X) = \mathbf{0}, N_2'(Y) = \mathbf{0}, N_3'(X,Y) = \Lambda YX^\top.
    \]
    Therefore, from Lemma \ref{lemma 4}, we know that
    \[
    \min_{A,B,W,M} L_2(A,B,W,M) = C + C_3.
    \]
    
    For the MHA Transformer, when \(a = d\), implying \(B = \mathbf{0}\), we can conclude from Lemma \ref{lemma 2} that
    \[
    L_1(A,B,W,M) \geq C + \mathbb{E}_D\left\|\hat{W}_D \right\|_F^2 = C + C_3 + \mathbb{E}_D \left\|\Lambda YX^\top \right\|_F^2.
    \]
    When \(a < d\), noting from the definition that \(M^h = W_{QK}^h \begin{pmatrix} A \\ \mathbf{0} \end{pmatrix}\), the rank of \(N_3(X,Y)\) is at most \(a\). Moreover, by Lemma \ref{lemma 4} and the fact that \(N_3(X,Y)\) is a special case of \(h(\tilde{W}, \tilde{M})\), it holds
    \begin{align*}
        \mathbb{E}_D \left\|\hat{W}_D - N_3(X,Y)\right\| &= C_3 + \mathbb{E}_D \left\|\Lambda YX^\top - N_3(X,Y) \right\| \\
        &\geq C_3 + \mathbb{E}_D \|\Lambda YX^\top - R_a(\Lambda YX^\top) \|_F^2,
    \end{align*}
    where \(R_a(\Lambda YX^\top)\) denotes the best rank-\(a\) approximation to \(\Lambda YX^\top\) under the Frobenius norm. Denote by \(c = \mathbb{E}_D \|\Lambda YX^\top - R_a(\Lambda YX^\top) \|_F^2\), which is a constant independent of \(A,B,W,M\). Since \(C_2(A,B,W,M) > 0\), we have
    \[
    L_2(A,B,W,M) \geq C + C_3 + c = L_2'(A,B,W,M) + c.
    \]
    
    It remains to show that \(c > 0\). First, we show that \(|\lambda_i| > 0\). Recall that \(\hat{W}_D = YX^\top (XX^\top + \sigma^2I)^{-1}\), so we have
    \begin{align*}
        &\mathbb{E}_D\left[(\hat{W}_D)_{i,:} X (Y^\top)_{:,i}\right] = \mathbb{E}_D\left[Y_{i,:} X^\top (XX^\top + \sigma^2I)^{-1} X (Y^\top)_{:,i}\right], \\
        &= \mathbb{E}_D\left[W_{i,:} XX^\top (XX^\top + \sigma^2I)^{-1} XX^\top (W^\top)_{:,i}\right] + \mathbb{E}_D\left[\varepsilon_{i,:} X^\top (XX^\top + \sigma^2I)^{-1} X (\varepsilon^\top)_{:,i}\right].
    \end{align*}
    By diagonalizing \(XX^\top\), one can show that
    \[
    XX^\top (XX^\top + \sigma^2I)^{-1} XX^\top \succeq XX^\top - \sigma^2I.
    \]
    Thus, we can derive
    \[
    \mathbb{E}_X\left[XX^\top (XX^\top + \sigma^2I)^{-1} XX^\top\right] \succeq \mathbb{E}_X [XX^\top - \sigma^2I] = (d - \sigma^2)I.
    \]
    We also have \(X^\top (XX^\top + \sigma^2I)^{-1} X \succeq \mathbf{0}\). Combining these results, we conclude that \(|\lambda_i| > 0\). Now, since \(\Lambda\) is a full-rank matrix, we obtain
    \[
    \mathbb{E}_D \|\Lambda YX^\top - R_a(\Lambda YX^\top) \|_F^2 \geq \min_i |\lambda_i|^2 \cdot \mathbb{E}_D \|YX^\top - R_a(YX^\top)\|_F^2.
    \]
    By the Eckart-Young theorem \cite{eckart1936approximation}, we have
    \[
    \mathbb{E}_D \|YX^\top - R_a(YX^\top) \|_F^2 \geq \mathbb{E}_D \left|\sum_{i=a+1}^d \sigma_i(YX^\top)\right|^2 \geq \mathbb{E}_D \left[\sigma_d^2(YX^\top)\right],
    \]
    where \(\sigma_i(YX^\top)\) denotes the \(i\)-th singular value of \(YX^\top\). Since \(YX^\top = WXX^\top + \varepsilon X^\top\), by Weyl's inequality, we further have
    \[
    \left|\sigma_d(YX^\top) - \sigma_d(W^*XX^\top)\right| \leq \|\varepsilon X\|_2.
    \]
    For \(\sigma_d(W^*XX^\top)\), since \(\sigma_d(AB) \geq \sigma_d(A) \sigma_d(B)\), we have
    \[
    \sigma_d(W^*XX^\top) \geq \sigma_d(W^*)\sigma_d^2(X).
    \]
    By Edelman's Theorem \cite{edelman1988eigenvalues}, for \(X \sim \mathcal{N}(\mathbf{0},I_{d^2})\), it holds that \(\sigma_d(X) \sim \Theta\left(\frac{1}{\sqrt{d}}\right)\) when \(d\) is large. Thus, we have
    \[
    \sigma_d(W^*XX^\top) \geq \sigma_d(W^*)\Omega\left(d^{-1}\right),
    \]
    with high probability. Since $W \sim (U, \sigma I_{d^2})$,
    from the assumption that \(\sigma_d(W_0)>2\sqrt{d}\sigma\), we have
    \[
    \mathbb{E}_D\left[\sigma_d(W^*XX^\top) - \|\varepsilon X\|_2\right] > 0.
    \]
    Combining all the above results, we have shown that \(c > 0\), thus completing the proof.

\end{proof}

\subsection{Discussions}
In this section, we discuss the implications and relations to the analysis in other works of our theoretical findings. A high-level basis of our analysis centers on the interplay between Transformer architecture and the tasks it solves, in particular, auto-regressive tasks. Appendix \ref{app vit} preliminarily shows that SkipV1Former fails to boost performance on non-autoregressive tasks, indirectly support its design’s focus on auto-regressive. In addition, the importance of the first layer is also confirmed in \cite{DBLP:conf/nips/ChenZZ24}. They  similarly attribute in-context learning to a two-phase process: the first layer preprocesses context examples, while deeper layers perform iterative optimization steps.

In another related work, Zhou et al.\ \cite{zhou2024value} explain ResFormer’s (akin to our skip connection with first-layer Values) gains via its ability in alleviating attention concentration in multi-head attention. While it is not clear whether there is a theoretical connection between their explanations and our analysis, Zhou et al.\ also confirm first-layer information loss, which is consistent with our analysis of copying-induced compression.

Due to the essential difficulty in analyzing the behavior of multi-layer Transformers in ICL \cite{ahn2023transformers}, we
focus on a simplified two-layer model without residual connection in Theorem \ref{thm app formal main}-a tractable yet still instructive framework. While one may worry that omitting residual connections could affect deeper layers, several clues suggest that our analysis may still extend to standard Transformers. First, the ``linear copying'' mechanism in Assumption~\ref{assump copy} appears at the first-layer output of standard deep Transformers \emph{with} residual connections (see Figure~2 of \cite{von2023uncovering}). Second, for simplified one-layer Transformers with and without residuals, it has been shown, respectively in \cite{DBLP:conf/iclr/MahankaliH024} and \cite{ahn2023transformers}, that both models implement a single gradient-descent step on a least-squares objective at optimality. In other words, removing residual connections does not appear to impair the representational capacity of Transformers for in-context learning, suggesting that our simplification is reasonable to an extent. Regarding the two-layer simplified model, if we assume that a multi-layer Transformer performs one gradient-descent step per layer, our analysis naturally generalizes to deeper models by applying the same proof technique as in Theorem~\ref{thm app formal main}. Furthermore, Appendix~\ref{app linear probe} presents a linear-probe experiment on deeper models that provides empirical support for our analysis.

\section{Details and Additional Results for Section \ref{sec experiments}} \label{app exp details}
\subsection{GPT-2 Series} \label{app subsec gpt2 details}
\paragraph{Pretraining.}
We provide implementation details of our GPT-2 pretraining setup, following the training framework in \cite{pagliardini2024denseformer}. Table \ref{tab:app gpt2 details} presents the main hyperparameters of the model and optimizer used in training. The hidden dimension of the FFN is set as 4$\times$ the embedding size. All the models are trained using BF16 format and AdamW optimizer \cite{DBLP:conf/iclr/LoshchilovH19} with $\beta_1=0.9$, $\beta_2=0.95$ and weight decay $0.1$. We adopt a warmup ratio of 10\% and a cosine annealing schedule with a decay factor of 10\% of the peak learning rate. The batchsize is 120 and the sequence length is 1024. We also use dropout as 0.2 and flash attention \cite{dao2022flashattention}. The experiments are conducted on 6×48 GB A6000 GPUs.

\begin{table}[htb]
  \centering
  \begin{tabular}{ccccccc}
    \hline
    Params & Embed & Heads & Layers & Steps & Data Amount & Peak LR  \\
    \hline
    125M & 768 & 12 & 12 & 40K & 5B & 1e-3 \\
    200M & 1024 & 16 & 12 & 40K & 5B & 1e-3 \\
    355M & 1024 & 16 & 24 & 40K & 5B & 1e-3 \\
    550M & 1200 & 20 & 28 & 40K & 5B & 6e-4 \\
    \hline
  \end{tabular}
  \caption{Hyperparameters of GPT-2 models in pretraining. "Data Amount" refers to the total number of tokens used during pretraining.}
  \label{tab:app gpt2 details}
\end{table}

The architectures used in our experiments (DenseFormer, YOCO, CLA) are as follows: 
\begin{itemize}
    \item \textbf{DenseFormer}: DenseFormer enhances Transformer efficiency by introducing Depth Weighted Averaging (DWA) after each block. This mechanism computes a weighted average of the current and past representations, allowing the model to achieve lower perplexity without increasing its size. 
    \item \textbf{YOCO (You Only Cache Once)}: YOCO proposes a decoder-decoder architecture that caches key-value pairs only once. It consists of a self-decoder and a cross-decoder, where the self-decoder computes the first $\frac{L}{2}$ layers using efficient attention and the cross-decoder computes the rest $\frac{L}{2}$ layers by replacing the $K, V$ in each of last $\frac{L}{2}$ with $\frac{L}{2}$'s $K,V$. 
    \item \textbf{CLA (Cross-Layer Attention)}: CLA reduces the size of the key-value cache in Transformers by sharing Keys and Values for every $l$ adjacent layers. $l$ is typically set as $2$ or $3$.
\end{itemize}
To maintain the same KV-cache saving regime, we consider V-only YOCO, which does not incorporate efficient attention and only reuses the value in the last $\frac{L}{2}$ layers, and V-only CLA, which shares the value across every two adjacent layers, achieving a 25\% reduction in KV cache size compared to the baseline.

\paragraph{Uptraining.}
For uptraining GPT2-125M to GPT2-355M models, we perform a grid search over learning rates \{8e-5, 1e-4, 1.5e-4, 2e-4\}, where 1e-4 is the final learning rate of our pretraining. Among these, a learning rate of 1.5e-4 yields the best performance consistently across all three model sizes. Dropout is disabled during uptraining, while all other hyperparameters remain consistent with the pretraining configuration. To initialize the uptrained models, we also experiment with several strategies for converting the pretrained checkpoints:
\begin{itemize}
    \item \textbf{Mean VO:} Apply average pooling over every two head projections and their corresponding columns in $W_O$, and zero out the remaining columns.
    \item \textbf{Top V:} Select the $H'$ rows with the largest norm of head projections $W_V$ to be the new head projections.
    \item \textbf{Top VO:} Select the $H'$ head projections in $W_V$ and the $H'$ output columns in $W_O$ with the highest norms.
    \item \textbf{SVD:} Construct a best $H'$-rank approximation to the original $W_OW_V$ using SVD.
\end{itemize}
All the above approaches are conducted on every layer beyond the first. Empirically, all of the above strategies are outperformed by our method introduced in Section \ref{app subsec downstream}, as evidenced by their higher initial losses. The comparisons of the converted checkpoints' initial loss are shown in Figure \ref{fig:initial_loss_comparison}.

\begin{figure}[htbp]
    \centering
    \includegraphics[width=0.65\textwidth]{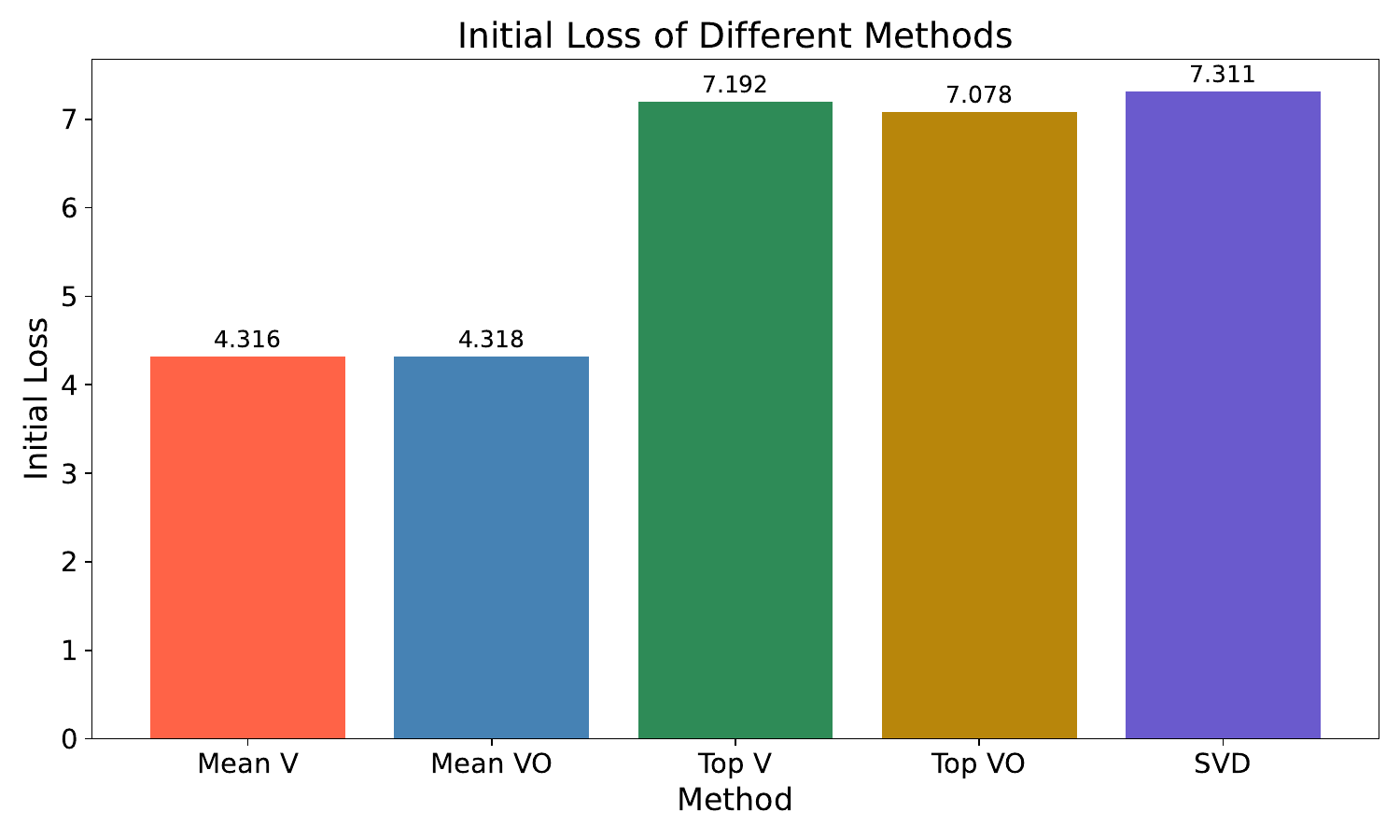}
    \caption{Comparison of initial loss values across different methods. The "Mean V" method corresponds to our baseline strategy described in the main text.}
    \label{fig:initial_loss_comparison}
\end{figure}

\subsection{LlaMA Series} \label{app subsec llama details}
\paragraph{Pretraining.}
We adopt the training framework from \cite{DBLP:conf/icml/Zhao0CWAT24} and provide the following details regarding the models and hyperparameters. Table \ref{tab:app llama details} summarizes the key hyperparameters of the models and optimizers used during training.  All the models are trained using BF16 format and AdamW optimizer with $\beta_1=0.9$, $\beta_2=0.95$ and no dropout. For the 60M - 350M models, we apply no weight decay and gradient clipping, with a peak learning rate 2e-3. For the 1B and 3B models, we set weight decay to 0.1 and gradient clipping to 1.0, with a peak learning rate of 1e-3 and 5e-4, respectively. A warmup ratio of 10\% is used, along with a cosine annealing schedule decaying to 10\% of the peak learning rate. The batchsize is 512 and sequence length is 1024 for all models. We also use Rotary Positional Embedding (RoPE) \cite{su2024roformer} and flash attention. The experiments are conducted on 8$\times$80 GB A800 GPUs.

\begin{table}[htb]
  \centering
  \begin{tabular}{cccccccc}
    \hline
    Params & Hidden & Intermediate & Heads & Layers & Steps & Data Amount & Peak LR  \\
    \hline
    60M & 512 & 1376 & 8 & 8 & 2.5K & 1.3B & 2e-3 \\
    130M & 768 & 2048 & 12 & 12 & 5K & 2.6B & 2e-3 \\
    350M & 1024 & 2736 & 16 & 24 & 15K & 7.8B & 2e-3 \\
    1.3B & 2048 & 5461 & 32 & 24 & 25K & 13.1B & 1e-3 \\
    3B & 2560 & 6848 & 32 & 32 & 30K & 15.7B & 5e-4 \\
    \hline
  \end{tabular}
  \caption{Hyperparameters of LlaMA models in pretraining. "Data Amount" refers to the total number of tokens used during pretraining.}
  \label{tab:app llama details}
\end{table}
We tune the learning rate from a set \{2e-3, 1e-3, 5e-4\} for all models and choose the best learning rate based on the validation loss of the baseline model. We observe that the influence of hyperparameters on SkipV1Former is nearly identical to that on the standard MHA Transformer.

To further assess robustness, we also pretrain LLaMA-1B models with different random seeds and perform a preliminary significance check. As shown in Table~\ref{tab:significance}, SkipV1Former consistently outperforms the baseline by a similar margin under both seeds, indicating that the observed gains are not seed-dependent. In addition, similar trends are observed on TinyLLaMA-1.1B (see Table \ref{tab:gqa4}). These results confirm the improvement of SkipV1Former on accuracy.

\begin{table}[tb]
\centering
\caption{Validation loss of LLaMA-1B models under different random seeds. Reported mean $\pm$ standard deviation is across three runs.}
\label{tab:significance}
\begin{tabular}{lcccc}
\toprule
Model & Seed 42 & Seed 100 & Seed 2025 & Mean $\pm$ Std \\
\midrule
Baseline & 2.723 & 2.753 & 2.744 & $2.740 \pm 0.015$ \\
SkipV1Former & \textbf{2.711} & \textbf{2.743} & \textbf{2.732} & \textbf{2.729} $\mathbf{\pm}$ \textbf{0.016} \\
\bottomrule
\end{tabular}
\end{table}

\paragraph{Combining with GQA and MLA.}
We first provide a brief introduction to GQA and MLA. GQA divides the query heads into \( G \) groups. Each group shares a single KV pair, reducing memory usage while maintaining performance. Mathematically, for each group \( g \), the attention output is computed as:
\[
\text{head}_{i, g} = \text{Attention}( W^Q_gQ_i,  W^K_gK,  W^V_gV),
\]
where \( Q_i \) is the \( i \)-th query head, \( W^Q_g, W^K_g, W^V_g \) are the projection matrices for the \( g \)-th group, \( K \) and \( V \) are the shared Key and Value for the group. The final output of GQA is computed in the same manner as that of MHA Transformer.

MLA introduces a low-rank compression of the Key and Value to reduce the memory footprint of the KV cache during inference. Instead of caching the full Keys and Values, MLA projects them into lower-dimensional latent spaces. Mathematically, let $X\in\mathbb{R}^{d\times n}$ be the input of an MLA layer. The inference of the layer can be expressed as
\begin{align*}
    C^{KV} &= W^{DKV} X, \\
    K^{C} &= W^{UK}C^{KV}, \\
    K^{R} &= \text{RoPE}(W^{KR}X), \\
    K &= \left[K^R, K^C\right], \\
    V^C &= W^{UV}C^{KV},
\end{align*}
where we omit the splitting of multi-heads. Here, $C^{KV}\in \mathbb{R}^{d_c \times n}$ is the compressed latent tensor for Keys and Values; $d_c \ll d$  indicates the KV compression dimension; $W^{DKV} \in \mathbb{R}^{d_c\times d}$ denotes the down-projection matrix; $W^{UK}, W^{UV}\in \mathbb{R}^{d\times d_c}$ are the up-projection matrices for the Keys and Values, respectively; $W^{KR}\in \mathbb{R}^{d_R \times d}$ is the matrix used to produce the decoupled key that carries Rotary Positional Embedding (RoPE); and $[\cdot, \cdot]$ dentoes concatenation. In MLA, only the $C^{KV}$ and $K^R$ need to be cached during generation.

We pretrain Base-GQA and Base-MLA, as well as their SkipV1Former counterparts, on LLaMA-350M with the OpenWebText2 dataset. The optimizer hyperparameters follow those used for GPT2-355M. In the experiments, GQA models share the same keys and values between every two heads for all layers, i.e., $G = H/2$. MLA models adopt $d_c=256$ and $d_R=32$.


For clarity, we detail how the total parameter counts in Table~\ref{table:combining-gqa-mla} are obtained.  
Each Transformer block contains an attention module and a feed-forward network (FFN).  
In the GPT2-355M setting, $d_\text{model}=1024$, head dimension $=64$, and FFN hidden size $d_\text{ff}=4096$.

\emph{GQA baseline.}  
With 24 layers and 16 query heads, we have $W_Q\in\mathbb{R}^{1024\times1024}$, $W_K,W_V\in\mathbb{R}^{1024\times512}$ (due to head sharing), and $W_O\in\mathbb{R}^{1024\times1024}$, giving $\sim$3.7M parameters per layer for attention.  
The FFN adds $\sim$8.4M per layer from $W_1\in\mathbb{R}^{4096\times1024}$ and $W_2\in\mathbb{R}^{1024\times4096}$.  
Over 24 layers this yields $\sim$101M (attention) + $\sim$201M (FFN), plus embeddings/norms for a total of 334.7M.  
SkipV1Former halves $W_V$ from layer~2 onward, i.e., $23\times(1024\times256)\approx6.0$M fewer parameters, giving 328.9M.

\emph{MLA baseline.}  
Here Values use a low-rank latent $d_c=256$, with $W_V^\text{down}\in\mathbb{R}^{1024\times256}$ and $W_V^\text{up}\in\mathbb{R}^{256\times1024}$ ($\sim$0.52M per layer), plus $W_Q,W_K,W_O$, so attention totals $\sim$3.9M per layer and FFN $\sim$8.4M. 
This gives about 93M + 201M = 337.4M overall.  
SkipV1Former-MLA halves $d_c$ to 128 from layer~2 onward, reducing $\sim$3.0M parameters, for a total of 334.4M.

As the above setting corresponds to the group-of-2 GQA scheme in LLaMA (v1), while more recent models such as LLaMA-2 and LLaMA-3 employ a group-of-4 configuration, where keys and values are further shared across four heads. To align with this more realistic setting, we additionally conduct experiments on TinyLLaMA models with group-of-4 GQA. Specifically, we compare SkipV1Former against the baseline on TinyLLaMA-1.1B, and also evaluate smaller 315M and 125M TinyLLaMA variants with the same modification strategy as used for LLaMA-130M/350M with LLaMA-1.1B.

The results, summarized in Table~\ref{tab:gqa4}, show that SkipV1Former continues to outperform the baseline models by a margin comparable to that observed for GQA-2 (Table \ref{table:combining-gqa-mla}). This confirms that the effectiveness of cross-layer Value injection remains robust under more advanced GQA configurations. 

\begin{table}[tb]
\centering
\caption{Validation loss for TinyLLaMA models with GQA-4 configuration.}
\label{tab:gqa4}
\begin{tabular}{lccc}
\toprule
Model & 1.1B & 315M & 125M \\
\midrule
TinyLLaMA (baseline) & 2.750 & 2.936 & 3.420 \\
TinyLLaMA-SkipV1 & \textbf{2.742} & \textbf{2.916} & \textbf{3.370} \\
\bottomrule
\end{tabular}
\end{table}

\paragraph{KV Caching Mechanism in SkipV1Former.}
During autoregressive decoding, SkipV1Former differs from the standard Transformer in how Value tensors are cached and retrieved. 

\emph{Standard Transformer.} For each new token $x_t \in \mathbb{R}^d$, the cached Value tensor is $V[1:t-1] \in \mathbb{R}^{bs \times H \times (t-1) \times d}$, which stores the Values of tokens $x_1,\ldots,x_{t-1}$. The Value of $x_t$ is simply appended along the sequence dimension, producing $V[1:t]$.

\emph{SkipV1Former.} In contrast, SkipV1Former maintains two sets of cached buffers: (i) the first half of the heads for every layer, $V^{l, 1:H/2}[1:t-1]$ for $l=1,\dots,L$, and (ii) the second-half heads of the first layer, $V^{1, H/2+1:H}[1:t-1]$. When processing $x_t$, these buffers are extended along the sequence dimension to include the new token, yielding $V^{l, 1:H/2}[1:t]$ and $V^{1, H/2+1:H}[1:t]$. They are then concatenated along the head dimension to form the full set of Values used in attention at step $t$. An illustration is provided in Figure \ref{fig:kv-cache}.

This design effectively reuses first-layer Value information across all deeper layers via per-head concatenation, which is absent in the standard Transformer. Conceptually, the difference is not in appending along the sequence dimension—which both models share—but in the additional cross-layer head stitching. While this introduces minor overhead compared to standard V-cache extension, it is an implementation detail rather than a kernel-level limitation, and can be optimized further with fused kernels.

\begin{figure}[tb]
  \centering
  \begin{subfigure}[b]{0.38\textwidth}
    \centering
    \includegraphics[width=\linewidth]{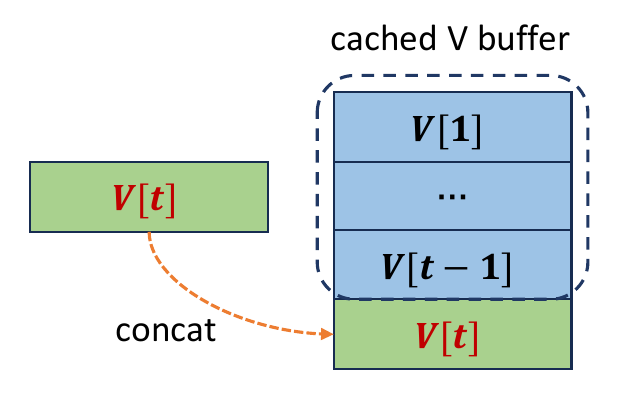}
    \caption{Standard Transformer: KV cache extension during decoding. Each new token’s Value is appended along the sequence dimension.}
    \label{fig:kv-standard}
  \end{subfigure}
  \hfill
  \begin{subfigure}[b]{0.61\textwidth}
    \centering
    \includegraphics[width=\linewidth]{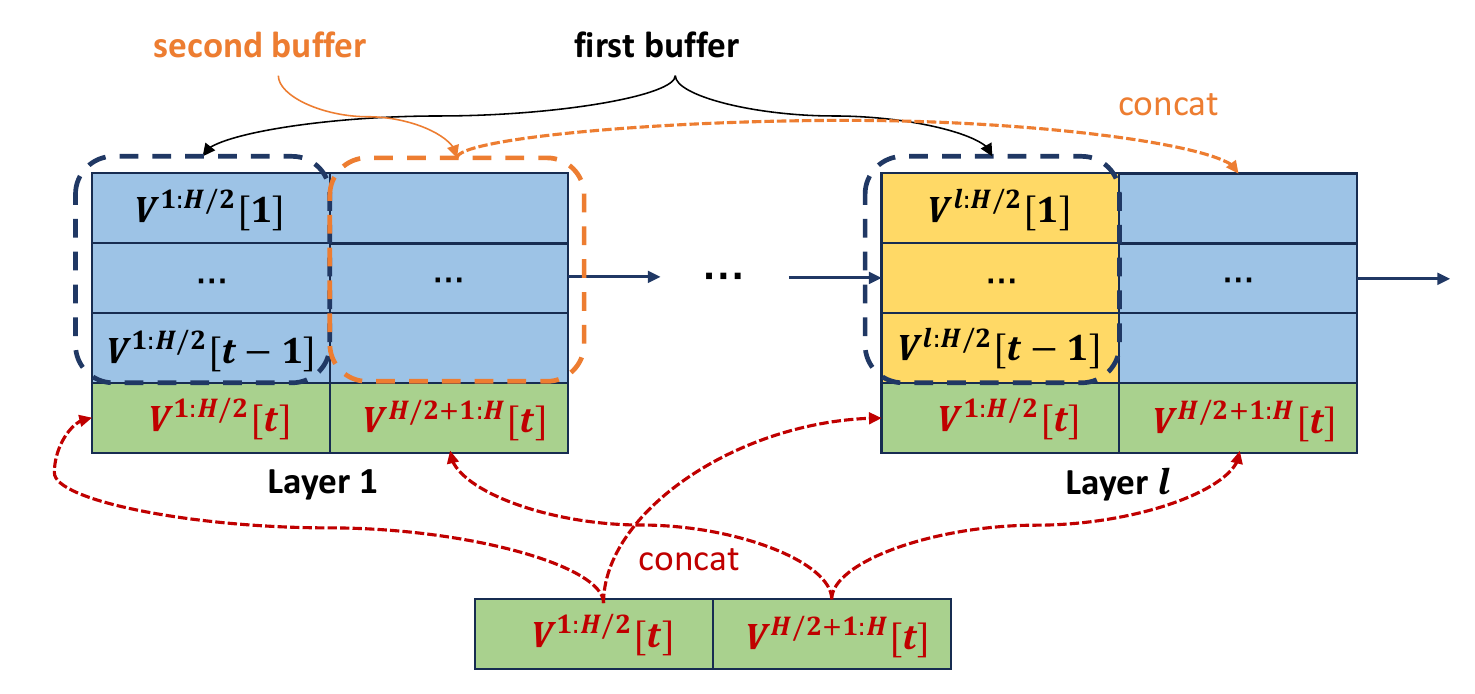}
    \caption{SkipV1Former: KV caching with first-layer reuse. Deeper layers concatenate their current half-heads with the cached half-heads from the first layer.}
    \label{fig:kv-skip}
  \end{subfigure}
  \caption{Comparison of KV caching mechanisms during autoregressive decoding.}
  \label{fig:kv-cache}
\end{figure}

\paragraph{Ablations.}
In the main text, we adopt a 50\% skip-head ratio as the default setting. Since the total number of concatenated $V$ heads is fixed, a larger skip ratio implies a stronger reduction in the embedding dimension of $W_V$ from layer 2 onward. Skipping fewer than 50\% of heads offers limited memory savings, while skipping all heads corresponds to SVFormer \cite{zhou2024value}, which suffers from clear performance degradation. Intuitively, too few skipped heads restrict deeper layers by limiting the copying mechanism, whereas too many skipped heads hinder their ability to process the full data flow. Empirically, a 50\% ratio achieves a near-optimal tradeoff between memory efficiency and model quality.

To further validate this finding, we extend the ablation to a larger 1B-parameter LLaMA model. Table~\ref{tab:skip1b} reports validation loss under 25\%, 50\%, and 75\% skip ratios, together with the baseline. Consistent with the GPT2-355M results, the 50\% ratio again provides a near-optimal tradeoff. Notably, the 75\% ratio still improves upon the baseline while further reducing memory cost compared to the default 50\% ratio, making it a viable option in highly memory-constrained scenarios.


\begin{table}[tb]
\centering
\caption{Validation loss of LLaMA-1B models under different skip-head ratios.}
\label{tab:skip1b}
\begin{tabular}{lcccc}
\toprule
Skip Ratio & 25\% & 50\% (default) & 75\% & Baseline \\
\midrule
Val. Loss & 2.716 & \textbf{2.711} & 2.717 & 2.723 \\
\bottomrule
\end{tabular}
\end{table}

\section{Additional Experiments} \label{app additional exp}
\subsection{Linear Probe} \label{app linear probe}
In this section, we validate our theoretical findings from Section \ref{sec first layer important} by examining the mesa-optimization behavior in Transformers using linear probes \cite{DBLP:conf/iclr/AlainB17}. We follow the methodology outlined in \cite{von2023uncovering,fu2024transformers}. Specifically, we extract the output from each layer of a pretrained GPT2-125M model and train a linear probe on top. The probe consists of a layer normalization followed by a linear classification head. The learning rate is tuned over the set \{1e-4, 2e-4, 3e-4\}, and training is conducted for 4000 steps, with all other hyperparameters remaining consistent with those used during pretraining. 

Figure \ref{fig:linear_probe} illustrates the validation loss of the probe across each layer. For both the baseline model and SkipV1Former, the probe loss decreases monotonically with layer depth, indicating progressively refined internal representations. The trend can be roughly divided into two phases: 
\begin{itemize}
    \item \textbf{Layers 0–6}: In these early layers, the probe loss fluctuates between the base and SkipV1Former models. This behavior suggests that the shallow layers are transforming the input into representations conducive to mesa-optimization \cite{DBLP:conf/iclr/0004HMWXS024}.
    \item \textbf{Layers 7–11}: In these deeper layers, the probe loss for SkipV1Former shows a consistently steeper downward trend compared to the baseline. This indicates that SkipV1Former has accelerated the mesa-optimization process, resulting in more rapidly improving representations.
\end{itemize}
These observations empirically support our theoretical analysis, demonstrating that SkipV1Former enhances the efficiency of mesa-optimization in deeper layers.

\begin{figure}[htbp]
  \centering
  \includegraphics[width=0.8\linewidth]{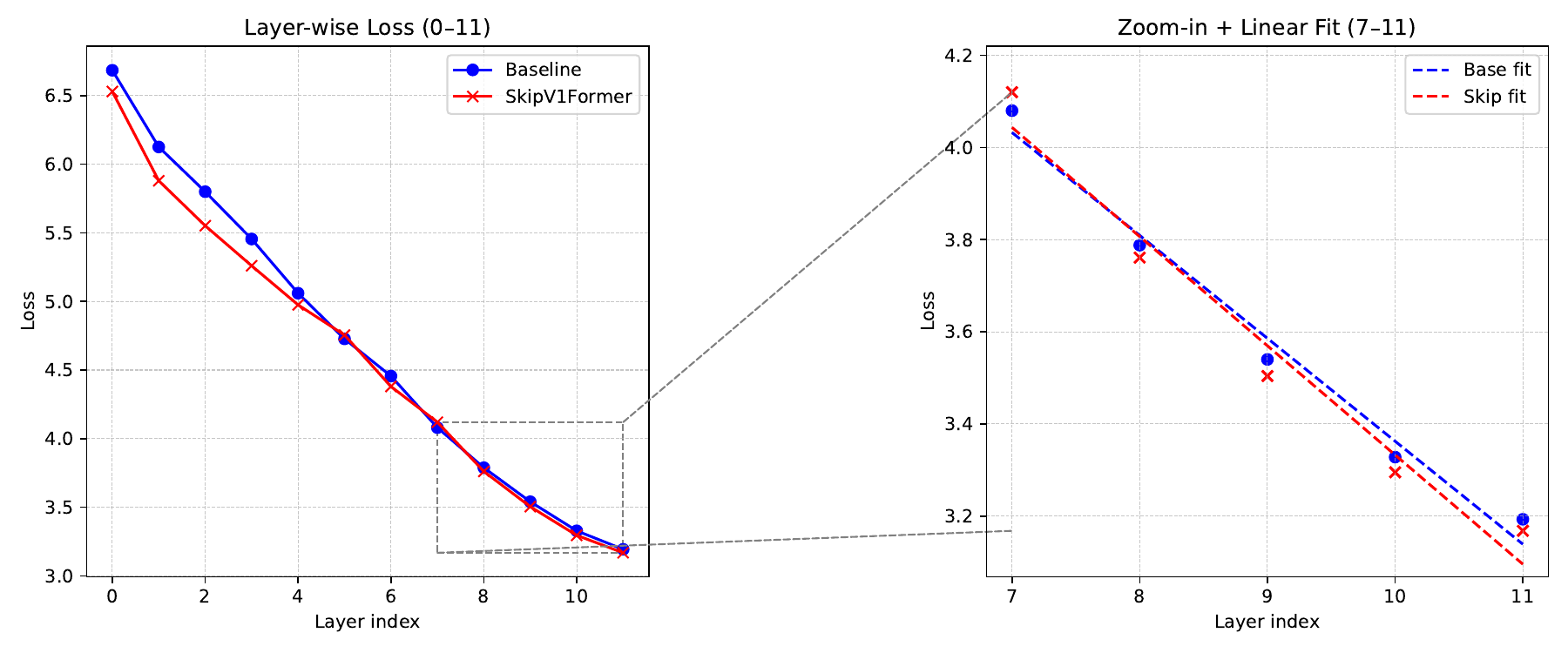}
  \caption{
    \textbf{Layerwise linear probe on a 12-layer GPT2 model.}
    We train a linear probe on the output of each layer (1–12) and plot validation loss for both the Baseline model and SkipV1Former. On the right, we zoom into layers 7–12 and overlay best-fit lines. 
    The steeper negative slope for SkipV1Former indicates a faster decrease in probe loss—evidence that its cross-layer V reuse strengthens the model’s mesa-optimization, yielding more rapidly improving representations at deeper layers.
  }
  \label{fig:linear_probe}
\end{figure}

\subsection{Visualization}
To further interpret how cross-layer Value skips in SkipV1Former affect attention patterns and information propagation, we visualize both head–head and token–token similarity matrices at selected layers. These visualizations reveal changes in head diversity and the degree of “oversmoothing” \cite{dong2021attention, DBLP:conf/iclr/WangZCW22} across layers.

Figure \ref{fig:head-similarity} shows that SkipV1Former exhibits lower inter-head similarity, indicating higher head diversity. This can be attributed to the injection of uncompressed first-layer Values into deeper layers, which promotes the diversity of Value in deeper layers. Meanwhile, Figure \ref{fig:token-similarity} shows that, apart from a few tokens, SkipV1Former reduces the overall token similarity compared to the baseline, alleviating the oversmoothing effect. By supplying deeper layers with original first-layer Values, tokens are less dependent on copying redundant information from neighboring positions, thus preserving more distinct features.

Beyond these similarity analyses, we also provide additional insights into the role of selected heads in SkipV1Former. Prior works have identified previous-token copying heads and induction heads in autoregressive tasks \cite{olsson2022context}. Specifically, Olsson et al. found that there exists a circuit implemented by two sorts of heads:
\begin{itemize}
\item Previous-token copying head: it finds the previous instance of the current token $A$ and the token that came after (call it $B$).
\item Induction head: it “completes the pattern” by predicting $B$ after the current $A$.
\end{itemize}
This characterization is closely related to our theoretical analysis of a `copying mechanism’ in Section \ref{sec first layer important}. We hypothesize that the skipping heads in SkipV1Former play a similar role to previous-token copying heads—leveraging layer 1’s complete raw information to facilitate pattern recall—while the remaining heads in layers 2–$L$ operate more like induction heads. In this way, SkipV1Former effectively replaces some of the redundant copying heads in deeper layers and preserves the more functionally important induction heads.

Overall, our visualizations and analysis together suggest that SkipV1Former’s cross-layer Value reuse encourages greater head specialization and mitigates representation collapse across tokens—two key factors underpinning its improved representational capacity.

\begin{figure}[htbp]
  \centering
  \begin{subfigure}[t]{0.46\textwidth}
    \centering
    \includegraphics[width=\linewidth]{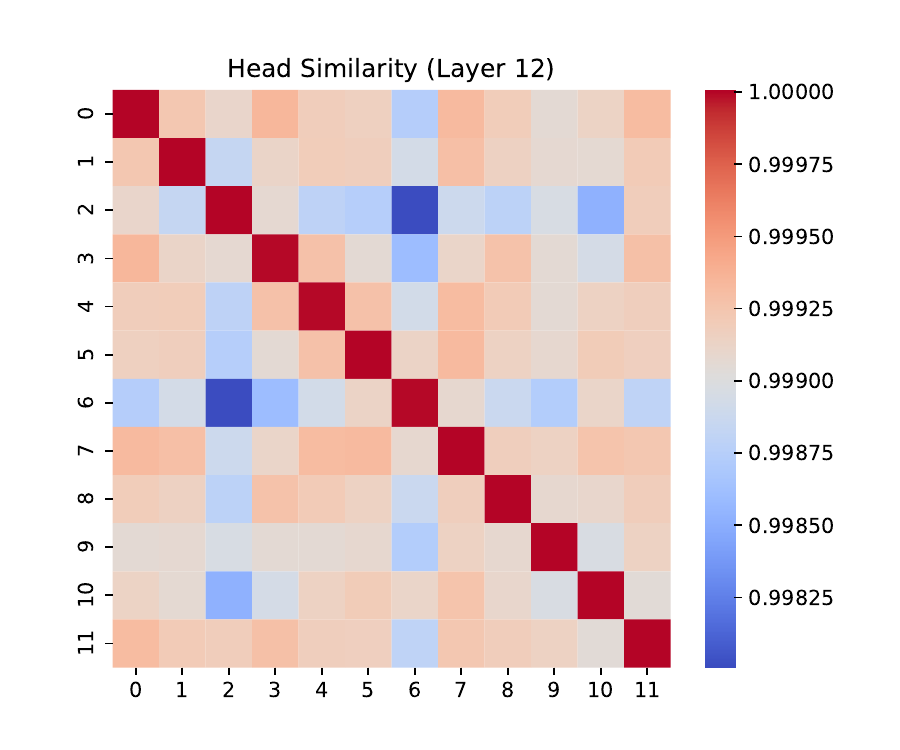}
    \caption{Baseline, Layer 12.}
  \end{subfigure}\hfill
  \begin{subfigure}[t]{0.46\textwidth}
    \centering
    \includegraphics[width=\linewidth]{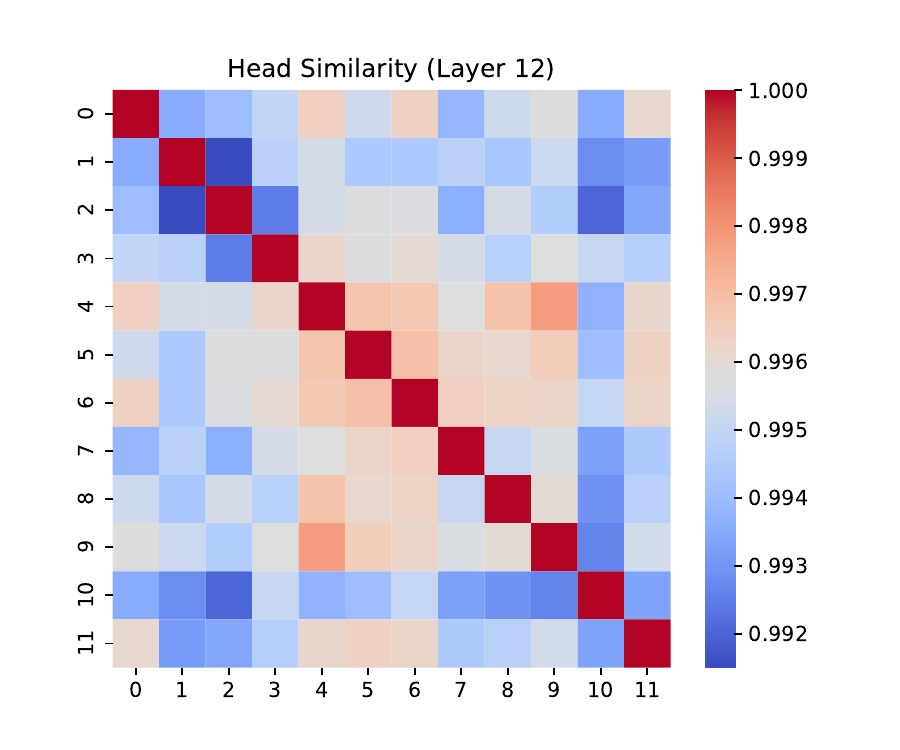}
    \caption{SkipV1Former, Layer 12.}
  \end{subfigure}

  \caption{Head–head cosine similarity at layer 12, comparing Baseline and SkipV1Former.}
  \label{fig:head-similarity}
\end{figure}

\begin{figure}[htbp]
  \centering
  \begin{subfigure}[t]{0.45\textwidth}
    \centering
    \includegraphics[width=\linewidth]{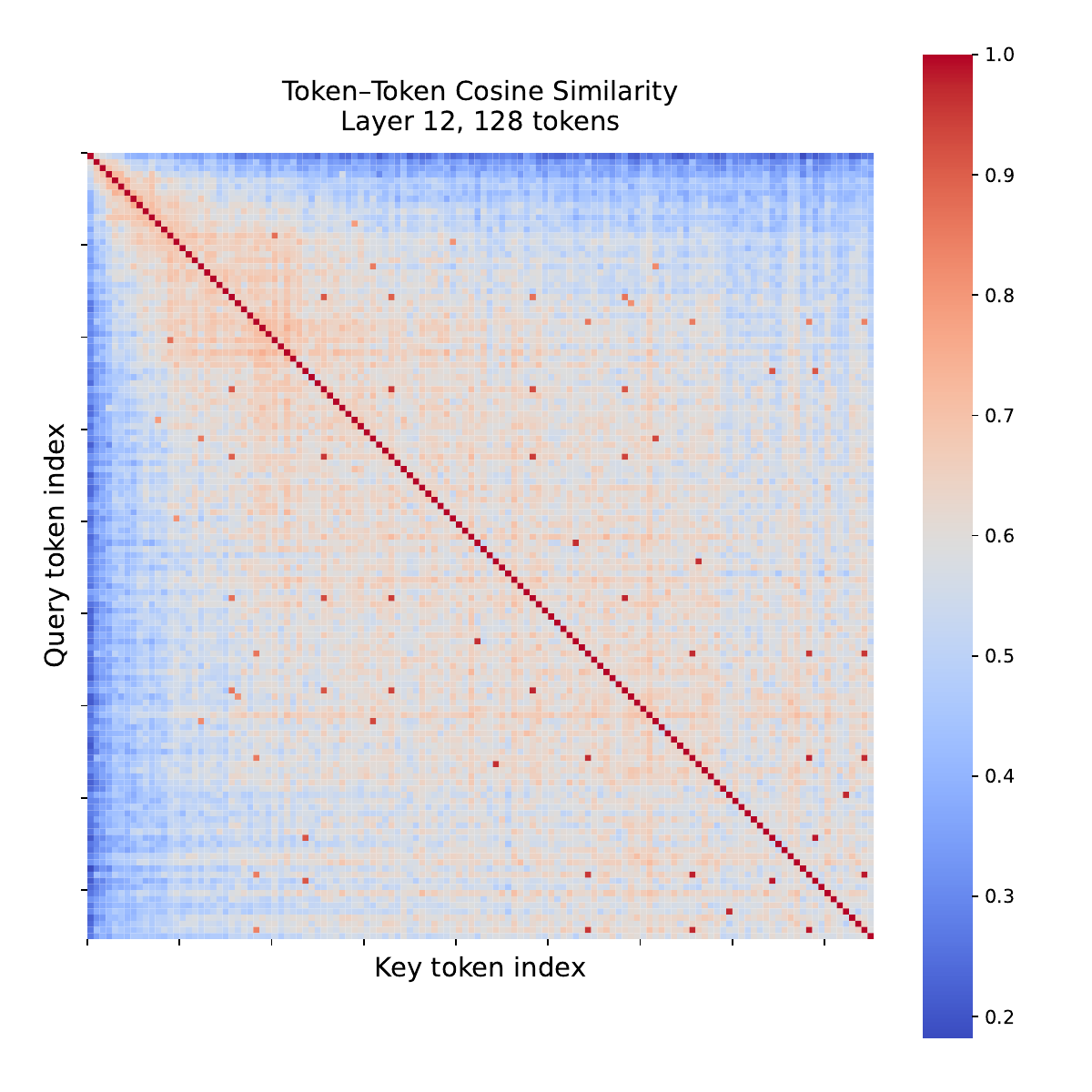}
    \caption{Baseline, Layer 12.}
  \end{subfigure}\hfill
  \begin{subfigure}[t]{0.45\textwidth}
    \centering
    \includegraphics[width=\linewidth]{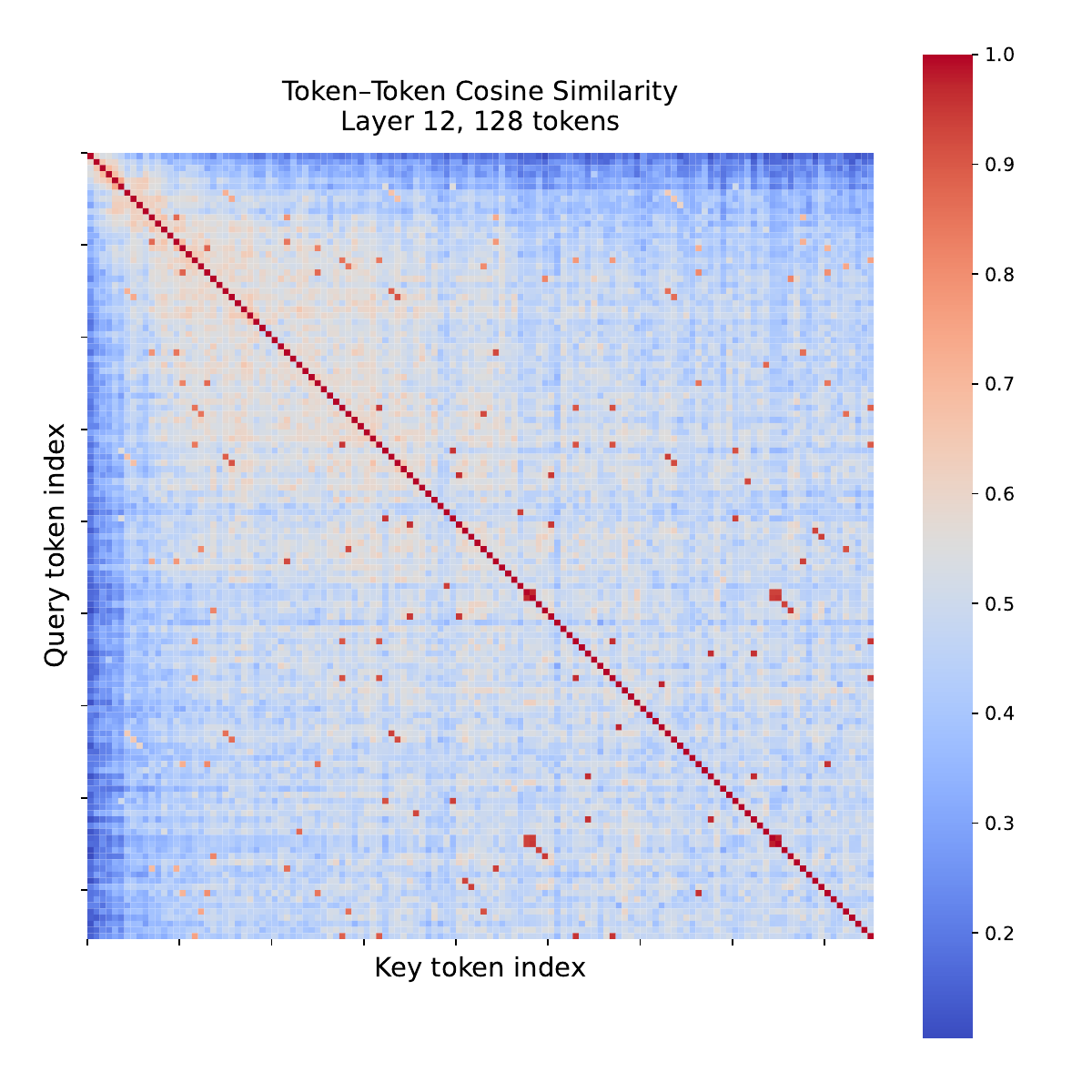}
    \caption{SkipV1Former, Layer 12.}
  \end{subfigure}

  \caption{Token–token cosine similarity at layer 12, comparing Baseline and SkipV1Former.}
  \label{fig:token-similarity}
\end{figure}

\subsection{Downstream Evaluation} \label{app subsec downstream}
\paragraph{GPT2-Series.}
We assess the performance of our pretrained GPT-2 models across several standard downstream tasks, including HellaSwag, ARC-Challenge, ARC-Easy, PIQA, Winogrande, OBQA, and SciQ. The results are presented in Tables \ref{table gpt2s downstream} to \ref{table gpt2ml downstream}. Notably, SkipV1Former and ResFormer models consistently achieve the highest average accuracy across various model scales, aligning with their performance during pretraining.

\paragraph{LlaMA-Series.}
Similarly, we evaluate the 60M to 1B parameter LLaMA models on downstream tasks such as ARC-Challenge, ARC-Easy, BoolQ, HellaSwag, OBQA, PIQA, RTE, SciQ, and Winogrande. As detailed in Table \ref{tab:1b_downstream}, SkipV1Former consistently outperforms the baseline models, demonstrating improved average accuracy across these tasks.

\begin{table}[htbp]
\centering
\caption{Zero-shot accuracy of six GPT2-125Ms variants across seven benchmark tasks.  
“Avg.” reports the mean accuracy over all tasks. 
Best performance per column is in bold.}
\label{table gpt2s downstream}
\begin{adjustbox}{max width=\textwidth}
\begin{tabular}{l|ccccccc|c}
\hline
\textbf{Model} & \textbf{HellaSwag} & \textbf{ARC-C} & \textbf{ARC-E} & \textbf{PIQA} & \textbf{Winogrande} & \textbf{Obqa} & \textbf{Sciq}  & \textbf{Avg.} \\
\hline

Baseline & 27.2 & 17.5 & 40.9 & 60.1 & 51.0 & 14.6 & \textbf{67.7} & 39.9 \\

SkipV1Former   & \textbf{27.3} & 18.0 & 40.9 & 60.1  & \textbf{51.5} & 15.0 & 67.6 & \textbf{40.1}  \\

ResFormer & \textbf{27.3} & 18.2  & \textbf{42.9} & \textbf{60.2} & 50.5 & 14.2 & 67.2 &  \textbf{40.1} \\

DenseFormer & \textbf{27.3} & \textbf{18.6} & 40.7 & 59.7 & 51.1 & \textbf{15.8} & 65.3 & 39.8 \\

YOCO & 27.2 & 17.6 & 39.4 & 59.6 & 52.0 & 14.4 & 66.2 & 39.5 \\

CLA & 27.2 & 19.1 & 39.9 & 59.4 & 49.3 & 14.4 & 67.0 & 39.5 \\
\hline
\end{tabular}
\end{adjustbox}
\end{table}

\begin{table}[htbp]
\centering
\caption{Zero-shot accuracy of six GPT2-200Ms variants across seven benchmark tasks.  
“Avg.” reports the mean accuracy over all tasks. 
Best performance per column is in bold.}
\label{table gpt2sm downstream}
\begin{adjustbox}{max width=\textwidth}
\begin{tabular}{l|ccccccc|c}
\hline
\textbf{Model} & \textbf{HellaSwag} & \textbf{ARC-C} & \textbf{ARC-E} & \textbf{PIQA} & \textbf{Winogrande} & \textbf{Obqa} & \textbf{Sciq}  & \textbf{Avg.} \\
\hline
Baseline & 27.6 & \textbf{18.9} & 42.1 & 61.4 & 50.4 & 13.8 & 69.2 & 40.5 \\

SkipV1Former   & \textbf{28.0} & 17.7 & \textbf{43.7} & 60.1  & 49.0 & 14.8 & \textbf{72.2} & 40.8  \\

ResFormer & 27.9 & 18.6  & 42.0 & \textbf{61.8} & 51.2 & \textbf{16.4} & 69.6 &  \textbf{41.1} \\

DenseFormer & 27.6 & 18.5 & 41.2 & 60.6 & 51.3 & 16.2 & 66.8 & 40.3 \\

YOCO & 27.6 & 18.3 & 41.4 & 60.3 & \textbf{51.5} & 16.0 & 69.3 & 40.6 \\

CLA & 27.5 & 17.7 & 40.7 & 60.2 & 50.7 & 15.2 & 66.9 & 39.8 \\
\hline
\end{tabular}
\end{adjustbox}
\end{table}

\begin{table}[htbp]
\centering
\caption{Zero-shot accuracy of six GPT2-355M variants across seven benchmark tasks.  
“Avg.” reports the mean accuracy over all tasks. 
Best performance per column is in bold.}
\label{table gpt2 zero shot}
\begin{adjustbox}{max width=\textwidth}
\begin{tabular}{l|ccccccc|c}
\hline
\textbf{Model} & \textbf{HellaSwag} & \textbf{ARC-C} & \textbf{ARC-E} & \textbf{PIQA} & \textbf{Winogrande} & \textbf{Obqa} & \textbf{Sciq}  & \textbf{Avg.} \\
\hline
Baseline & 28.6 & \textbf{21.0} & 42.2 & 61.5 & 51.3 & \textbf{16.6} & 71.6 & 41.8 \\

SkipV1Former   & \textbf{29.1} & 18.9 & \textbf{43.6} & \textbf{62.6}  & \textbf{52.5} & 15.8 & \textbf{73.5} & \textbf{42.3}  \\

ResFormer & 28.8 & 19.8  & 42.5 & 62.2 & 50.9 & 14.8 & 71.8 &  41.5 \\

DenseFormer & 28.7 & 19.7 & 44.4 & 61.3 & 51.5 & 16.4 & 70.7 & 41.8 \\

YOCO & 28.5 & 18.8 & 42.4 & 61.4 & 51.0 & 15.4 & 68.6 & 40.9 \\

CLA & 28.5 & 19.4 & 41.7 & 61.3 & 51.0 & 15.8 & 69.3 & 41.0 \\
\hline
\end{tabular}
\end{adjustbox}
\end{table}

\begin{table}[htbp]
\centering
\caption{Zero-shot accuracy of six GPT2-550Ms variants across seven benchmark tasks.  
“Avg.” reports the mean accuracy over all tasks. 
Best performance per column is in bold.}
\label{table gpt2ml downstream}
\begin{adjustbox}{max width=\textwidth}
\begin{tabular}{l|ccccccc|c}
\hline
\textbf{Model} & \textbf{HellaSwag} & \textbf{ARC-C} & \textbf{ARC-E} & \textbf{PIQA} & \textbf{Winogrande} & \textbf{Obqa} & \textbf{Sciq}  & \textbf{Avg.} \\
\hline
Baseline & 29.5 & 19.9 & 46.9 & 62.8 & 51.1 & 16.6 & 72.4 & 42.7 \\

SkipV1Former   & 29.6 & 18.5 & 46.1 & 62.7  & 51.4 & \textbf{17.6} & \textbf{73.9} & 42.8  \\

ResFormer & \textbf{30.1} & 19.3  & \textbf{47.1} & \textbf{63.7} & \textbf{52.2} & 17.0 & 73.4 &  \textbf{43.2} \\

DenseFormer & 29.4 & \textbf{20.6} & 44.2 & 62.2 & 49.8 & 15.8 & 72.7 & 42.1 \\

YOCO & 29.2 & 19.7 & 45.0 & 62.2 & 50.5 & 15.0 & 71.5 & 41.9 \\

CLA & 28.8 & 19.0 & 43.2 & 62.1 & 51.5 & 15.8 & 72.1 & 41.8 \\
\hline
\end{tabular}
\end{adjustbox}
\end{table}

\begin{table}[htbp]
  \centering
  \caption{Test accuracies (\%) of 60M - 1B scale models on downstream tasks, with overall average.}
  \begin{adjustbox}{max width=\linewidth}
  \begin{tabular}{l|ccccccccc|c}
    \hline
    \textbf{Model} & \textbf{ARC-C} & \textbf{ARC-E} & \textbf{BoolQ} & \textbf{Hella} & \textbf{OBQA} & \textbf{PIQA} & \textbf{RTE} & \textbf{SciQ} & \textbf{Winogr} & \textbf{Avg} \\
    \hline
    \hline
    Baseline-1B      & 19.3 & \textbf{44.5} & \textbf{60.2} & 31.0 & 17.2 & 66.3 & \textbf{53.4} & 71.6 & \textbf{52.0} & 46.2 \\
    SkipV1Former-1B & 19.3 & 44.1 & 59.6 & \textbf{31.2} & \textbf{17.4} & \textbf{66.9} & 52.3 & \textbf{73.8} & 50.9 & 46.2 \\
    \hline
    \hline
    Baseline-350M & 19.7 & \textbf{45.1} & \textbf{59.4} & 30.1 & 16.6 & 64.3 & 50.9 & 71.8 & 51.4 & 45.5 \\
    SkipV1Former-350M & \textbf{20.8} & 44.9 & 53.0 & \textbf{30.9} & \textbf{18.0} & \textbf{66.3} & \textbf{52.7} & \textbf{74.5} & \textbf{51.5} & \textbf{45.8} \\
    \hline
    \hline
    Baseline-130M & \textbf{19.4} & 37.1 & 50.3 & 27.2 & \textbf{16.0} & 60.9 & 49.8 & 65.6 & \textbf{52.0} & 42.0 \\
    SkipV1Former-130M & 17.8 & \textbf{39.3} & \textbf{61.3} & \textbf{27.7} & 14.8 & \textbf{61.9} & \textbf{53.8} & \textbf{68.5} & 49.8 & \textbf{43.9} \\
    \hline
    \hline
    Baseline-60M & \textbf{17.8} & 33.0 & 45.3 & 26.2 & \textbf{13.6} & \textbf{59.3} & \textbf{59.2} & 58.8 & 50.9 & 40.5 \\
    SkipV1Former-60M & 16.7 & 33.0 & \textbf{60.1} & \textbf{26.6} & 12.8 & 58.9 & 52.3 & \textbf{64.3} & \textbf{51.0} & \textbf{41.7} \\
    \hline
  \end{tabular}
  \end{adjustbox}
  \label{tab:1b_downstream}
\end{table}

\subsection{Alternative Head Injection Strategies} \label{app subsec alt head interleave}

In this section, we investigate different head injection strategies other than our default strategy, which directly concatenates the second-half Value heads of layer 1 to the first half Value heads of layer 2 - $L$. Specifically, we evaluate 
\begin{itemize}
    \item Pooling: Inject the head-wise average of the first half and second half heads in layer 1.
    \item Dynamic:  For the $i$-th layer, inject head $i$ from layer~1 into position $i + H/2$ in a rotating manner.
    \item Odd/Even: Use layer~1’s first-half heads for odd-indexed layers and its second-half heads for even-indexed layers.
    \item SkipV1 + ResFormer: On top of SkipV1Former, linearly add the first-half heads from layer~1 to the first-half heads of all subsequent layers.
\end{itemize}

\begin{table}[tb]
    \centering
    \begin{tabular}{c||ccccc}
    \hline
         Method & SkipV1 & Pooling & Dynamic	& Odd/Even & SkipV1 + ResFormer \\
        \hline
         Val. Loss & 2.885 &	2.881 &	2.885 &	2.883 &	2.889 \\
         \hline
    \end{tabular}
    \caption{Validation loss of different head injection strategies on LLaMA-350M model.}
    \label{tab:different head injection}
\end{table}
Table \ref{tab:different head injection} shows that there is no substantial improvement of these strategies over the simple second half scheme. Moreover, our default injection strategy selects preserves a consistent head ordering across layers, which is more hardware friendly than discontinuous schemes like random head selection \cite{DBLP:conf/acl/YuanGD0ZZXWW0WR25}. In all, simply injecting only the second half heads has achieved certain model performance improvement while being more hardware-friendly.

\subsection{Skip Connections with both K and V} \label{app subsec skip kv}
In this section, we investigate methods to further reduce the Key-Value (KV) cache size by using skip connections on both Keys and Values. A straightforward approach is to reuse the first layer's K and V in all subsequent layers via skip connections. We term this architecture as SkipKV1Former. Additionally, we propose SkipV1-YOCO, which combines the Keys sharing mechanism in YOCO \cite{sun2024you} (see Appendix \ref{app subsec gpt2 details}) with our skip connection approach for the first Value in SkipV1Former. Specifically, the first $\frac{L}{2}$ layers of SKipV1-YOCO are the same as those in SkipV1Former, while the rest $\frac{L}{2}$ layers computes attention by:
\begin{equation*}
    \text{Attn}(X) = X + \sum_{h=1}^{H'} W_O^h V^h\text{softmax}\left(\left(K^h_{L/2}\right)^{\top}Q^h\right) + \sum_{H'+1}^H W_O^h V_1^h\text{softmax}\left(\left(K^h_{L/2}\right)^{\top}Q^h\right).
\end{equation*}
Both SkipKV1Former and SkipV1-YOCO reduce the KV cache size for approximately 50\%. We pretrain SkipKV1Former and SkipV1-YOCO on GPT2-125M to GPT2-355M using the same hyperparameter settings as in our GPT2 pretraining experiments.

Figure \ref{fig:kv and v} illustrates the performance of these models across three different scales. Both SkipKV1Former and SkipV1-YOCO are outperformed by SkipV1Former, indicating that skip connections on Keys may lead to performance degradation. On the other hand, SkipV1-YOCO shows only a slight performance drop compared to SkipV1Former and still outperforms the baseline model, whereas SkipKV1Former exhibits inferior performance even compared to the baseline model. The relatively strong performance of SkipV1-YOCO suggests that selective reuse of Keys (as in YOCO) may mitigate performance degradation, pointing to a promising direction for further KV cache reduction.

\begin{figure}[htbp]
  \centering
  \begin{subfigure}[t]{0.32\textwidth}
    \centering
    \includegraphics[width=\linewidth]{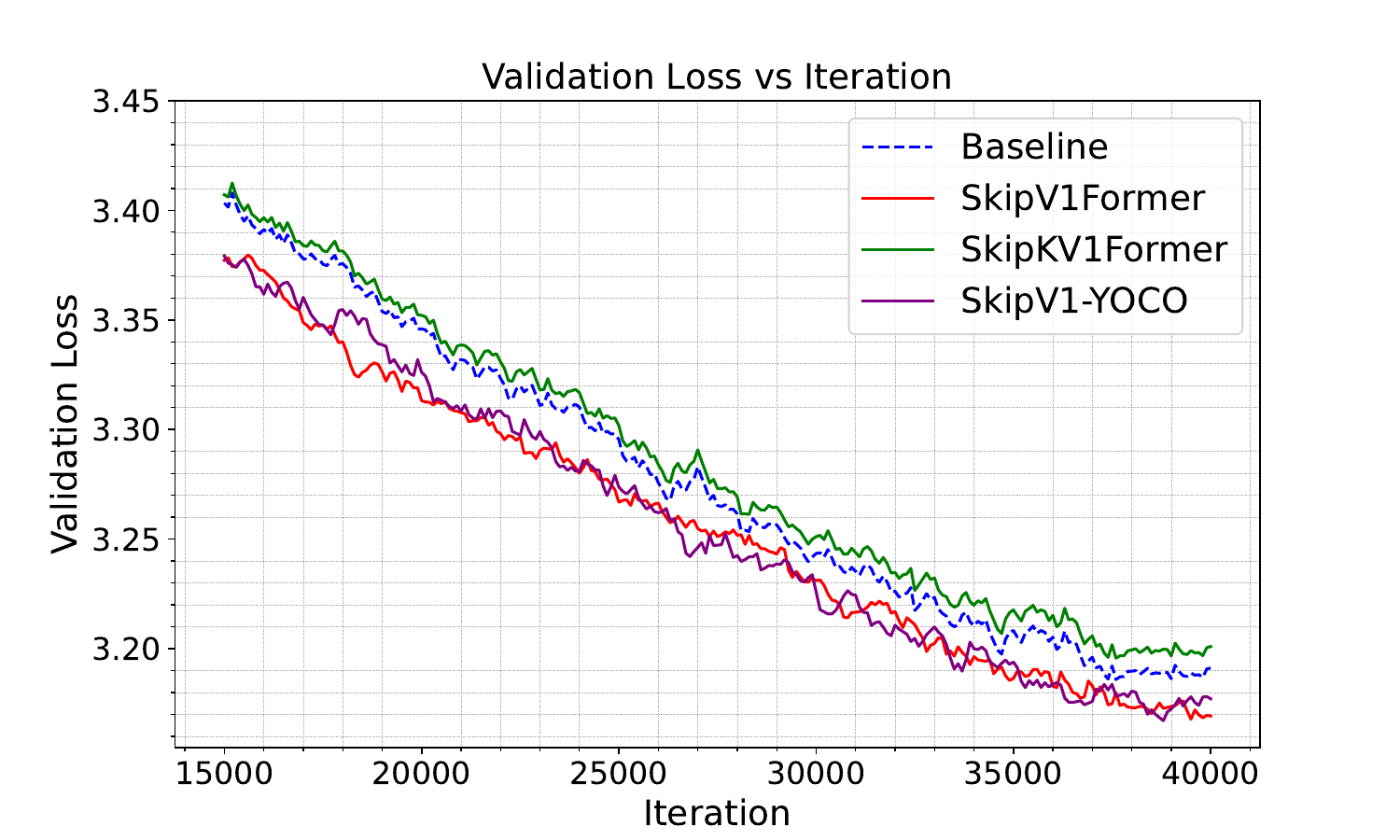}
    \caption{GPT2-125M Model}
    \label{fig:kv_s}
  \end{subfigure}\hfill%
  \begin{subfigure}[t]{0.32\textwidth}
    \centering
    \includegraphics[width=\linewidth]{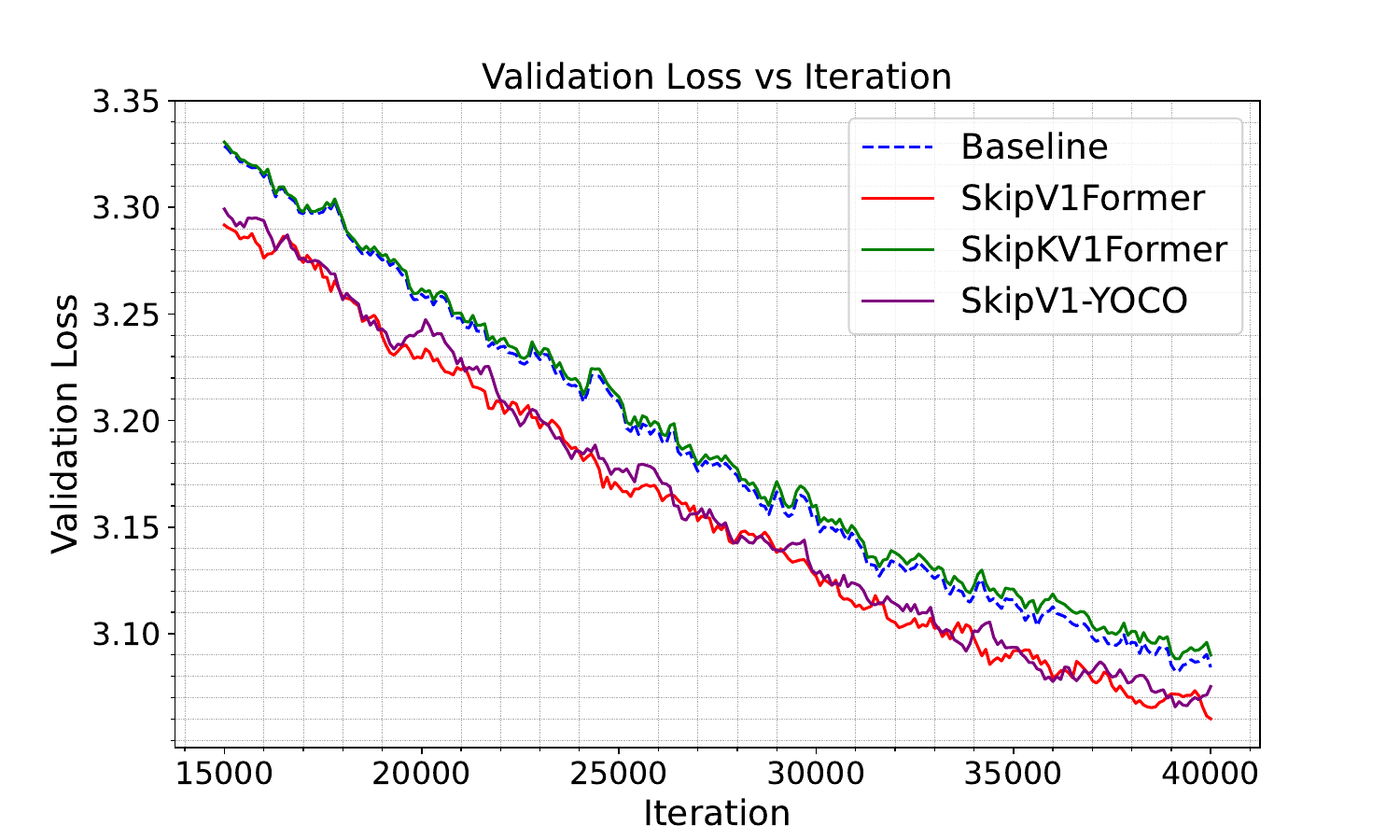}
    \caption{GPT2-200M Model}
    \label{fig:kv_sm}
  \end{subfigure}\hfill%
  \begin{subfigure}[t]{0.32\textwidth}
    \centering
    \includegraphics[width=\linewidth]{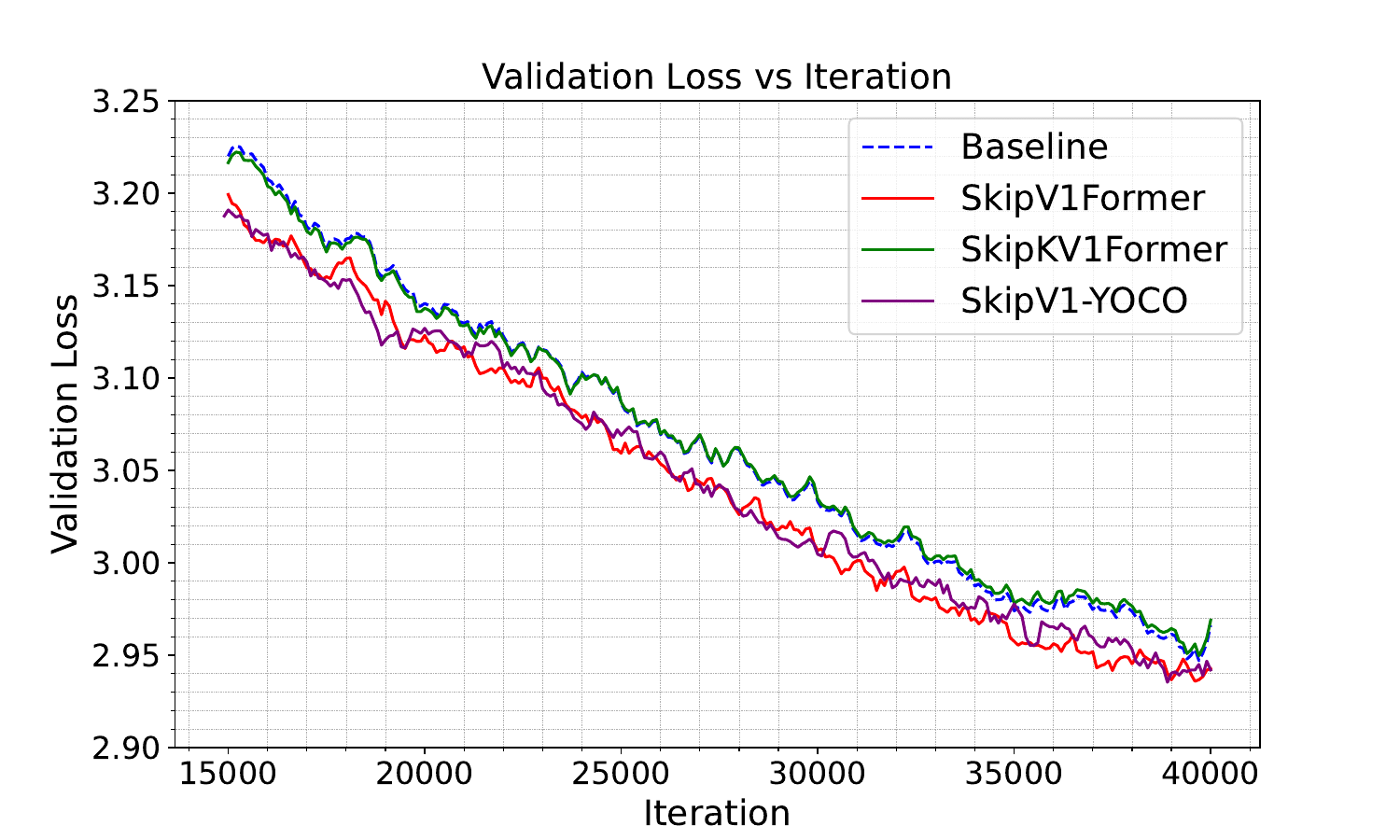}
    \caption{GPT2-355M Model}
    \label{fig:kv_m}
  \end{subfigure}

  \caption{Validation loss over training iterations for GPT2-125M, GPT2-200M, and GPT2-355M models, comparing SkipV1-YOCO and SkipKV1Former with the MHA baseline (blue line) and SkipV1Former (red line).}
  \label{fig:kv and v}
\end{figure}

\subsection{Vision Transformer} \label{app vit}
Although the design and theoretical analysis of SkipV1Former target towards auto-regressive tasks, in this section we conduct experiments on vision tasks for completeness. We train a ViT-Tiny model \cite{wu2022tinyvit} on CIFAR-10 and CIFAR-100 from scratch to preliminarily test the performance of SkipV1Former on vision tasks. Specifically, our Vit-Tiny model has 12 layers and 3 heads per layer, with a hidden size 192 and an MLP of dimension 768. The total number of parameters is 5.5M. Though CIFAR datasets are generally considered too small for ViT \cite{DBLP:conf/iclr/DosovitskiyB0WZ21}, our comparisons between SKipV1Former and the baseline model are controlled and fair. We train the models using AdamW with $\beta_1=0.9, \beta_2=0.999$ and weight decay 0.1. The peak learning rate is 1e-3, with a 10\% warmup ratio and cosine annealing that decays to 10\% of the peak. We use a batch size of 1024 and train for 15000 steps, i.e., 300 epochs. 

Figure \ref{fig:acc_vit_cifar} shows test accuracy over training on CIFAR-10 and CIFAR-100. SkipV1Former does not outperform the standard MHA Transformer. This indirectly supports our theoretical analysis, which is rooted in the mesa-optimization behavior observed in trained Transformers for auto-regressive tasks. In non-autoregressive tasks, accessing information from the first layer cannot preserve the causal structure between samples and labels. Instead, the raw features from the first layer are less processed and may disrupt the hierarchical feature extraction that vision Transformers rely on, leading to degraded performance when being injected to deeper layers. Designing an effective skip connection scheme for vision Transformers may require a deeper understanding of how information flows through the learned architecture.

\begin{figure}[htbp]
  \centering
  \begin{subfigure}{0.49\textwidth}
    \centering
    \includegraphics[width=\linewidth]{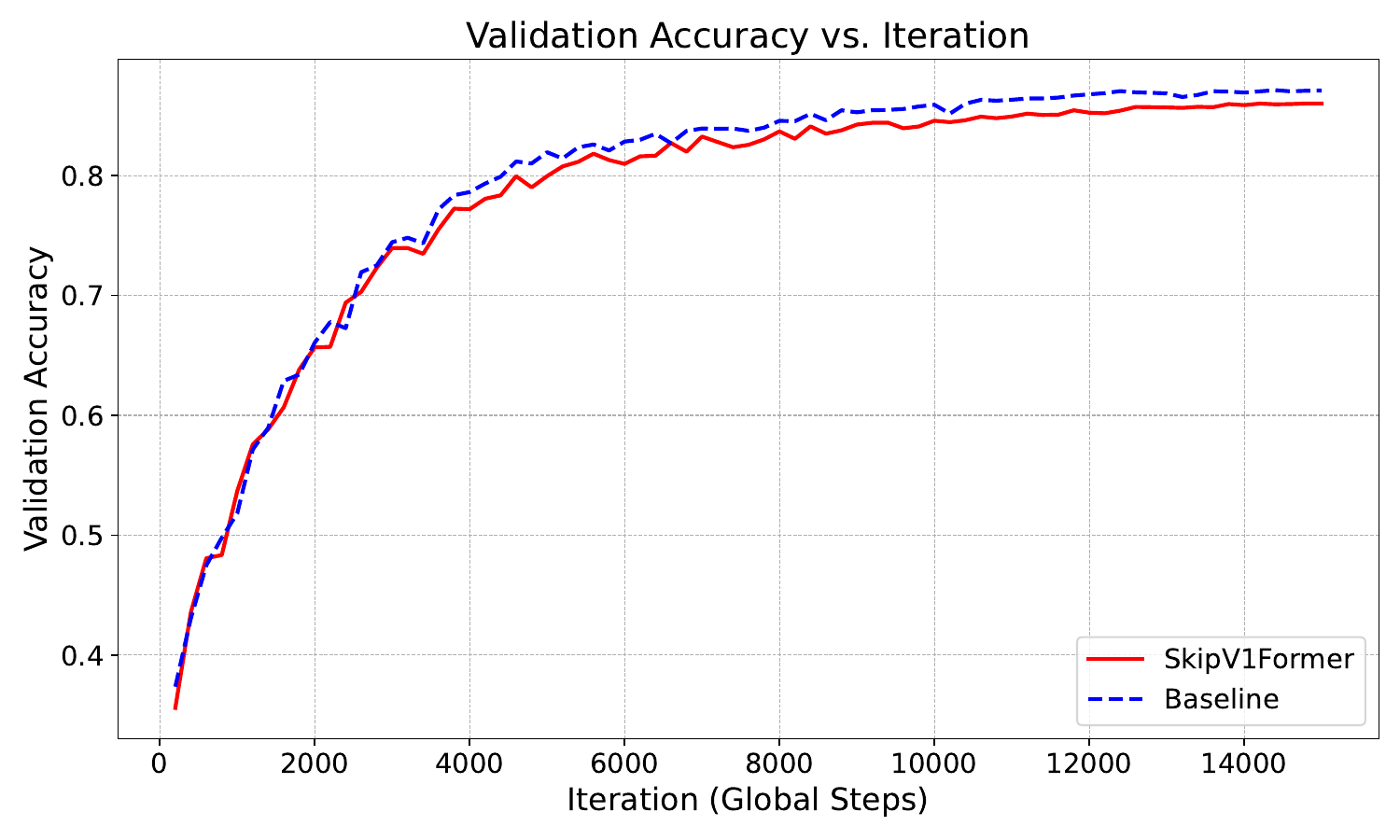}
    \caption{Accuracy per iteration on CIFAR-10.}
    \label{fig:acc_cifar10}
  \end{subfigure}
  \hfill
  \begin{subfigure}{0.49\textwidth}
    \centering
    \includegraphics[width=\linewidth]{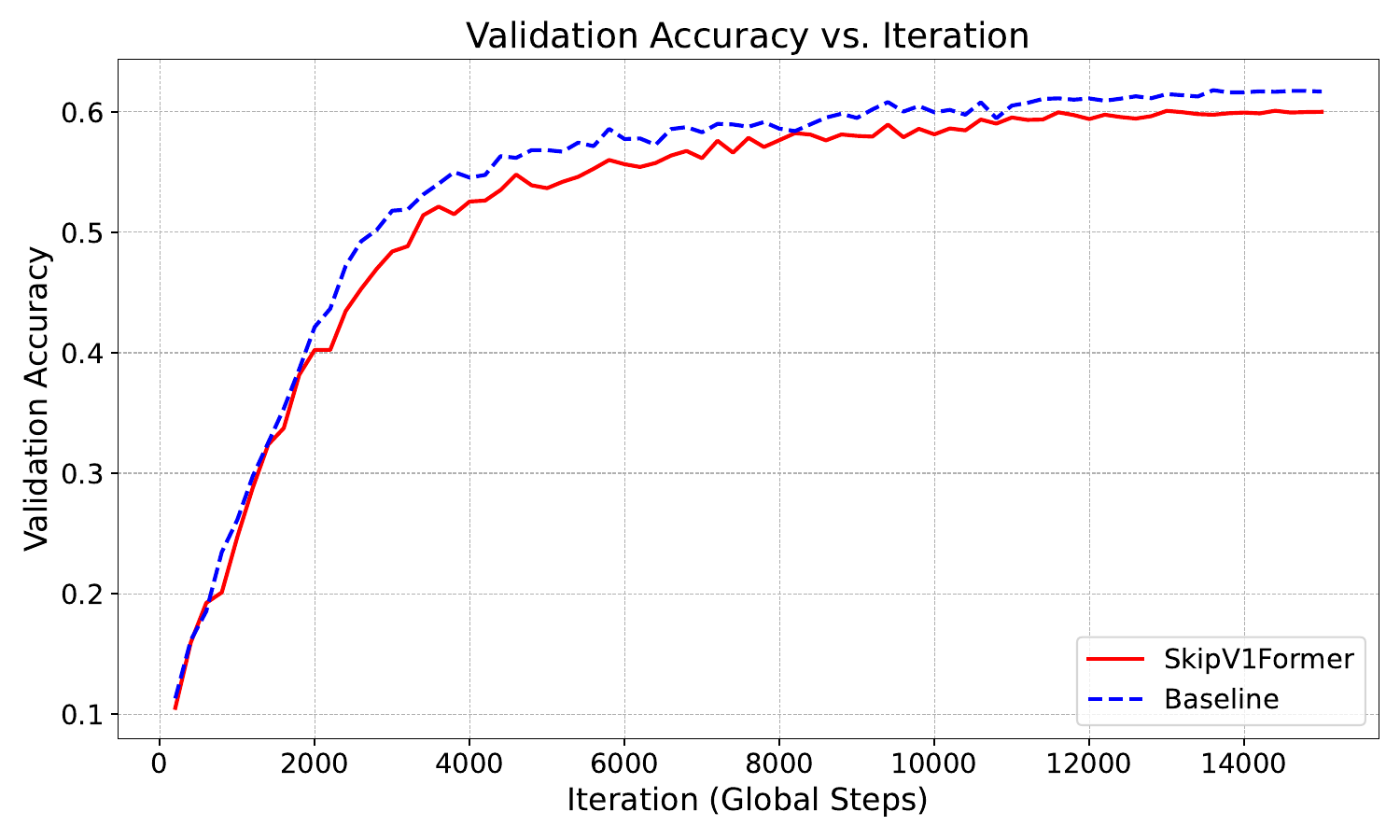}
    \caption{Accuracy per iteration on CIFAR-100.}
    \label{fig:acc_cifar100}
  \end{subfigure}

  \caption{Accuracy per iteration of the baseline versus SkipV1Former on a Vision Transformer: results on CIFAR-10 (left) and CIFAR-100 (right).}
  \label{fig:acc_vit_cifar}
\end{figure}


\newpage
\section*{NeurIPS Paper Checklist}



\begin{enumerate}

\item {\bf Claims}
    \item[] Question: Do the main claims made in the abstract and introduction accurately reflect the paper's contributions and scope?
    \item[] Answer: \answerYes{} 
    \item[] Justification: See Abstract and Introduction, where enumerates the contributions.
    \item[] Guidelines:
    \begin{itemize}
        \item The answer NA means that the abstract and introduction do not include the claims made in the paper.
        \item The abstract and/or introduction should clearly state the claims made, including the contributions made in the paper and important assumptions and limitations. A No or NA answer to this question will not be perceived well by the reviewers. 
        \item The claims made should match theoretical and experimental results, and reflect how much the results can be expected to generalize to other settings. 
        \item It is fine to include aspirational goals as motivation as long as it is clear that these goals are not attained by the paper. 
    \end{itemize}

\item {\bf Limitations}
    \item[] Question: Does the paper discuss the limitations of the work performed by the authors?
    \item[] Answer: \answerYes{} 
    \item[] Justification: See Section \ref{sec first layer important} and Appendix \ref{app proofs}. We mention that we study a simplified 2-layer Transformer due to technical issues.
    \item[] Guidelines:
    \begin{itemize}
        \item The answer NA means that the paper has no limitation while the answer No means that the paper has limitations, but those are not discussed in the paper. 
        \item The authors are encouraged to create a separate "Limitations" section in their paper.
        \item The paper should point out any strong assumptions and how robust the results are to violations of these assumptions (e.g., independence assumptions, noiseless settings, model well-specification, asymptotic approximations only holding locally). The authors should reflect on how these assumptions might be violated in practice and what the implications would be.
        \item The authors should reflect on the scope of the claims made, e.g., if the approach was only tested on a few datasets or with a few runs. In general, empirical results often depend on implicit assumptions, which should be articulated.
        \item The authors should reflect on the factors that influence the performance of the approach. For example, a facial recognition algorithm may perform poorly when image resolution is low or images are taken in low lighting. Or a speech-to-text system might not be used reliably to provide closed captions for online lectures because it fails to handle technical jargon.
        \item The authors should discuss the computational efficiency of the proposed algorithms and how they scale with dataset size.
        \item If applicable, the authors should discuss possible limitations of their approach to address problems of privacy and fairness.
        \item While the authors might fear that complete honesty about limitations might be used by reviewers as grounds for rejection, a worse outcome might be that reviewers discover limitations that aren't acknowledged in the paper. The authors should use their best judgment and recognize that individual actions in favor of transparency play an important role in developing norms that preserve the integrity of the community. Reviewers will be specifically instructed to not penalize honesty concerning limitations.
    \end{itemize}

\item {\bf Theory assumptions and proofs}
    \item[] Question: For each theoretical result, does the paper provide the full set of assumptions and a complete (and correct) proof?
    \item[] Answer: \answerYes{} 
    \item[] Justification: See Appendix \ref{app proofs} for all the assumptions and proofs.
    \item[] Guidelines:
    \begin{itemize}
        \item The answer NA means that the paper does not include theoretical results. 
        \item All the theorems, formulas, and proofs in the paper should be numbered and cross-referenced.
        \item All assumptions should be clearly stated or referenced in the statement of any theorems.
        \item The proofs can either appear in the main paper or the supplemental material, but if they appear in the supplemental material, the authors are encouraged to provide a short proof sketch to provide intuition. 
        \item Inversely, any informal proof provided in the core of the paper should be complemented by formal proofs provided in appendix or supplemental material.
        \item Theorems and Lemmas that the proof relies upon should be properly referenced. 
    \end{itemize}

    \item {\bf Experimental result reproducibility}
    \item[] Question: Does the paper fully disclose all the information needed to reproduce the main experimental results of the paper to the extent that it affects the main claims and/or conclusions of the paper (regardless of whether the code and data are provided or not)?
    \item[] Answer: \answerYes{} 
    \item[] Justification: Justification: See Section \ref{sec experiments} and Appendix \ref{app exp details}. We provide details of our experiment.
    \item[] Guidelines:
    \begin{itemize}
        \item The answer NA means that the paper does not include experiments.
        \item If the paper includes experiments, a No answer to this question will not be perceived well by the reviewers: Making the paper reproducible is important, regardless of whether the code and data are provided or not.
        \item If the contribution is a dataset and/or model, the authors should describe the steps taken to make their results reproducible or verifiable. 
        \item Depending on the contribution, reproducibility can be accomplished in various ways. For example, if the contribution is a novel architecture, describing the architecture fully might suffice, or if the contribution is a specific model and empirical evaluation, it may be necessary to either make it possible for others to replicate the model with the same dataset, or provide access to the model. In general. releasing code and data is often one good way to accomplish this, but reproducibility can also be provided via detailed instructions for how to replicate the results, access to a hosted model (e.g., in the case of a large language model), releasing of a model checkpoint, or other means that are appropriate to the research performed.
        \item While NeurIPS does not require releasing code, the conference does require all submissions to provide some reasonable avenue for reproducibility, which may depend on the nature of the contribution. For example
        \begin{enumerate}
            \item If the contribution is primarily a new algorithm, the paper should make it clear how to reproduce that algorithm.
            \item If the contribution is primarily a new model architecture, the paper should describe the architecture clearly and fully.
            \item If the contribution is a new model (e.g., a large language model), then there should either be a way to access this model for reproducing the results or a way to reproduce the model (e.g., with an open-source dataset or instructions for how to construct the dataset).
            \item We recognize that reproducibility may be tricky in some cases, in which case authors are welcome to describe the particular way they provide for reproducibility. In the case of closed-source models, it may be that access to the model is limited in some way (e.g., to registered users), but it should be possible for other researchers to have some path to reproducing or verifying the results.
        \end{enumerate}
    \end{itemize}

\item {\bf Open access to data and code}
    \item[] Question: Does the paper provide open access to the data and code, with sufficient instructions to faithfully reproduce the main experimental results, as described in supplemental material?
    \item[] Answer: \answerYes{} 
    \item[] Justification: We promise to release the code after publication. We have provided sufficient details and instructions for the repreduction of the main experimental results.
    \item[] Guidelines:
    \begin{itemize}
        \item The answer NA means that paper does not include experiments requiring code.
        \item Please see the NeurIPS code and data submission guidelines (\url{https://nips.cc/public/guides/CodeSubmissionPolicy}) for more details.
        \item While we encourage the release of code and data, we understand that this might not be possible, so “No” is an acceptable answer. Papers cannot be rejected simply for not including code, unless this is central to the contribution (e.g., for a new open-source benchmark).
        \item The instructions should contain the exact command and environment needed to run to reproduce the results. See the NeurIPS code and data submission guidelines (\url{https://nips.cc/public/guides/CodeSubmissionPolicy}) for more details.
        \item The authors should provide instructions on data access and preparation, including how to access the raw data, preprocessed data, intermediate data, and generated data, etc.
        \item The authors should provide scripts to reproduce all experimental results for the new proposed method and baselines. If only a subset of experiments are reproducible, they should state which ones are omitted from the script and why.
        \item At submission time, to preserve anonymity, the authors should release anonymized versions (if applicable).
        \item Providing as much information as possible in supplemental material (appended to the paper) is recommended, but including URLs to data and code is permitted.
    \end{itemize}

\item {\bf Experimental setting/details}
    \item[] Question: Does the paper specify all the training and test details (e.g., data splits, hyperparameters, how they were chosen, type of optimizer, etc.) necessary to understand the results?
    \item[] Answer: \answerYes{} 
    \item[] Justification: See Section \ref{sec experiments} and Appendix \ref{app exp details} and Appendix \ref{app additional exp}.
    \item[] Guidelines:
    \begin{itemize}
        \item The answer NA means that the paper does not include experiments.
        \item The experimental setting should be presented in the core of the paper to a level of detail that is necessary to appreciate the results and make sense of them.
        \item The full details can be provided either with the code, in appendix, or as supplemental material.
    \end{itemize}

\item {\bf Experiment statistical significance}
    \item[] Question: Does the paper report error bars suitably and correctly defined or other appropriate information about the statistical significance of the experiments?
    \item[] Answer: \answerNo{} 
    \item[] Justification: We tune the learning rate for most our experiments. Due to limited computing resources, we are not able to run the experiments many times to derive the error bars. 
    \item[] Guidelines:
    \begin{itemize}
        \item The answer NA means that the paper does not include experiments.
        \item The authors should answer "Yes" if the results are accompanied by error bars, confidence intervals, or statistical significance tests, at least for the experiments that support the main claims of the paper.
        \item The factors of variability that the error bars are capturing should be clearly stated (for example, train/test split, initialization, random drawing of some parameter, or overall run with given experimental conditions).
        \item The method for calculating the error bars should be explained (closed form formula, call to a library function, bootstrap, etc.)
        \item The assumptions made should be given (e.g., Normally distributed errors).
        \item It should be clear whether the error bar is the standard deviation or the standard error of the mean.
        \item It is OK to report 1-sigma error bars, but one should state it. The authors should preferably report a 2-sigma error bar than state that they have a 96\% CI, if the hypothesis of Normality of errors is not verified.
        \item For asymmetric distributions, the authors should be careful not to show in tables or figures symmetric error bars that would yield results that are out of range (e.g. negative error rates).
        \item If error bars are reported in tables or plots, The authors should explain in the text how they were calculated and reference the corresponding figures or tables in the text.
    \end{itemize}

\item {\bf Experiments compute resources}
    \item[] Question: For each experiment, does the paper provide sufficient information on the computer resources (type of compute workers, memory, time of execution) needed to reproduce the experiments?
    \item[] Answer: \answerYes{} 
    \item[] Justification: See Appendix \ref{app exp details}.
    \item[] Guidelines:
    \begin{itemize}
        \item The answer NA means that the paper does not include experiments.
        \item The paper should indicate the type of compute workers CPU or GPU, internal cluster, or cloud provider, including relevant memory and storage.
        \item The paper should provide the amount of compute required for each of the individual experimental runs as well as estimate the total compute. 
        \item The paper should disclose whether the full research project required more compute than the experiments reported in the paper (e.g., preliminary or failed experiments that didn't make it into the paper). 
    \end{itemize}
    
\item {\bf Code of ethics}
    \item[] Question: Does the research conducted in the paper conform, in every respect, with the NeurIPS Code of Ethics \url{https://neurips.cc/public/EthicsGuidelines}?
    \item[] Answer: \answerYes{} 
    \item[] Justification: This paper only include experiments on public open datasets, i.e., OpenWebText2 and C4.
    \item[] Guidelines:
    \begin{itemize}
        \item The answer NA means that the authors have not reviewed the NeurIPS Code of Ethics.
        \item If the authors answer No, they should explain the special circumstances that require a deviation from the Code of Ethics.
        \item The authors should make sure to preserve anonymity (e.g., if there is a special consideration due to laws or regulations in their jurisdiction).
    \end{itemize}

\item {\bf Broader impacts}
    \item[] Question: Does the paper discuss both potential positive societal impacts and negative societal impacts of the work performed?
    \item[] Answer: \answerNA{} 
    \item[] Justification: This paper mainly focus on the fundamental architecture of Transformer.
    \item[] Guidelines:
    \begin{itemize}
        \item The answer NA means that there is no societal impact of the work performed.
        \item If the authors answer NA or No, they should explain why their work has no societal impact or why the paper does not address societal impact.
        \item Examples of negative societal impacts include potential malicious or unintended uses (e.g., disinformation, generating fake profiles, surveillance), fairness considerations (e.g., deployment of technologies that could make decisions that unfairly impact specific groups), privacy considerations, and security considerations.
        \item The conference expects that many papers will be foundational research and not tied to particular applications, let alone deployments. However, if there is a direct path to any negative applications, the authors should point it out. For example, it is legitimate to point out that an improvement in the quality of generative models could be used to generate deepfakes for disinformation. On the other hand, it is not needed to point out that a generic algorithm for optimizing neural networks could enable people to train models that generate Deepfakes faster.
        \item The authors should consider possible harms that could arise when the technology is being used as intended and functioning correctly, harms that could arise when the technology is being used as intended but gives incorrect results, and harms following from (intentional or unintentional) misuse of the technology.
        \item If there are negative societal impacts, the authors could also discuss possible mitigation strategies (e.g., gated release of models, providing defenses in addition to attacks, mechanisms for monitoring misuse, mechanisms to monitor how a system learns from feedback over time, improving the efficiency and accessibility of ML).
    \end{itemize}
    
\item {\bf Safeguards}
    \item[] Question: Does the paper describe safeguards that have been put in place for responsible release of data or models that have a high risk for misuse (e.g., pretrained language models, image generators, or scraped datasets)?
    \item[] Answer: \answerNA{} 
    \item[] Justification: This paper focuses on fundamental Transformer architecture.
    \item[] Guidelines:
    \begin{itemize}
        \item The answer NA means that the paper poses no such risks.
        \item Released models that have a high risk for misuse or dual-use should be released with necessary safeguards to allow for controlled use of the model, for example by requiring that users adhere to usage guidelines or restrictions to access the model or implementing safety filters. 
        \item Datasets that have been scraped from the Internet could pose safety risks. The authors should describe how they avoided releasing unsafe images.
        \item We recognize that providing effective safeguards is challenging, and many papers do not require this, but we encourage authors to take this into account and make a best faith effort.
    \end{itemize}

\item {\bf Licenses for existing assets}
    \item[] Question: Are the creators or original owners of assets (e.g., code, data, models), used in the paper, properly credited and are the license and terms of use explicitly mentioned and properly respected?
    \item[] Answer: \answerYes{} 
    \item[] Justification: See Appendix \ref{app exp details}.
    \item[] Guidelines:
    \begin{itemize}
        \item The answer NA means that the paper does not use existing assets.
        \item The authors should cite the original paper that produced the code package or dataset.
        \item The authors should state which version of the asset is used and, if possible, include a URL.
        \item The name of the license (e.g., CC-BY 4.0) should be included for each asset.
        \item For scraped data from a particular source (e.g., website), the copyright and terms of service of that source should be provided.
        \item If assets are released, the license, copyright information, and terms of use in the package should be provided. For popular datasets, \url{paperswithcode.com/datasets} has curated licenses for some datasets. Their licensing guide can help determine the license of a dataset.
        \item For existing datasets that are re-packaged, both the original license and the license of the derived asset (if it has changed) should be provided.
        \item If this information is not available online, the authors are encouraged to reach out to the asset's creators.
    \end{itemize}

\item {\bf New assets}
    \item[] Question: Are new assets introduced in the paper well documented and is the documentation provided alongside the assets?
    \item[] Answer: \answerNo{} 
    \item[] Justification: We introduce new assets, including model code and training implementations. However, we do not release them at submission time. We commit to releasing the full codebase upon publication.
    \item[] Guidelines:
    \begin{itemize}
        \item The answer NA means that the paper does not release new assets.
        \item Researchers should communicate the details of the dataset/code/model as part of their submissions via structured templates. This includes details about training, license, limitations, etc. 
        \item The paper should discuss whether and how consent was obtained from people whose asset is used.
        \item At submission time, remember to anonymize your assets (if applicable). You can either create an anonymized URL or include an anonymized zip file.
    \end{itemize}

\item {\bf Crowdsourcing and research with human subjects}
    \item[] Question: For crowdsourcing experiments and research with human subjects, does the paper include the full text of instructions given to participants and screenshots, if applicable, as well as details about compensation (if any)? 
    \item[] Answer: \answerNA{} 
    \item[] Justification: 
    \item[] Guidelines:
    \begin{itemize}
        \item The answer NA means that the paper does not involve crowdsourcing nor research with human subjects.
        \item Including this information in the supplemental material is fine, but if the main contribution of the paper involves human subjects, then as much detail as possible should be included in the main paper. 
        \item According to the NeurIPS Code of Ethics, workers involved in data collection, curation, or other labor should be paid at least the minimum wage in the country of the data collector. 
    \end{itemize}

\item {\bf Institutional review board (IRB) approvals or equivalent for research with human subjects}
    \item[] Question: Does the paper describe potential risks incurred by study participants, whether such risks were disclosed to the subjects, and whether Institutional Review Board (IRB) approvals (or an equivalent approval/review based on the requirements of your country or institution) were obtained?
    \item[] Answer: \answerNA{} 
    \item[] Justification: 
    \item[] Guidelines:
    \begin{itemize}
        \item The answer NA means that the paper does not involve crowdsourcing nor research with human subjects.
        \item Depending on the country in which research is conducted, IRB approval (or equivalent) may be required for any human subjects research. If you obtained IRB approval, you should clearly state this in the paper. 
        \item We recognize that the procedures for this may vary significantly between institutions and locations, and we expect authors to adhere to the NeurIPS Code of Ethics and the guidelines for their institution. 
        \item For initial submissions, do not include any information that would break anonymity (if applicable), such as the institution conducting the review.
    \end{itemize}

\item {\bf Declaration of LLM usage}
    \item[] Question: Does the paper describe the usage of LLMs if it is an important, original, or non-standard component of the core methods in this research? Note that if the LLM is used only for writing, editing, or formatting purposes and does not impact the core methodology, scientific rigorousness, or originality of the research, declaration is not required.
    \item[] Answer: \answerNA{} 
    \item[] Justification: We only use LLM for writing and formatting purposes.
    \item[] Guidelines:
    \begin{itemize}
        \item The answer NA means that the core method development in this research does not involve LLMs as any important, original, or non-standard components.
        \item Please refer to our LLM policy (\url{https://neurips.cc/Conferences/2025/LLM}) for what should or should not be described.
    \end{itemize}

\end{enumerate}

\end{document}